\def\eqref#1{equation~\ref{#1}}
\def\Eqref#1{Equation~\ref{#1}}
\def\1{\bm{1}}
\DeclareMathAlphabet{\mathsfit}{\encodingdefault}{\sfdefault}{m}{sl}
\SetMathAlphabet{\mathsfit}{bold}{\encodingdefault}{\sfdefault}{bx}{n}
\DeclareMathOperator*{\argmax}{arg\,max}
\newcommand{\algorithmstyle}[1]{\renewcommand{\algocf@style}{#1}}
\newcommand{\removelatexerror}{\let\@latex@error\@gobble}
\definecolor{myblue}{HTML}{5576D1}
\definecolor{myred}{HTML}{FF0000}
\definecolor{local}{HTML}{FFa822}
\definecolor{elite}{HTML}{1ac0c6}
\definecolor{sample}{HTML}{134e6f}
\newcommand{\markerblue}{\raisebox{0pt}{\tikz{\node[scale=0.5,circle,fill=myblue](){};}}}
\newcommand{\markerblack}{\raisebox{0pt}{\tikz{\node[scale=0.5,circle,fill=black](){};}}}
\newcommand{\markeropenblack}{\raisebox{0pt}{\tikz{\node[draw=black,very thick, scale=0.5,circle,fill=white](){};}}}
\newcommand{\markerlocal}{\raisebox{0pt}{\tikz{\node[scale=0.5,isosceles triangle,rotate=90,fill=local](){};}}}
\newcommand{\markerelite}{\raisebox{0pt}{\tikz{\node[scale=0.5,circle,fill=elite](){};}}}
\newcommand{\markersample}{\raisebox{0pt}{\tikz{\node[scale=0.5,circle,fill=sample](){};}}}
\tikzset{cross/.style={cross out, draw=elite, minimum size=2*(#1-\pgflinewidth), inner sep=0pt, outer sep=0pt},
cross/.default={1pt}}
\newcommand{\markerstart}{\raisebox{0pt}{\tikz{\node[scale=3.0,cross](){};}}}
\pgfplotsset{compat=1.16}
\tikzset{
    cross/.pic = {
    \draw[rotate = 45] (-#1,0) -- (#1,0);
    \draw[rotate = 45] (0,-#1) -- (0, #1);
    }
}
\newcommand{\markerglobal}{\raisebox{0pt}{\tikz{ \draw (0,0) pic[rotate=45,red] {cross=2pt} ;}}}
\newtheorem{definition}{Definition}[section]
\newtheorem{assumption}{Assumption}
\title{A Simple Decentralized Cross-Entropy Method}
\author{%
  Zichen Zhang$^{1,}$\thanks{Work partially done during Zichen's internship at Huawei Noah's Ark Lab.} 
  \hspace{1mm}
   Jun Jin$^{2}$
  \hspace{1mm}
  Martin Jagersand$^{1}$
  \hspace{1mm}
   Jun Luo$^{2,\dag}$
  \hspace{1mm}
  Dale Schuurmans$^{1,\dag}$\\
\\
\hspace{-5mm}
$^{1}${University of Alberta}
  \hspace{1mm}
$^{2}$Huawei Noah's Ark Lab
  \hspace{1mm}
$^\dag$equal advising \\
\hspace{-2mm}\texttt{ \{zichen2,mj7,daes\}@ualberta.ca}
\texttt{ \{jun.jin1,jun.luo1\}@huawei.com}
}
\begin{document}

\maketitle

\begin{abstract}
 Cross-Entropy Method (CEM) is commonly used for planning in model-based reinforcement learning (MBRL) where a \textit{centralized} approach is typically utilized to update the sampling distribution based on only the top-$k$ operation's results on samples. In this paper, we show that such a \textit{centralized} approach makes CEM vulnerable to local optima, thus impairing its sample efficiency. To tackle this issue, we propose \textbf{Decent}ralized \textbf{CEM (DecentCEM)}, a simple but effective improvement over classical CEM, by using an ensemble of CEM instances running independently from one another, and each performing a local improvement of its own sampling distribution. We provide both theoretical and empirical analysis to demonstrate the effectiveness of this simple \textit{decentralized} approach. We empirically show that, compared to the classical centralized approach using either a single or even a mixture of Gaussian distributions, our DecentCEM finds the global optimum much more consistently thus improves the sample efficiency. Furthermore, we plug in our DecentCEM in the planning problem of MBRL, and evaluate our approach in several continuous control environments, with comparison to the state-of-art CEM based MBRL approaches (PETS and POPLIN). Results show sample efficiency improvement by simply replacing the classical CEM module with our DecentCEM module, while only sacrificing a reasonable amount of computational cost. Lastly, we conduct ablation studies for more in-depth analysis. Code is available at \url{https://github.com/vincentzhang/decentCEM}.
\end{abstract}

\section{Introduction}
Model-based reinforcement learning (MBRL) uses a model as a proxy of the environment for planning actions in multiple steps.
This paper studies planning in MBRL with a specific focus on the Cross-Entropy Method (CEM) \citep{CEM, mannor2003cross}, which is popular in MBRL due to its ease of use and strong empirical performance \citep{PETS, planet, poplin, zhang2021importance, leggedrobot}. CEM is a stochastic, derivative-free optimization method. It uses a sampling distribution to generate imaginary trajectories of environment-agent interactions with the model. These trajectories are then ranked based on their returns computed from the rewards given by the model.
The sampling distribution is updated to increase the likelihood of producing the top-$k$ trajectories with higher returns.
These steps are iterated and eventually yield an improved distribution over the action sequences to guide the action execution in the real environment.

Despite the strong empirical performance of CEM for planning, it is prone to two problems: (1) lower sample efficiency as the dimensionality of solution space increases, and (2) the Gaussian distribution that is commonly used for sampling may cause the optimization to get stuck in local optima of multi-modal solution spaces commonly seen in real-world problems. Previous works addressing these problems either add gradient-based updates of the samples to optimize the parameters of CEM, or adopt more expressive sampling distributions, such as using Gaussian Mixture Model~\citep{okada2020variational} or masked auto-regressive neural network~\citep{gacem2020}. Nevertheless, all CEM implementations to date are limited to a \textit{centralized} formulation where the ranking step involves \textit{all samples}. As analyzed below and in Section \ref{sec:motivation}, such a centralized design makes CEM vulnerable to local optima and impairs its sample efficiency.

We propose \textbf{Decent}ralized \textbf{CEM (DecentCEM)}, a simple but effective improvement over classical CEM, to address the above problems. 
Rather than ranking \textit{all samples}, as in the \textit{centralized} design,
our method distribute the sampling budget across an ensemble of CEM instances. These instances run independently from one another, and each performs a local improvement of its own sampling distribution based on the ranking of its generated samples. The best action is then aggregated by taking an $\argmax$ among the solution of the instances. It recovers the conventional CEM when the number of instance is one.
\begin{wrapfigure}{r}{0.46\textwidth}
  \centering
\subfigure[Centralized CEM]{\includegraphics[ trim=0 0 0 0cm, clip=True, width=0.49\linewidth]{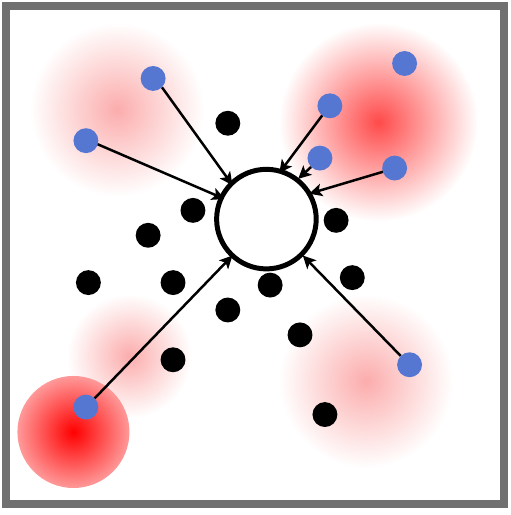}}    
  \label{fig:centralized}
\subfigure[Decentralized CEM]{\includegraphics[trim=0 0 0 0cm,clip,width=0.49\linewidth]{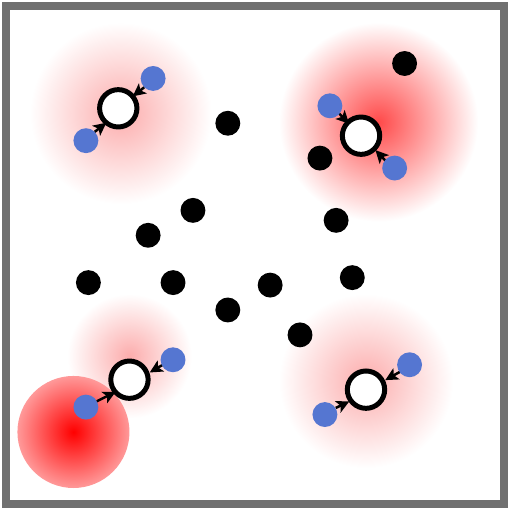}}
\label{fig:decentralized}
    \caption{\footnotesize
    Illustration of CEM approaches in optimization.
    Shades of red indicate relative value of the 2D optimization landscape: \textcolor{myred}{redder} is better. 
    and optimal solutions are near bottom left corner of the solution space. \textcolor{myblue}{Blue dots} \protect\markerblue \ are top-$k$ samples, and black dots \protect\markerblack \ are other samples.  Open dots \protect\markeropenblack \, represent the sampling distributions whose sizes indicate the number of generated samples. \normalsize
 }
    \label{fig:decentralized-comp}
\end{wrapfigure}
We hypothesize that by shifting to this \textit{decentralized} design, CEM can be less susceptible to premature convergence caused by the \textit{centralized} ranking step. As illustrated in Fig.~\ref{fig:decentralized-comp}, the \textit{centralized} sampling distribution exhibits a bias toward the sub-optimal solutions near top right, due to the \textit{global} top-$k$ ranking. This bias would occur regardless of the family of distributions used. In comparison, a \textit{decentralized} approach could maintain enough diversity thanks to its \textit{local} top-$k$ ranking in each sampling instance.

Through a one-dimensional multi-modal optimization problem in Section \ref{sec:motivation}, we show that DecentCEM empirically finds the global optimum more consistently than centralized CEM approaches that use either a single or a mixture of Gaussian distributions. Also we show that DecentCEM is theoretically sound that it converges almost surely to a local optimum.
We further apply it to sequential decision making problems and use neural networks to parameterize the sampling distributions. %
Empirical results in several continuous control environments suggest that DecentCEM offers an effective mechanism to improve the sample efficiency over the baseline CEM under the same sample budget for planning.

\section{Preliminaries}
We consider a Markov Decision Process (MDP) specified by ($S$,$A$,$R$,$P$,$\gamma$,$d_0$,$T$). 
$S \subset \mathbb{R}^{d_s}$ is the state space, $A \subset \mathbb{R}^{d_a}$ is the action space. $d_s, d_a$ are scalars denoting the dimensionality.
$R: S \times A \rightarrow \mathbb{R} $ is the reward function that maps a state and action pair to a real-valued reward.
$P(s'|s,a): S \times A \times S \rightarrow \mathbb{R}^+ $ is the transition probability from a state and action pair $s,a$ to the next state $s'$.
$\gamma \in [0, 1]$ is the discount factor.
$d_0$ denotes the distribution of the initial state $s_0$.
At time step $t$, the agent receives a state $s_t$
\footnote{We assume full observability, i.e. agent has access to the state.} and takes an action $a_t$ according to a policy $\pi(\cdot|s)$ that maps the state to a probability distribution over the action space. 
The environment transitions to the next state $s_{t+1} \sim P( \cdot |s_t, a_t)$ and gives a reward $r_t = R(s_t, a_t)$ to the agent. 
Following the settings from CEM-based MBRL papers (Sec.~\ref{sec:benchmark}), we assume that the reward function is deterministic (a mild assumption \citep{agarwal2019reinforcement}) and known. Note that they are not fundamental limitations of CEM and are adopted here so as to be consistent with the literature.
The return $G_t = \sum_{i=0} ^{T} \gamma^i r_{t+i}$, is the sum of discounted reward within an episode length of $T$.
The agent aims to find a policy $\pi$ that maximizes the expected return.
We denote the learned model in MBRL as $f_\omega(\cdot | s,a)$, which is parameterized by $\omega$ and approximates $P(\cdot|s,a)$.

\paragraph{\textbf{Planning with the Cross Entropy Method}}
\label{sec:CEM}

Planning in MBRL is about leveraging the model to find the best action in terms of its return. 
Model-Predictive-Control (MPC) performs decision-time planning at each time step up to a horizon to find the optimal action sequence:
\begin{equation}
    \pi_\text{MPC}(s_t) = \argmax_{a_{t:t+H-1}} \mathbb{E} [\Sigma_{i=0}^{H-1} \gamma^i r(s_{t+i}, a_{t+i}) + \gamma^H V(s_H) ] 
    \label{eq:MPC}
\end{equation}
where $H$ is the planning horizon, $a_{t:t+H-1}$ denotes the action sequence from time step $t$ to $t+H-1$, and $V(s_H)$ is the terminal value function at the end of the horizon. The first action in this sequence is executed and the rest are discarded. The agent then re-plans at the next time step.

\begin{figure*}[htb]
    \centering
    \includegraphics[width=.9\textwidth]{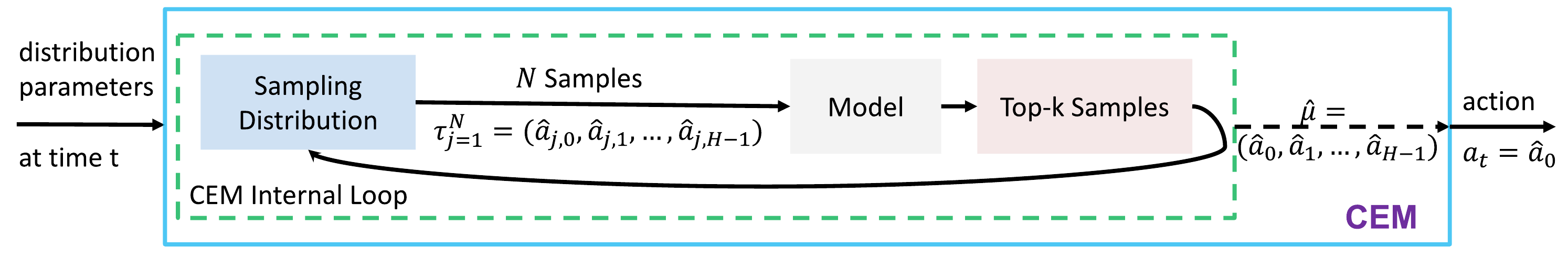}
    \caption{\footnotesize Cross Entropy Method (CEM) for Planning in MBRL \normalsize}
    \label{fig:cem-workflow}
\end{figure*}

The Cross-Entropy Method (CEM) is a gradient-free optimization method that can be used for solving Eq. (\ref{eq:MPC}).
The workflow is shown in Fig. \ref{fig:cem-workflow}.
CEM planning starts by generating $N$ samples $\{\tau_j\}_{j=1}^N = \{(\hat{a}_{j,0}, \hat{a}_{j,1}, \cdots, \hat{a}_{j,H-1}) \}_{j=1}^N$ from an initial sampling distribution $g_\phi(\tau)$ parameterized by $\phi$, where each sample $\tau_j$ is an action sequence from the current time step up to the planning horizon $H$. The domain of $g_\phi(\tau)$ has a dimension of ${d_\tau} = d_a H$.

Using a model $f$, CEM generates imaginary rollouts based on the action sequence $\{\tau_j\}$ (in the case of a stochastic model) and estimate the associated value $v(\tau_j)=\mathbb{E}[\Sigma_{i=0}^{H-1} \gamma^i r(s_{j,i}, a_{j,i})]$ where $s_{j,0}$ is the current state $s$ and $s_{j,i+1} \sim f(\cdot|s_{j,i}, a_{j,i})$.
The terminal value $\gamma^H V(s_{j,H})$ is omitted here following the convention in the CEM planning literature but the MPC performance can be further improved if paired with an accurate value predictor~\citep{bertsekas2005dynamic,POLO}.
The sampling distribution is then updated by fitting to the current top-$k$
samples in terms of their value estimates $v(\tau_j)$, using the Maximum Likelihood Estimation (MLE) which solves:
\begin{equation}
\phi' = \argmax_\phi \sum_{j=1}^N    \mathbbm{1} (v(\tau_j) \geq v_{\text{th}}) \log g_\phi(\tau_j)
\label{eq:MLE}
\end{equation}
where $v_{\text{th}}$ is the threshold equal to the value of the $k$-th best sample and $\mathbbm{1}(\cdot)$ is the indicator function.
In practice, the update to the distribution parameters are smoothed by
$\phi^{l+1}=\alpha \phi' + (1-\alpha) \phi^{l}$ where $\alpha \in [0, 1]$ is a smoothing parameter that balances between the solution to Eq. (\ref{eq:MLE}) and the parameter at the current internal iteration $l$.
CEM repeats this process of sampling and distribution update in an inner-loop, until it reaches the stopping condition. In practice, it is stopped when either a maximum number of iterations has been reached or the parameters have not changed for a few iterations.
The output of CEM is an action sequence, typically set as the 
expectation\footnote{Other options are discussed in Appendix \ref{ap:cem-output}} of the most recent sampling distribution for uni-modal distributions such as Gaussians $\hat{\mu} = \mathbb{E}(g_\phi) = (\hat{a}_{0}, \hat{a}_{1}, \cdots, \hat{a}_{H-1})$. %

\paragraph{\textbf{Choices of Sampling Distributions in CEM:}}
\label{sec:CEM-GMM}
A common choice is a multivariate Gaussian distribution under which Eq.(\ref{eq:MLE}) has an analytical solution. %
But the uni-modal nature of Gaussian makes it inadequate in solving multi-modal optimization that often occurs in MBRL.
To increase the capacity of the distribution,
a Gaussian Mixture Model (GMM) can be used \citep{okada2020variational}. 
We denote such an approach as \textit{CEM-GMM}. Going forward, we use \textit{CEM} to refer to the vanilla version that employs a Gaussian distribution. %
Computationally, the major difference between \textit{CEM} and \textit{CEM-GMM} is that the distribution update in \textit{CEM-GMM} is more computation-intensive since it solves for more parameters. %
Detailed steps can be found in \citet{okada2020variational}. %

\section{Decentralized CEM}
\label{sec:motivation}

In this section, we first introduce the formulation of the proposed  \textit{decentralized} approach  called the \textbf{Decent}ralized \textbf{CEM (DecentCEM)}. Then we illustrate the intuition behind the proposed approach using a one-dimensional synthetic multi-modal optimization example where we show the issues of the existing CEM methods and how they can be addressed by DecentCEM.

\paragraph{\textbf{Formulation of DecentCEM}}
\label{sec:decentcem}
DecentCEM is composed of an ensemble of $M$ CEM instances indexed by $i$, each having its own sampling distributions $g_{\phi_i}$.
They can be described by a set of distribution parameters $\Phi = \{{\phi_i}\}_{i=1}^M$.
Each instance $i$ manages its own sampling and distribution update by the steps described in Section \ref{sec:CEM}, independently from other instances.

Note that the $N$ samples and $k$ elites are evenly split among the $M$ instances. The top-$\frac{k}{M}$ sample sets are decentralized and managed by each instance independently whereas the centralized approach only keeps one set of top-$k$ samples regardless of the distribution family used.

After the stopping condition is reached for all instances, the final sampling distribution is taken as the best distribution in the set $\Phi$ according to (the $\argmax$ uses a deterministic tie-breaking):
\begin{equation}
    \phi_\text{DecentCEM} =
    \argmax_{\phi_i \in \Phi} \mathbb{E}_{\phi_i} [v(x)] \approx \argmax_{\phi_i \in \Phi}
    \sum_{j=1}^{\frac{N}{M}} v(\tau_{i,j})
    \label{eq:decentcem-argmax}
\end{equation}
where $\mathbb{E}_{\phi_i}[ v(x)]$ denotes the expectation with respect to the distribution $g_{\phi_i}$, approximated by the sample mean of $\frac{N}{M}$ samples.
When $M=1$, it recovers the conventional CEM.

\paragraph{\textbf{Motivational Example}}
\label{sec:optimization-1d}
   
\vspace{-5mm}
\begin{figure}[h]
  \centering
    \begin{tabular}{@{}cc}
		{\includegraphics[trim=0.2cm 0 0.25cm 0, clip,height=4cm]{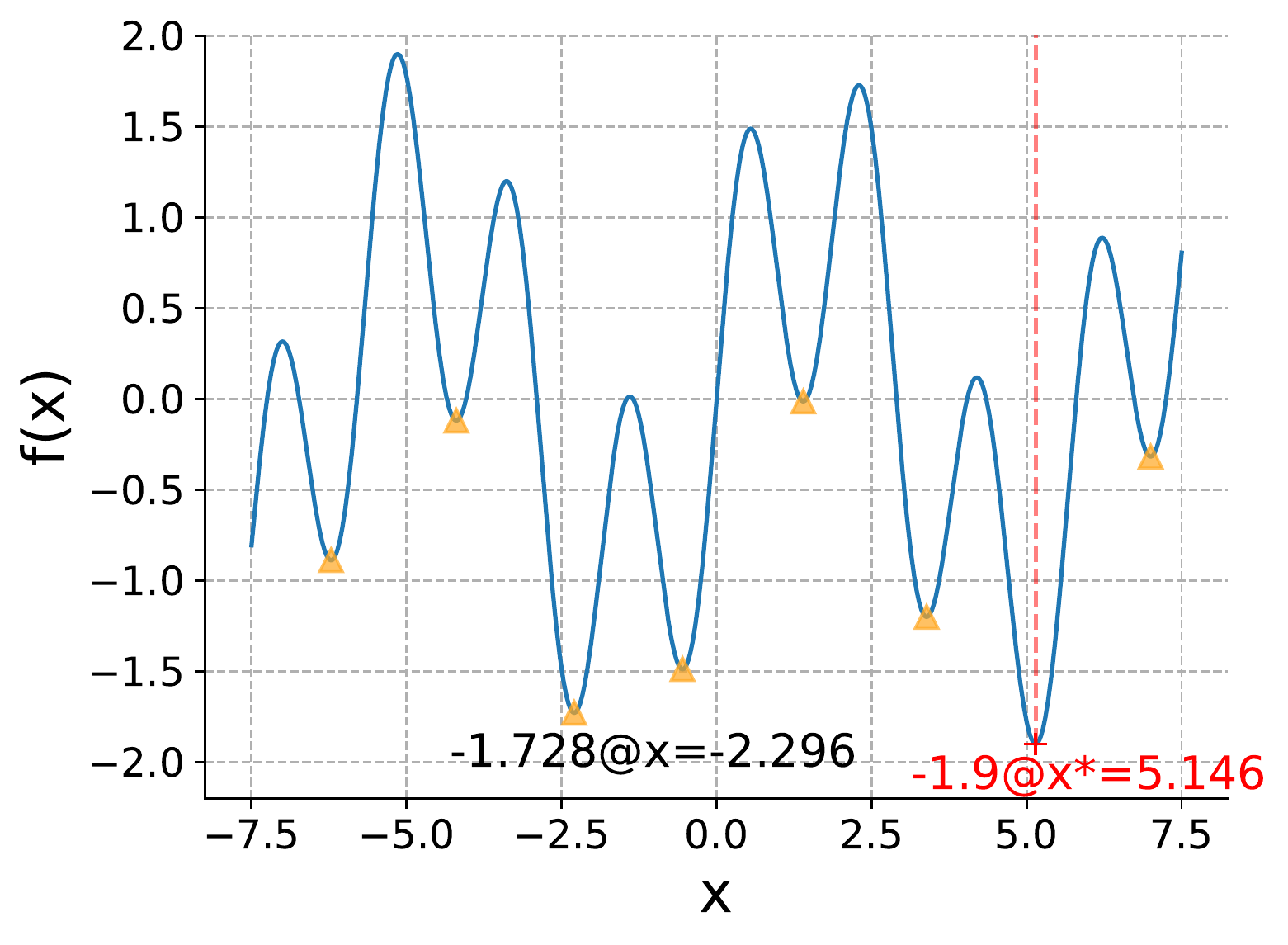}}
		&{\includegraphics[trim=0.2cm 0 0.3cm 0, clip, height=4cm]{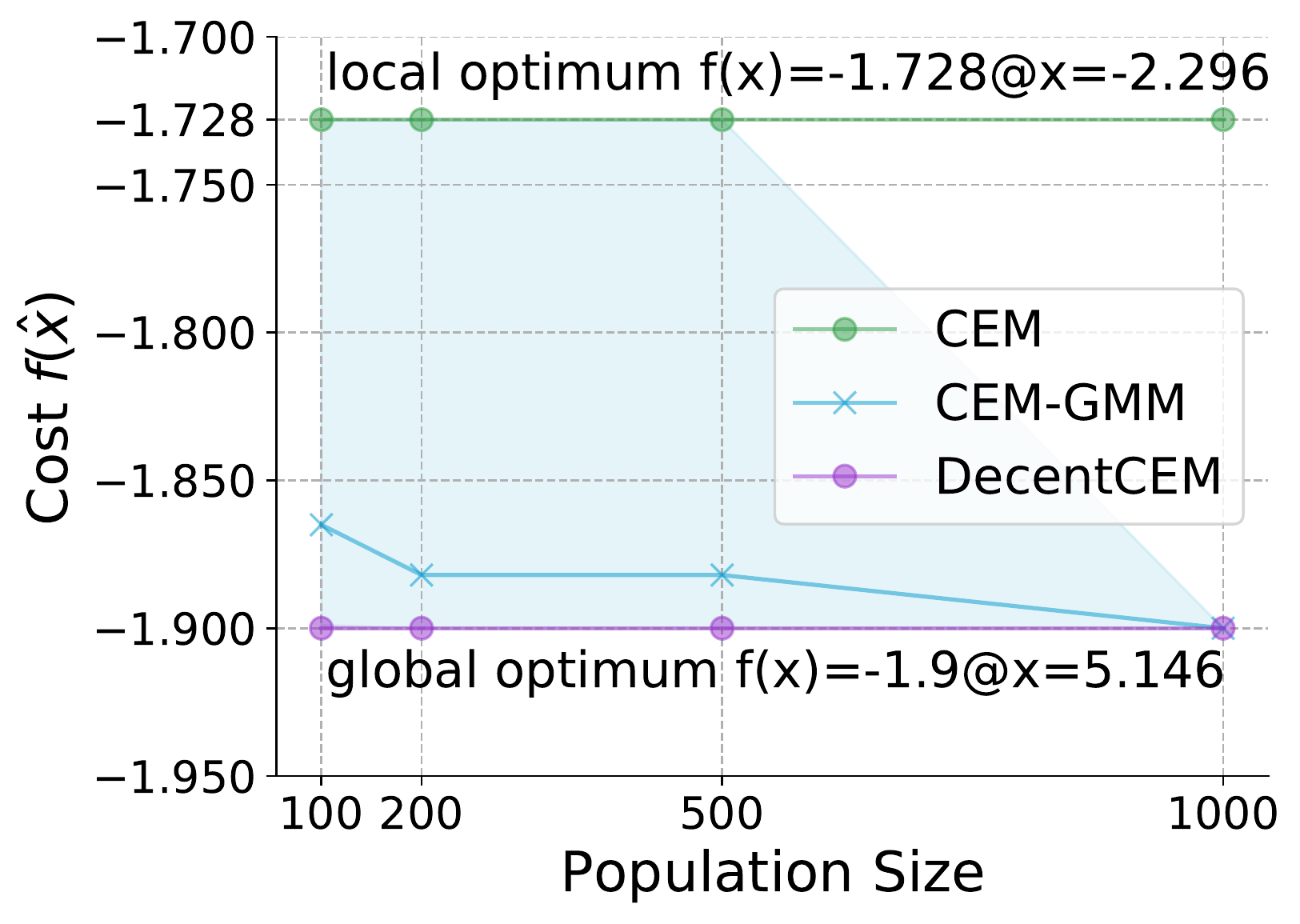}}
	\end{tabular}
 \vspace{-2mm}
    \caption{\footnotesize Left: The objective function in a 1D optimization task. 
    Right: Comparison of the proposed \textit{DecentCEM} method to \textit{CEM} and \textit{CEM-GMM}, 
     wherein the line and the shaded region denote the mean and the min/max cost from 10 independent runs. $\hat{x}$: solution of each method.\normalsize}
    \label{fig:motivation-1d}
   
\end{figure}

Consider a one dimensional multi-modal optimization problem shown in Fig.~\ref{fig:motivation-1d} (Left): $\arg\min_x \sin(x) + \sin(10 x/3), -7.5 \leq x \leq 7.5$.
There are eight local optima, including one global optimum $f(x^*) = -1.9$ where $x^{*}=5.146$. This objective function mimics the RL value landscape that has many local optima, as shown by \cite{poplin}. 
This optimization problem is ``easy'' in the sense that a grid search over the domain can get us a solution close to the global optimum.
However, only our proposed \textit{DecentCEM} method successfully converges to the global optimum consistently under varying population size (i.e., number of samples) and random runs, as shown in Fig.~\ref{fig:motivation-1d} (Right). For a fair comparison, hyperparameter search has been conducted on all methods for each population size (Appendix \ref{ap:motivation}).

\begin{figure}[bth]
  \centering
     \includegraphics[width=0.32\linewidth]{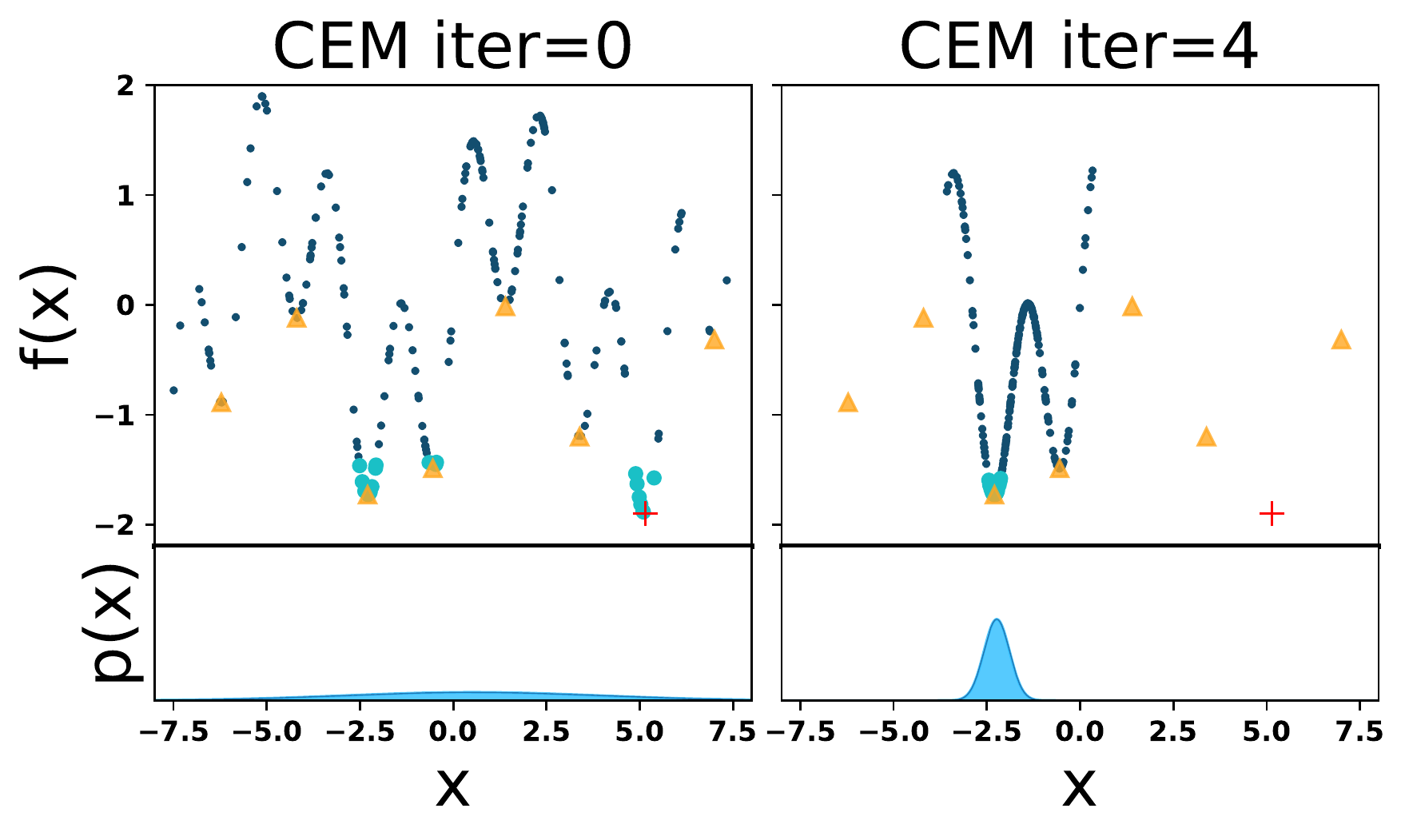}
    \includegraphics[width=0.32\linewidth]{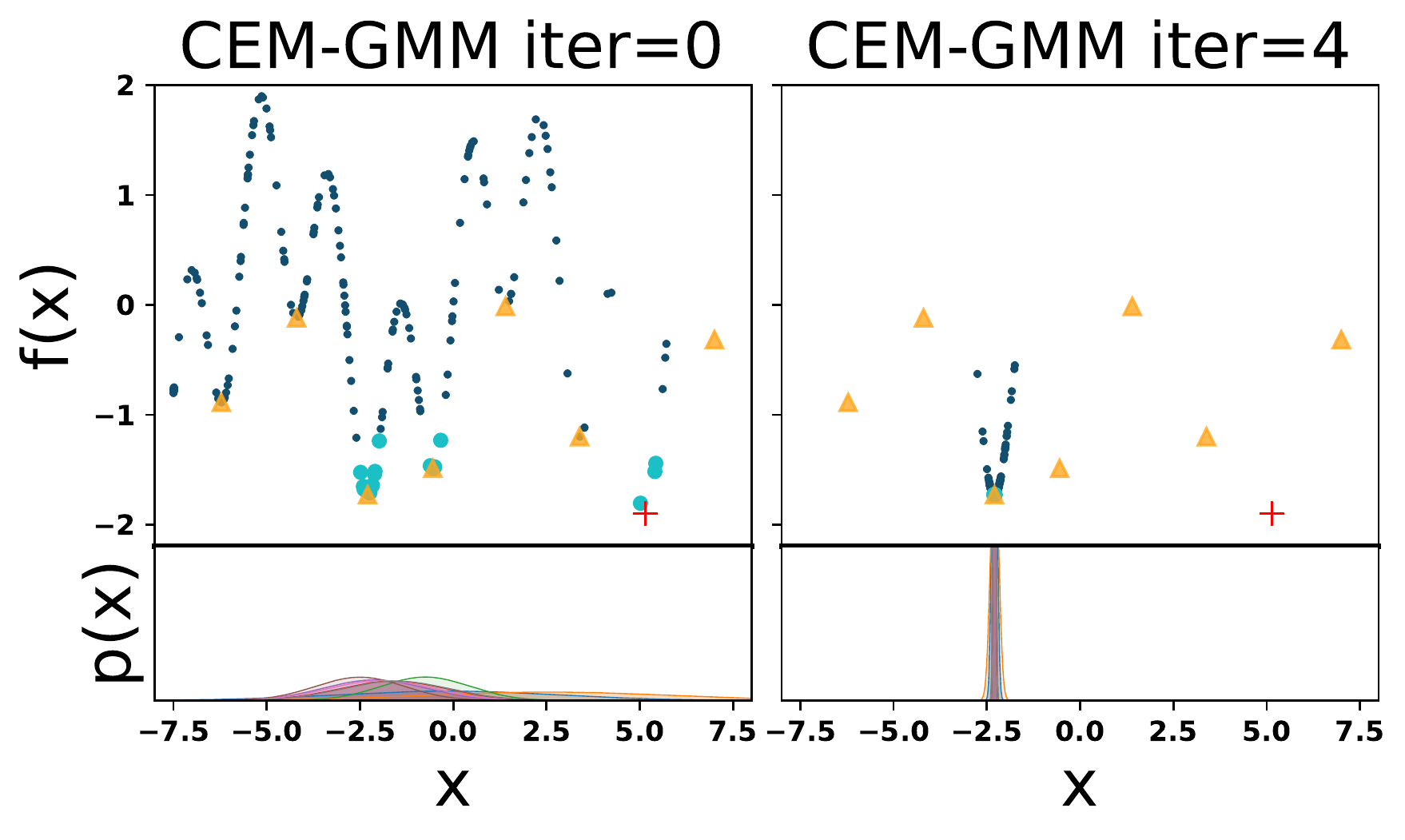}
    \includegraphics[width=0.32\linewidth]{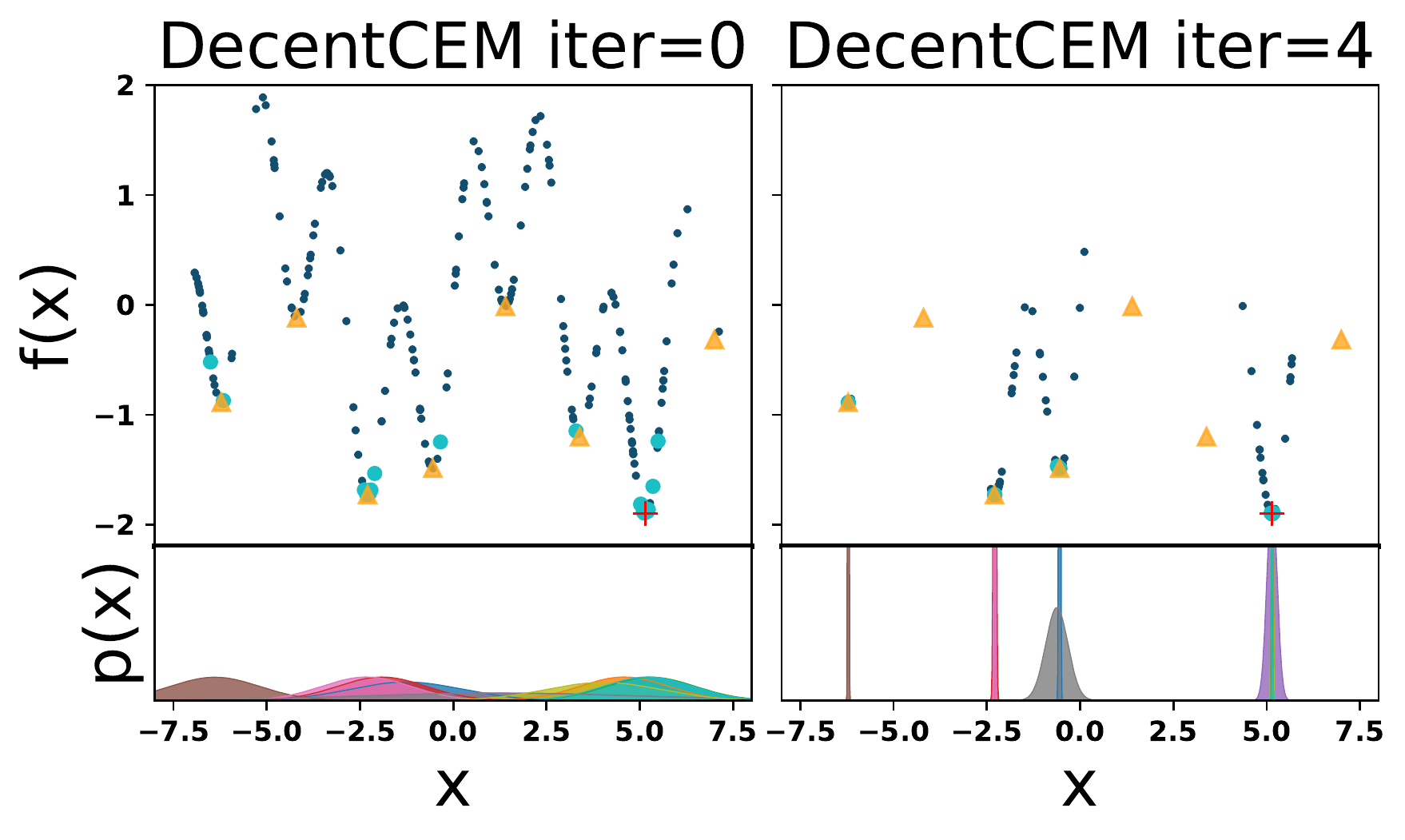}
    \vspace{-5pt}
  \caption{\footnotesize How the sampling distributions evolve in the 1D optimization task, \textit{after} the specified iteration. Symbols include samples \protect\markersample, elites \protect\markerelite, local optima \protect\markerlocal, global optimum \protect\markerglobal.
  2nd row in each figure shows the weighted p.d.f of individual distribution. Population size: 200.\normalsize}
  \label{fig:vis1d}
\vspace{-5pt}
\end{figure}

Both \textit{CEM-GMM} and the proposed \textit{DecentCEM} are equipped with multiple sampling distributions. 
The fact that \textit{CEM-GMM} is outperformed by \textit{DecentCEM} may appear surprising.
To gain some insights, 
we illustrate in Fig.~\ref{fig:vis1d} how the sampling distribution evolves during the iterative update (more details in Fig.~\ref{fig:motivation_1D_full_compare} in Appendix).
\textit{CEM} updated the unimodal distribution toward a local optimum despite seeing the global optimum.
\textit{CEM-GMM} appears to have a similar issue. 
During MLE on the top-$k$ samples, it moved most distribution components towards the same local optimum which quickly led to mode collapse. On the contrary, \textit{DecentCEM} successfully escaped the local optima thanks to its independent distribution update over \textit{decentralized} top-$k$ samples and was able to maintain a decent diversity among the distributions.

GMM suits density estimation problems like distribution-based clustering where the samples are drawn from a \textit{fixed} \textit{true} distribution that can be represented by multi-modal Gaussians. 
However, in CEM for optimization, exploration is coupled with density estimation: the sampling distribution in CEM is \textit{not fixed} but rather gets updated iteratively toward the top-$k$ samples.
And the ``true'' distribution in optimization puts uniform non-zero densities to the global optima and zero densities everywhere else. When there is a unique global optimum, it degenerates into a Dirac measure that assigns the entire density to the optimum. 
Density estimation of such a distribution only needs one Gaussian but the exploration is challenging.
In other words, the conditions for GMM to work well are not necessarily met when used as the sampling distribution in \textit{CEM}.
\textit{CEM-GMM} is subject to mode collapse during the iterative top-$k$ greedification, causing premature convergence, as observed in Fig~\ref{fig:vis1d}.
In comparison, our proposed decentralized approach takes care of the exploration aspect by running multiple CEM instances independently, each performing its own local improvement. This is shown to be effective from this optimization example and the benchmark results in Section \ref{sec:exp}.
\textit{CEM-GMM} only consistently converge to the global optimum when we increase the population size to the maximum 1,000 which causes expensive computations.
Our proposed \textit{DecentCEM} runs more than 100 times faster than \textit{CEM-GMM} at this population size, shown in Table \ref{tb:motivation-runningtime} in Appendix.

\paragraph{Convergence of DecentCEM}
\label{sec:DecentCEM-theory}
The convergence of DecentCEM requires the following assumption: 
\begin{assumption}
Let $M$ be the number of instances in DecentCEM and each instance has a sample size of $\frac{N_k}{M}$ where $N_k$ is the total number of samples that satisfies $N_k=\Theta(k^\beta)$ where $\beta > \max\{0, 1-2\lambda\}$. $M$ and $\lambda$ are some positive constant and $0 < M < N_k \ \forall\ k$.
\label{ap:a1}
\end{assumption}

Here the assumption of the total sample size follows the same one as in the CEM literature.
Other such standard assumptions are summarized in Appendix \ref{ap:convergence-analysis}.

\begin{restatable}[Convergence of DecentCEM]{theorem}{convergence}
If a CEM instance described in Algorithm \ref{alg:CEM} converges, and we decentralize it by evenly dividing its sample size $N_k$ into $M$ CEM instances which satisfies the Assumption \ref{ap:a1}, then the resulting 
DecentCEM converges almost surely to the best solution of the individual instances.
\label{thm:main-convergence}
\end{restatable}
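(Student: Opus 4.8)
The plan is to reduce the claim to two ingredients: (i) the assumed convergence of a single CEM instance, invoked $M$ times, and (ii) consistency of the final $\argmax$ over instances in Eq.~(\ref{eq:decentcem-argmax}).

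\textbf{Reduction to the single-instance result.} Each of the $M$ instances runs Algorithm~\ref{alg:CEM} independently, drawing $N_k/M$ samples per inner iteration. Since $M$ is a fixed positive constant and $N_k=\Theta(k^\beta)$, Assumption~\ref{ap:a1} yields $N_k/M=\Theta(k^\beta)$ with the same exponent $\beta>\max\{0,1-2\lambda\}$ --- exactly the sample-growth rate the single-instance convergence result requires. So I would apply that result to each instance separately: for every $i\in\{1,\dots,M\}$ the parameter sequence of instance $i$ converges almost surely to some $\phi_i^\infty$ whose distribution concentrates at a local optimum $x_i^\infty$ of $v$.

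\textbf{Simultaneous convergence and selection.} For each $i$ the event $E_i=\{\text{instance }i\text{ converges}\}$ has probability one; since there are only finitely many instances ($M<N_k<\infty$), $E=\bigcap_{i=1}^M E_i$ also has probability one. On $E$ the limiting optima $\{x_i^\infty\}_{i=1}^M$ are all well defined, and I write $i^\star$ for the deterministically tie-broken index maximizing $\E_{\phi_i^\infty}[v(x)]$, so $x_{i^\star}^\infty$ is ``the best solution of the individual instances'' by definition. It then remains to show the selection rule in Eq.~(\ref{eq:decentcem-argmax}) --- which ranks instances by the Monte-Carlo surrogate $\frac{M}{N_k}\sum_{j=1}^{N_k/M} v(\tau_{i,j})$ rather than by $\E_{\phi_i}[v(x)]$ --- eventually picks $i^\star$. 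On $E$, for each $i$ the distribution $g_{\phi_i}$ converges to $g_{\phi_i^\infty}$ while $N_k/M\to\infty$, so by a law-of-large-numbers argument together with boundedness of $v$ (the per-step rewards are bounded on the compact action domain), the surrogate converges almost surely to $\E_{\phi_i^\infty}[v(x)]$ for every $i$. A maximum over $M$ such converging quantities eventually coincides with $i^\star$ when the limiting values have a unique maximizer, and the deterministic tie-breaking handles the non-generic equal-value case. Hence $\phi_{\text{DecentCEM}}\to\phi_{i^\star}^\infty$ almost surely.

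\textbf{Expected main obstacle.} The reduction step is essentially bookkeeping on top of the existing single-instance CEM analysis, and the finite-union argument is routine. The delicate part is the selection step: I need the sample-mean estimates that drive the final $\argmax$ to converge to the correct limits on the \emph{same} probability-one event that carries the parameter convergence of all $M$ instances, and I must exclude a persistently oscillating $\argmax$ when two instances tie in the limit --- which is exactly why the statement bakes in a deterministic tie-breaking. A minor extra case is a limiting distribution that degenerates to a point mass, where the surrogate trivially equals $v(x_i^\infty)$ and the law-of-large-numbers step is vacuous but still valid.
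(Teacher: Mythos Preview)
Your proposal is correct and follows essentially the same route as the paper's proof: absorb the constant $M$ into the $\Theta(k^\beta)$ sample-size condition to invoke the single-instance convergence of \citet{hu2011stochastic} for each instance, then use the strong law of large numbers to justify that the sample-mean $\argmax$ in Eq.~(\ref{eq:decentcem-argmax}) selects the best limiting solution. If anything you are more careful than the paper, which does not spell out the finite-intersection argument or the tie-breaking case that you flag.
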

\begin{proof}(sketch)
We show that the previous convergence result of CEM  \citep{hu2011stochastic} extends to DecentCEM under the same sample budget.
The key observation is that the convergence property of each CEM instance still holds since the number of samples in each instance is only changed by a constant factor, i.e., number of instances. Each CEM instance converges to a local optimum. The convergence of DecentCEM to the best solution comes from the $\argmax$ operator and applying the strong law of large numbers.
The detailed proof is left to Appendix \ref{ap:convergence-analysis}.
\end{proof}

\section{DecentCEM for Planning}
\label{sec:decentcem-planning}
In this section, we develop two instantiations of DecentCEM for planning in MBRL where the sampling distributions are parameterized by policy networks. 
For the dynamics model learning, we adopt the ensemble of probabilistic neural networks proposed in \citep{PETS}. Each network predicts the mean and diagonal covariance of a Gaussian distribution and is trained by minimizing the negative log likelihood.  
Our work focuses on the planning aspect and refer to \citet{PETS} for further details on the model.

\paragraph{\textbf{CEM Planning with a Policy Network}}
\label{sec:cem-policynet}
In MBRL, CEM is applied to every state separately to solve the optimization problem stated in Eq. (\ref{eq:MPC}).
The sampling distribution is typically initialized to a fixed distribution at the beginning of every episode \citep{okada2020variational, pinneri2020sampleefficient}, or more frequently at every time step \citep{planet}. 
Such initialization schemes are sample inefficient since there is no mechanism that allows the information of the high-value region in the value space of one state to generalize to nearby states.
Also, the information is discarded after the initialization.
It is hence difficult to scale the approach to higher dimensional solution spaces, present in many continuous control environments.
\cite{poplin} proposed to use a policy network in CEM planning that helped to mitigate the issues above.

They developed two methods: \textit{POPLIN-A} that plans in the action space, and \textit{POPLIN-P} that plans in the parameter space of the policy network. 
In \textit{POPLIN-A}, the policy network is used to learn to output the mean of a Gaussian sampling distribution of actions. %
In \textit{POPLIN-P}, the policy network parameters serve as the initialization of the mean of the sampling distribution of parameters.
The improved policy network can then be used to generate an action.
They show that when compared to the vanilla method of using a fixed sampling distribution in the action space, both modes of CEM planning with such a learned distribution perform better. 
The same principle of combining a policy network with CEM can be applied to the DecentCEM approach as well, which we describe next.

\paragraph{\textbf{DecentCEM Planning with an Ensemble of Policy Networks}}
\label{sec:DecentCEM-Ensemble}
\begin{figure}[th]
    \vspace{-2mm}
    \centering
    \includegraphics[width=0.7\linewidth]{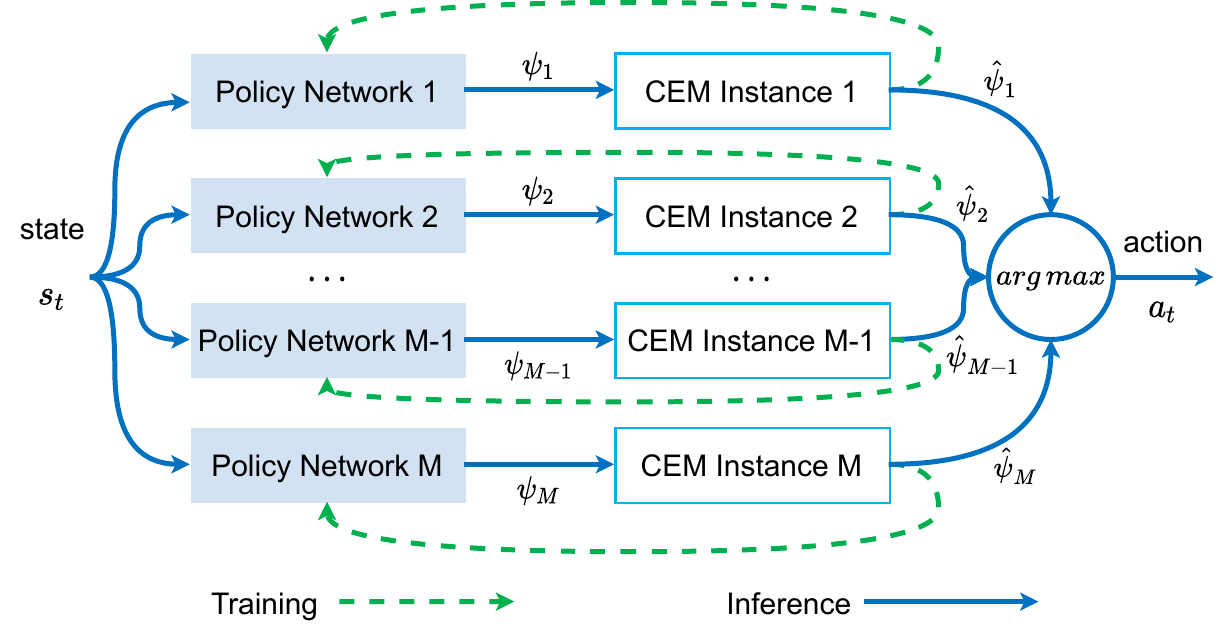}
    \caption{\footnotesize The architecture of DecentCEM planning with $M$ CEM instances. $\psi_i=\phi_i$ for planning in action space and $\psi_i=\theta_i$ for planning in policy network parameter space with the instance index $i\in\{1,\cdots,M\}$. \normalsize}
    \label{fig:decentcem-a-architecture}
    \vspace{-2mm}
\end{figure}

For better sample efficiency in MBRL setting, we extend DecentCEM to use an ensemble of policy networks to learn the sampling distributions in the CEM instances. 
Similar to the \textit{POPLIN} variations, we develop two instantiations of \textit{DecentCEM}, namely \textit{DecentCEM-A} and \textit{DecentCEM-P}. The architecture of the proposed algorithm is illustrated in Fig.~\ref{fig:decentcem-a-architecture}. 
\textit{DecentCEM-A} plans in the action space.
It consists of an ensemble of policy networks followed by CEM instances.
Each policy network takes the current state $s_t$ as input, outputs the parameters $\phi_i$ of the sampling distribution for CEM instance $i$ in the action space. There is no fundamental difference from the DecentCEM formulation in Section~\ref{sec:decentcem} except that the initialization of sampling distributions is learned by the policy networks rather than a fixed distribution.

The second instantiation \textit{DecentCEM-P} plans in the parameter space of the policy network. The output of each policy network is the network parameter $\theta_i$. The initial sampling distribution of CEM instance $i$ is a Gaussian distribution over the policy parameter space with the mean at $\theta_i$. In the $\argmax$ operation in Eq.~(\ref{eq:decentcem-argmax}), the sample $\tau_{i,j}$ denotes the $j$-th parameter sample from the distribution for CEM instance $i$. Its value is approximated by the model-predicted value of the action sequence generated from the policy network with the parameters $\tau_{i,j}$.

The ensemble of policy networks in both instantiations \textit{DecentCEM-A} and \textit{DecentCEM-P} are initialized with random weights, which is empirically found to be adequate to ensure that the output of the networks do not collapse into the same distribution (Sec.~\ref{sec:ab} and Appendix \ref{ap:ab}).

\paragraph{\textbf{Training the Policy Networks in DecentCEM}}
\label{sec:train-policy}
When planning in action space, the policy networks are trained by behavior cloning, similar to the scheme in \textit{POPLIN}~\citep{poplin}.
Denote the first action in the planned action sequence at time step $t$ by the $i$-th CEM instance as $\hat{a}_{t,i}$,
the $i$-th policy network is trained to mimic $\hat{a}_{t,i}$ and the training objective is
  $  \min_{\theta_i}
   \mathbb{E}_{s_t,\hat{a}_{t,i} \sim D_i}
   \|a_{\theta_i}(s_t) - \hat{a}_{t,i} \|^2    $
where $D_i$ denotes the replay buffer with the state and action pairs $(s_t,\hat{a}_{t,i})$.
$a_{\theta_i}(s_t)$ is the action prediction at state $s_t$ from the policy network parameterized by $\theta_i$.

While the above training scheme can be applied to both planning in action space and parameter space, we follow the \textit{setting parameter average} (AVG)~\citep{poplin} training scheme when planning in parameter space.
The parameter is updated as 
 $\theta_i = \theta_i +
 \frac{1}{|D_i|} \sum_{{\delta}_{i} \in D_i} {\delta}_{i}$
where $D_i = \{{\delta}_{i}\}$ is a dataset of policy network parameter updates planned from the $i$-th CEM instance previously. 
It is more effective than behavior cloning based on the experimental result reported by \cite{poplin} and our own preliminary experiments.

Note that each policy network in the ensemble is trained independently from the data observed by its corresponding CEM instance rather than from the aggregated result after taking the $\argmax$. This allows for enough diversity among the instances. More importantly, it increases the size of the training dataset for the policy networks compared to the approach taken in \textit{POPLIN}. For example, with an ensemble of $M$ instances, there would be $M$ training data samples available from one real environment interaction, compared to the one data sample in \textit{POPLIN-A/P}.
As a result, \textit{DecentCEM} is able to train larger policy networks than is otherwise possible, shown in Sec. \ref{sec:ab} and Appendix \ref{ap:ab}.

\section{Related Work} 

We limit the scope of related works to CEM planning methods, which is one of the broad class of planning algorithms in MBRL. For a review of different families of planning algorithms, the readers are referred to \citet{mbbl}.
Vanilla CEM planning in action space with a single Gaussian distribution has been adopted as the planning method for both simulated and real-world robot control \citep{PETS,finn2017deep,ebert2018visual,planet,leggedrobot,zhang2021importance}.
Previous attempts to improving the performance of CEM-based planning can be grouped into two types of approaches.
\textbf{The first type} includes CEM in a hybrid of CEM+X where ``X'' is some other component or algorithm.
POPLIN \citep{poplin} is a prominent example where ``X'' is a policy network that learns a state conditioned distribution that initializes the subsequent CEM process.
Another common choice of ``X'' is gradient-based adjustment of the samples drawn in CEM.
GradCEM \citep{gradcem2020} adjusts the samples in each iteration of CEM by taking gradient ascent of the return estimate w.r.t the actions.
CEM-RL \citep{cemrl2019} also combines CEM with gradient based updates from RL algorithms but the samples are in the parameter space of the actor network. 
To improve computational efficiency, \cite{asyncCEM2020} proposes an asynchronous version of CEM-RL where each CEM instance updates the sampling distribution asynchronously.

\textbf{The second type} of approach aims at improving CEM itself. 
\cite{DCEM2020} proposes a fully-differentiable version of CEM called \textit{DCEM}.
The key is to make the top-$k$ selection in CEM differentiable such that the entire CEM module can be trained in an end-to-end fashion.
Despite cutting down the number of samples needed in CEM, this method does not beat the vanilla CEM in benchmark test.
\textit{GACEM} \citep{gacem2020} increase the capacity of the sampling distribution by replacing the Gaussian distribution with an auto-regressive neural network.
This change allows CEM to perform search in multi-modal solution space but it is only verified in toy examples and its computation seems too high to be scaled to MBRL tasks.
Another method that increases the capacity of the sampling distribution is \textit{PaETS} \citep{okada2020variational} that uses a GMM with CEM. It is the approach that we followed for our \textit{CEM-GMM} implementation.
The running time results in the optimization task in Sec.\ref{sec:optimization-1d} shows that it is computationally heavier than the \textit{CEM} and \textit{DecentCEM} methods, limiting its use in complex environments.
\cite{macua2015distributed} proposed a ``distributed'' CEM that is similar in spirit to our method in that they used multiple sampling distributions and applied the top-$k$ selection locally to samples from each instance. 
However, their instances are cooperative as opposed to being independent as in our work. 
They applied “collaborative smoothed projection steps” to update each sampling distribution as an average of its neighboring instances including itself. The updating procedure is more complicated than our proposed method and proper network topology of the instances is needed: a naive approach of updating according to all instances will lead to mode collapse since the resulting sampling distributions will be identical. The method was tested in toy optimization examples only.
Overall, this second type of approach did not outperform vanilla CEM, a phenomenon that motivated our move to a decentralized formulation.

\vspace{-1mm}
\section{Experiments}
\label{sec:exp}
We evaluate the proposed \textit{DecentCEM} methods in simulated environments with continuous action space.
The experimental evaluation is mainly set up to understand if \textit{DecentCEM} improves the performance and sample efficiency over conventional CEM approaches.

\subsection{Benchmark Setup}
\label{sec:benchmark}

\textbf{{\em Environments}}
We run the benchmark in a set of OpenAI Gym~\citep{brockman2016openai} and MuJoCo~\citep{todorov2012mujoco} environments commonly used in the MBRL literature:
Pendulum, InvertedPendulum, Cartpole, Acrobot, FixedSwimmer\footnote{a modified version of the original Gym Swimmer environment where the velocity sensor on the neck is moved to the head. This fix was proposed by \cite{poplin}}, Reacher, Hopper, Walker2D, HalfCheetah, PETS-Reacher3D, PETS-HalfCheetah, PETS-Pusher, Ant.
The three environments prefixed by ``PETS'' are proposed by \citet{PETS}.
Note that MBRL algorithms often make different assumptions about the dynamics model or the reward function. Their benchmark environments are often modified from the original OpenAI gym environments such that the respective algorithm is runnable.
Whenever possible, we inherit the same environment setup from that of the respective baseline methods. This is so that the comparison against the baselines is fair. 
More details on the environments and their reward functions are in Appendix~\ref{ap:env}.

\textbf{\em Algorithms}
The baseline algorithms are \textit{PETS}~\citep{PETS} %
and \textit{POPLIN}~\citep{poplin}.
\textit{PETS} uses CEM with a single Gaussian distribution for planning.
The \textit{POPLIN} algorithm combines a single policy network with CEM. As described in Sec.~\ref{sec:cem-policynet}, 
\textit{POPLIN} comes with two modes: \textit{POPLIN-A} and \textit{POPLIN-P} with the suffix ``A'' denotes planning in action space and ``P'' for the network parameter space.
We reuse default hyperparameters for these algorithms from the original papers if not mentioned specifically.
For our proposed methods, we include two variations \textit{DecentCEM-A} and \textit{DecentCEM-P} 
as described in Sec.~\ref{sec:DecentCEM-Ensemble} where the suffix carries the same meaning as in \textit{POPLIN-A/P}. 
The ensemble size of \textit{DecentCEM-A/P} as well as detailed hyperparameters for all algorithms are listed in the Appendix \ref{ap:hyperparam}.
We also included Decentralized \textit{PETS}, denoted by \textit{DecentPETS}.
All MBRL algorithms studied in this benchmark use the same ensemble networks proposed by \cite{PETS} for model learning. And a model-free RL baseline SAC~\citep{SAC2018} was included.

\textbf{\em Evaluation Protocol}
The learning curve shows the mean and standard error of the test performance out of 5 independent training runs.
The test performance is an average return of 5 episodes of the \emph{evaluation} environment, evaluated at every training episode.
At the beginning of each training run, the evaluation environment is initialized with a fixed random seed such that the \emph{evaluation} environments are consistent across different methods and multiple runs to make it a fair comparison.
All experiments were conducted using Tesla V100 Volta GPUs.

\subsection{Results}
\begin{figure}[bt]
    \centering
    	\begin{tabular}{@{}ccc}
		{\includegraphics[height=3.55cm]{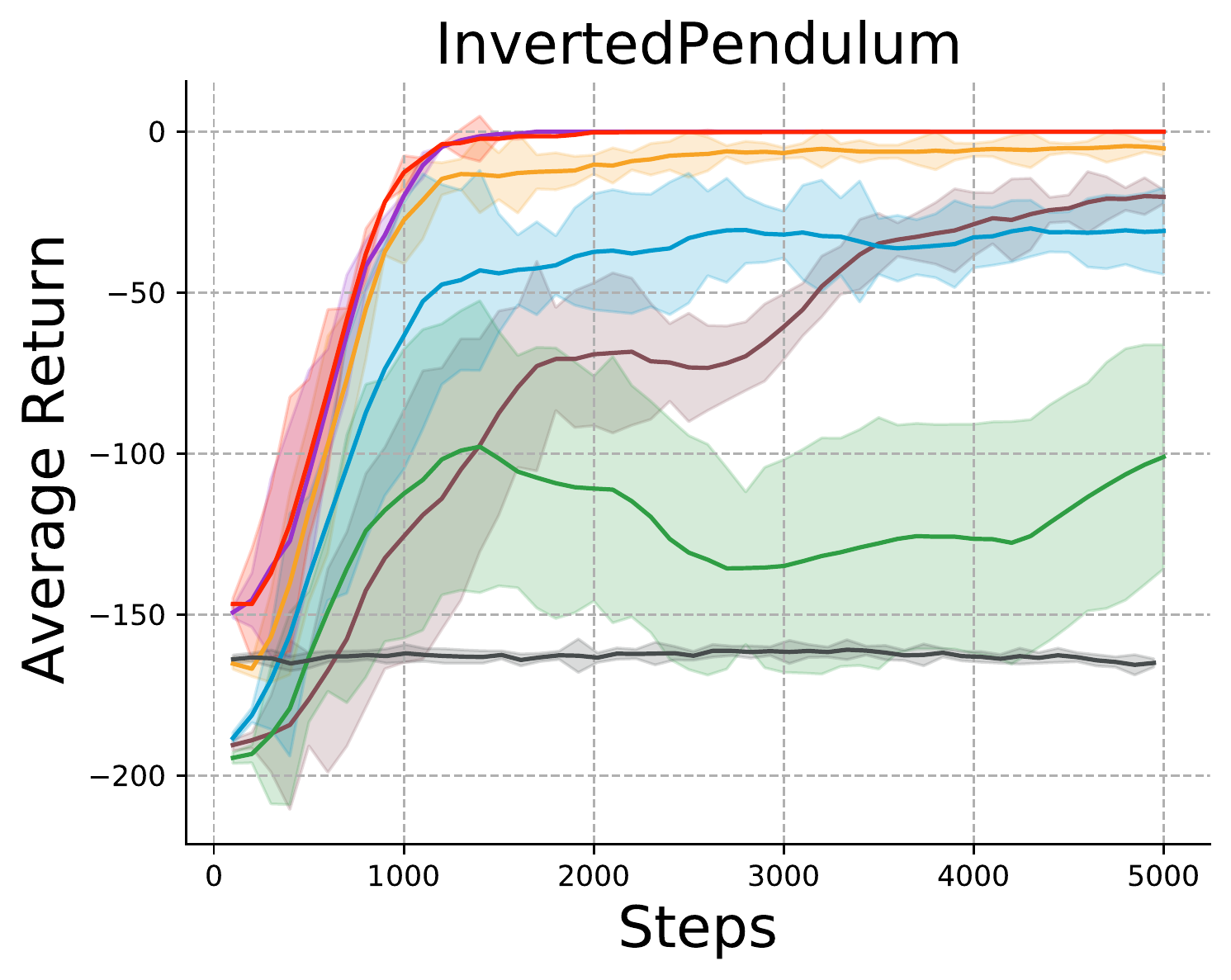}}
		&{\includegraphics[height=3.55cm]{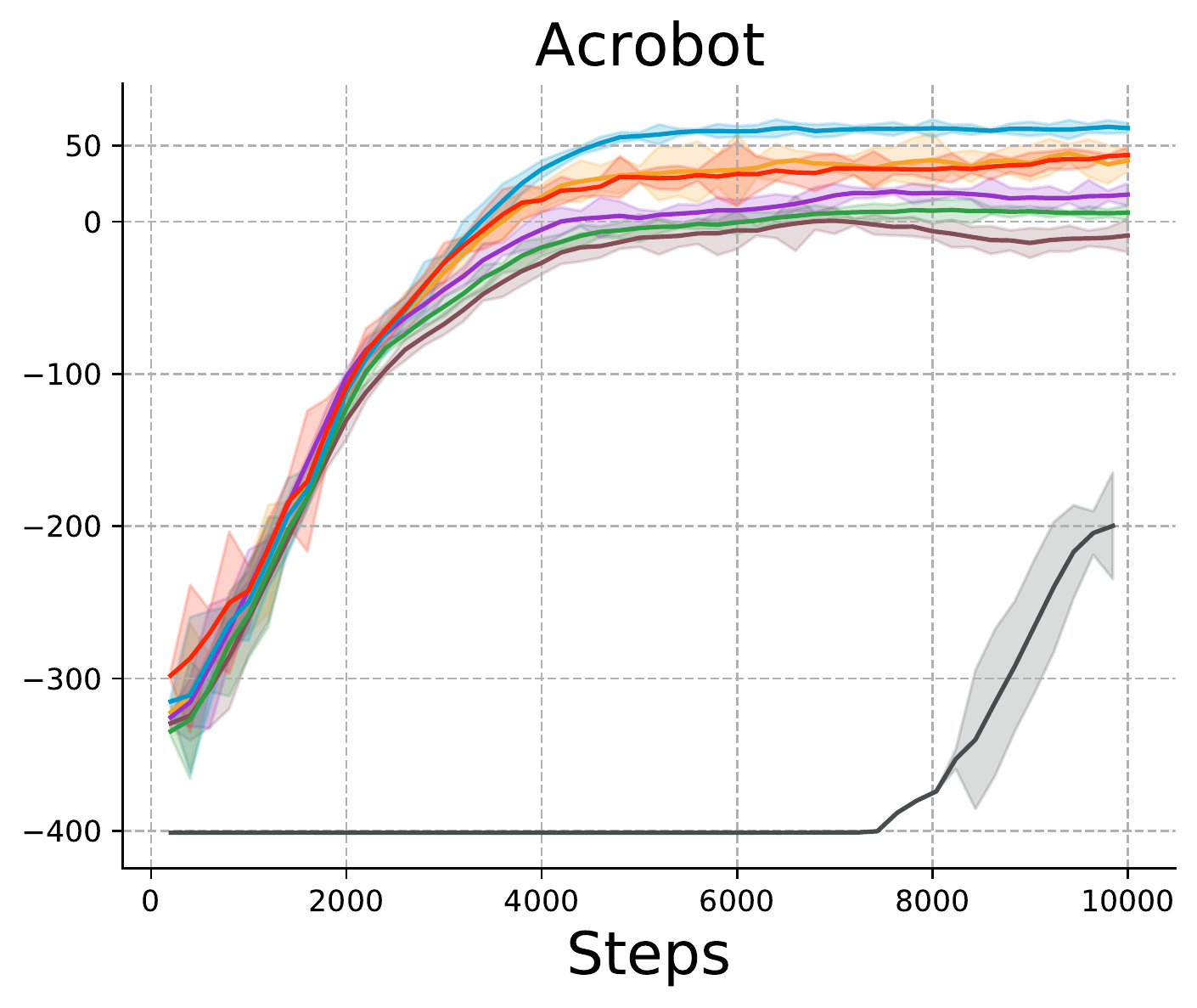}}
		&{\includegraphics[height=3.55cm]{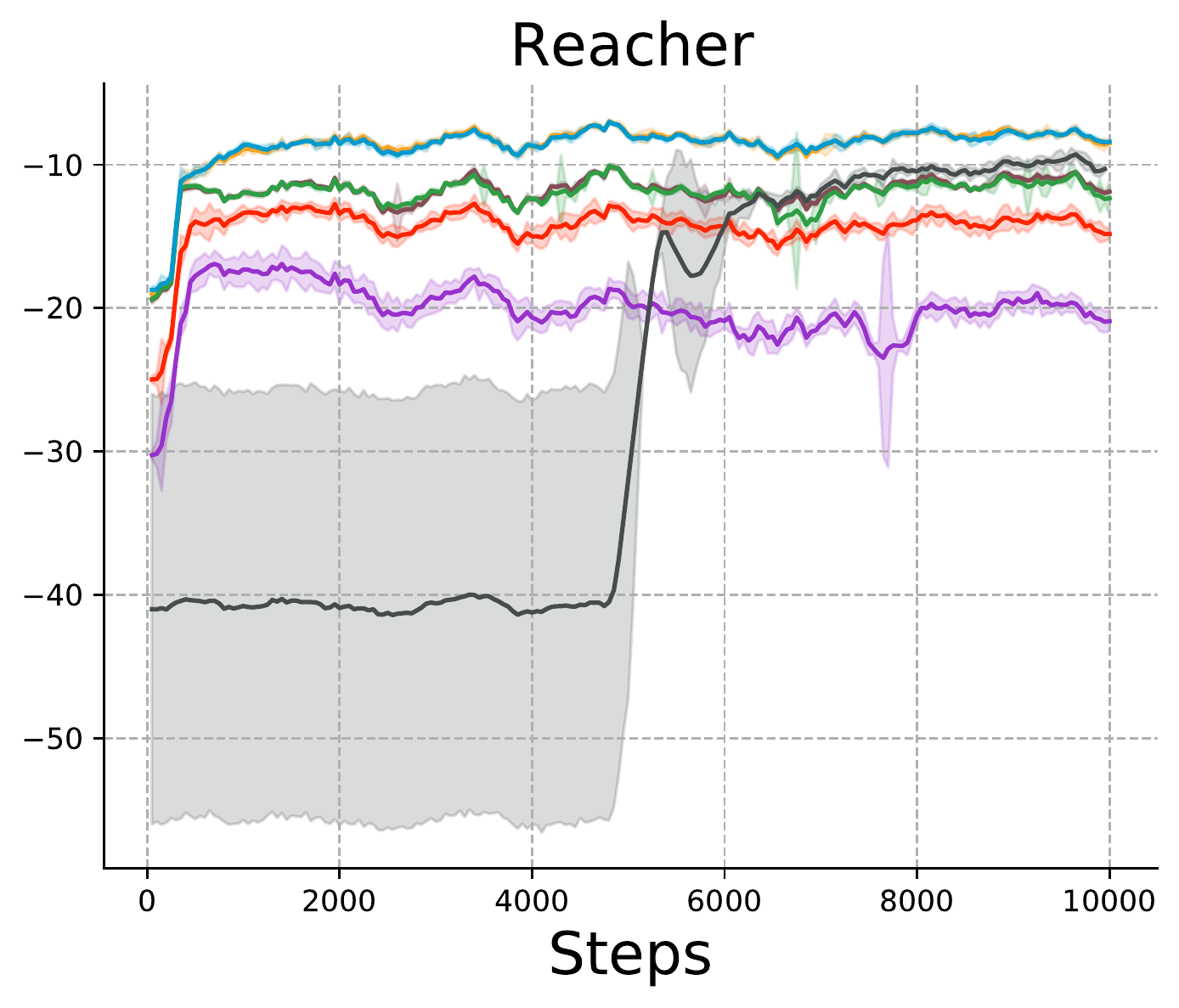}} \\
		{\includegraphics[height=3.55cm]{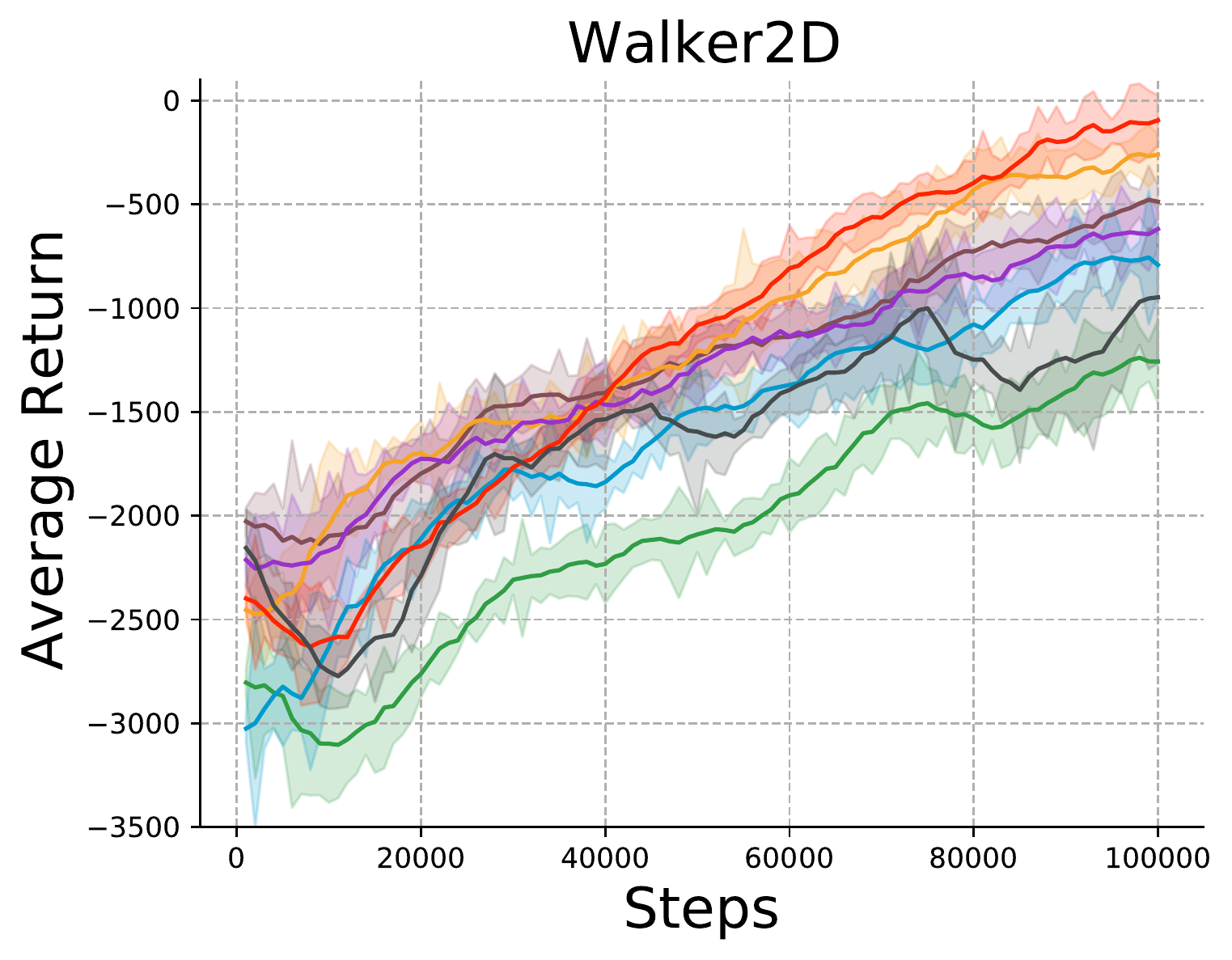}} &
		{\includegraphics[height=3.55cm]{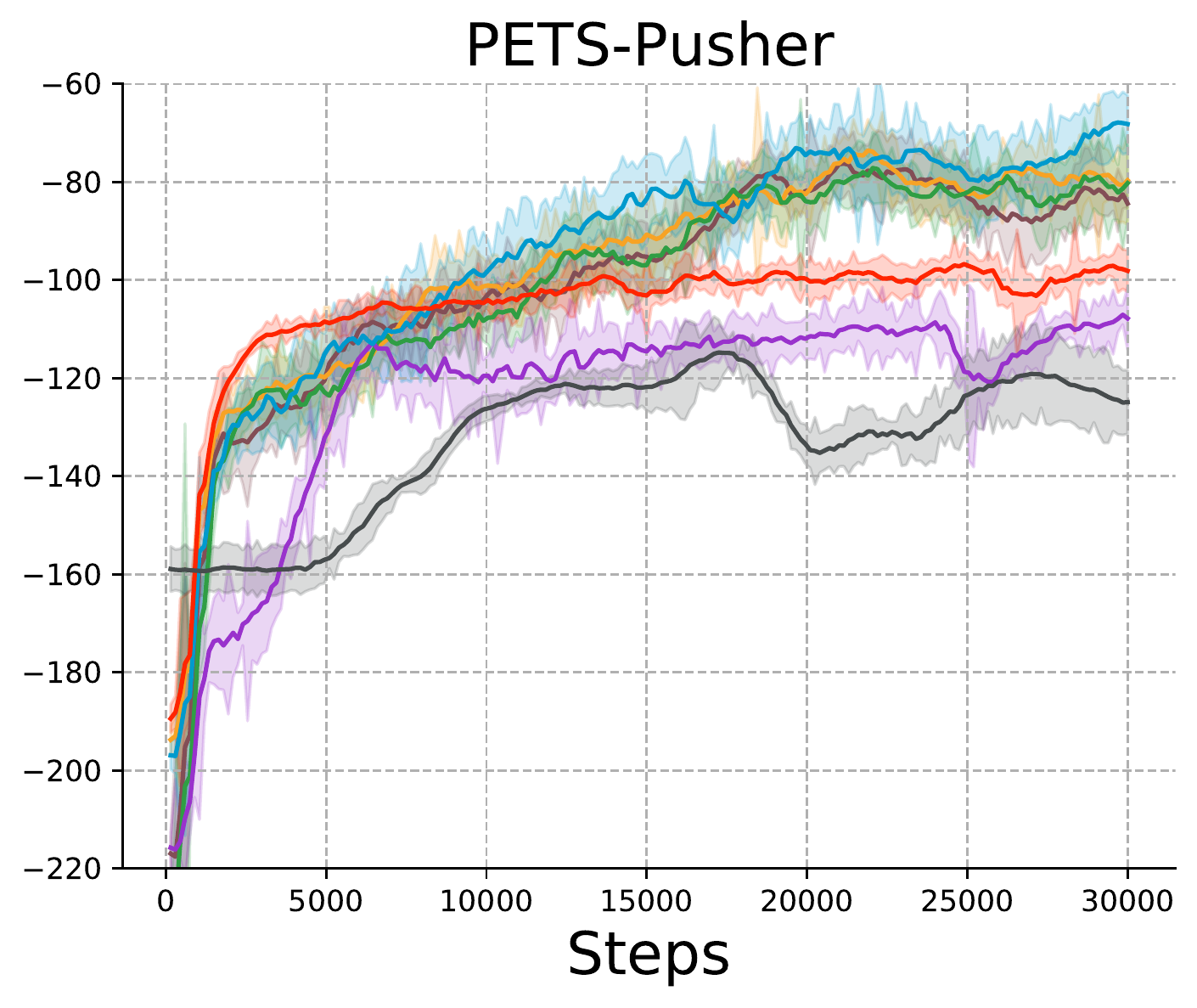}} &
		{\includegraphics[height=3.55cm]{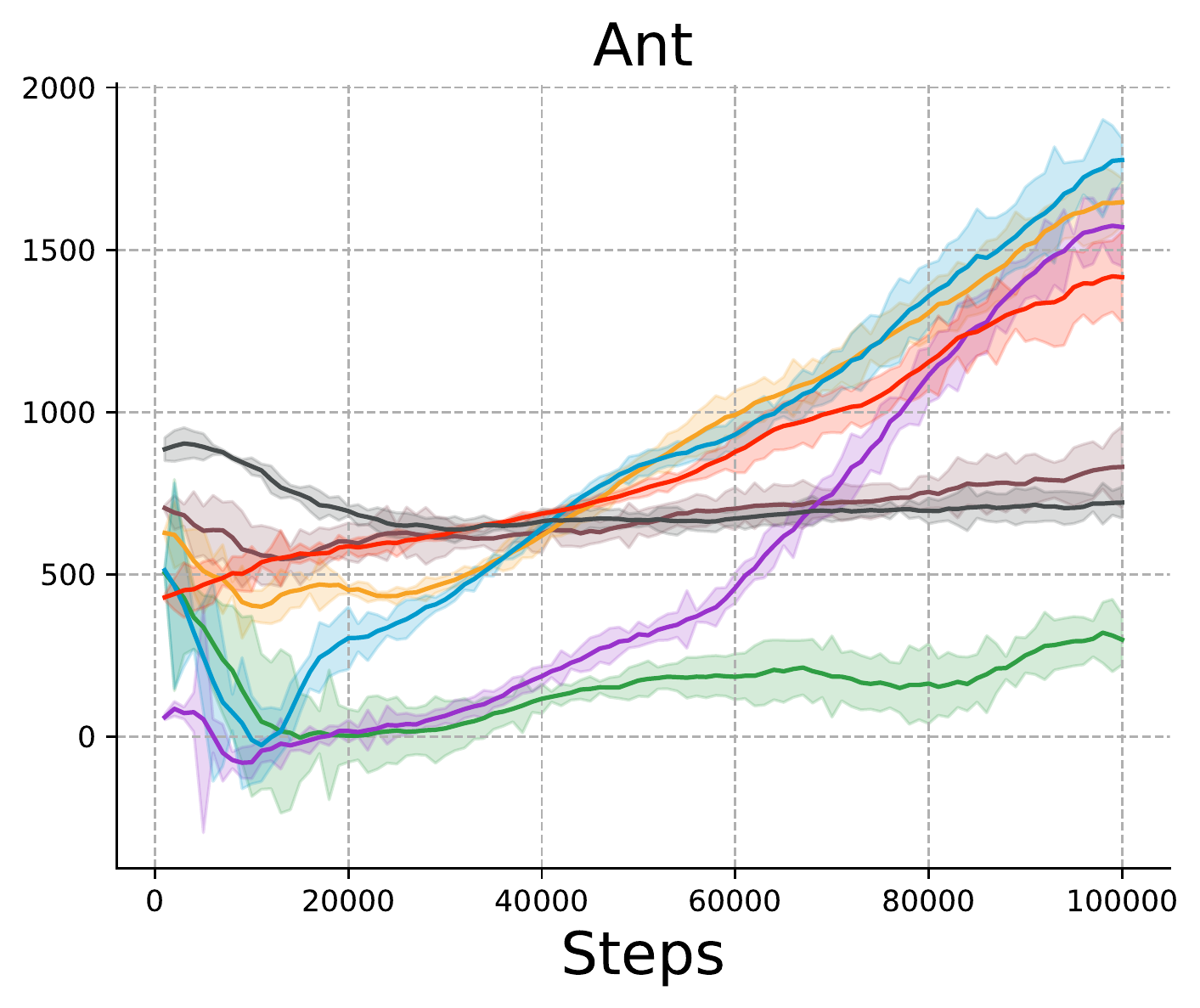}} 
	\end{tabular}
		\includegraphics[width=0.95\textwidth]{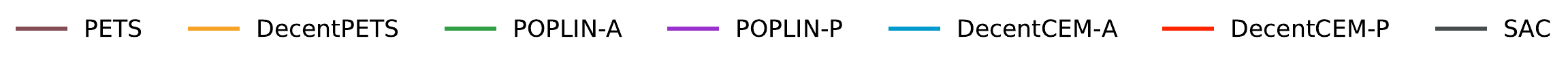}
		\vspace{-5pt}
    \caption{\footnotesize Learning curves of the proposed \textit{DecentCEM} methods and the baselines on continuous control environments. The line and the shaded region shows the mean and standard error of evaluation results from 5 training runs with different random seeds.
    \normalsize } %
    \label{fig:learningcurve}
    \vspace{-10pt}
\end{figure}

\paragraph{\textbf{Learning Curves}}
The learning curves of the algorithms are shown in Fig.~\ref{fig:learningcurve} for InvertedPendulum, Acrobot, Reacher, Walker2D, PETS-Pusher and Ant, sorted by the state and action dimensionality of task. 
The curves are smoothed with a sliding window of 10 data points.
The full results for all environments are included in Appendix \ref{ap:results}.
\textit{We can observe two main patterns from the results.}
One pattern was that 
in most environments, the \textit{decentralized} methods \textit{DecentPETS}, \textit{DecentCEM-A/P} either matched or outperformed their counterpart that took a \textit{centralized} approach.
In fact, the former can be seen as a generalization of the later, by an additional hyperparameter that controls the ensemble size with size one recovering the centralized approach.
The optimal ensemble size depends on the task. 
This additional hyperparameter offers flexibility in fine-tuning CEM for individual domains.
For instance in Fig.~\ref{fig:learningcurve}, for the P-mode where the planning is performed in policy parameter space, an ensemble size of larger than one works better in most environments while an ensemble size of one works better in Ant.
The other pattern was that using policy networks to learn the sampling distribution in general helped improving the performance of \textit{centralized} CEM but not necessarily in \textit{decentralized} formulation.  
This is perhaps due to that the added exploration from multiple instances makes it possible to identify good solutions in some environments.
Using a policy net in such case may hinder the exploration due to overfitting to previous actions.

\paragraph{\textbf{Ablation Study}}
\label{sec:ab}

A natural question to ask about the \textit{DecentCEM-A/P} methods is whether the increased performance is from the larger number of neural network parameters.
We added two variations of the \textit{POPLIN} baselines where a bigger policy network was used. 
The number of the network parameters was equivalent to that of the ensemble of policy networks in  \textit{DecentCEM-A/P}. We show the comparison in Reacher(2) and PETS-Pusher(7) (action dimension in parenthesis) in Fig.~\ref{fig:bignet_pc}.
In both action space and parameter space planning, 
a bigger policy network in \textit{POPLIN} either did not help or significantly impaired the performance (see the \textit{POPLIN-P} results in reacher and PETS-Pusher). This is expected since unlike \textit{DecentCEM}, the training data in \textit{POPLIN} do not scale with the size of the policy network, as explained at the end of Sec.~\ref{sec:train-policy}. 

\begin{figure}[h]
\vspace{-2mm}
	\centering
    	\begin{tabular}{m{4.8cm}m{4.8cm}m{4.8cm}}
	{\includegraphics[height=4cm, trim=0.3cm 0 0.32cm 0, clip]{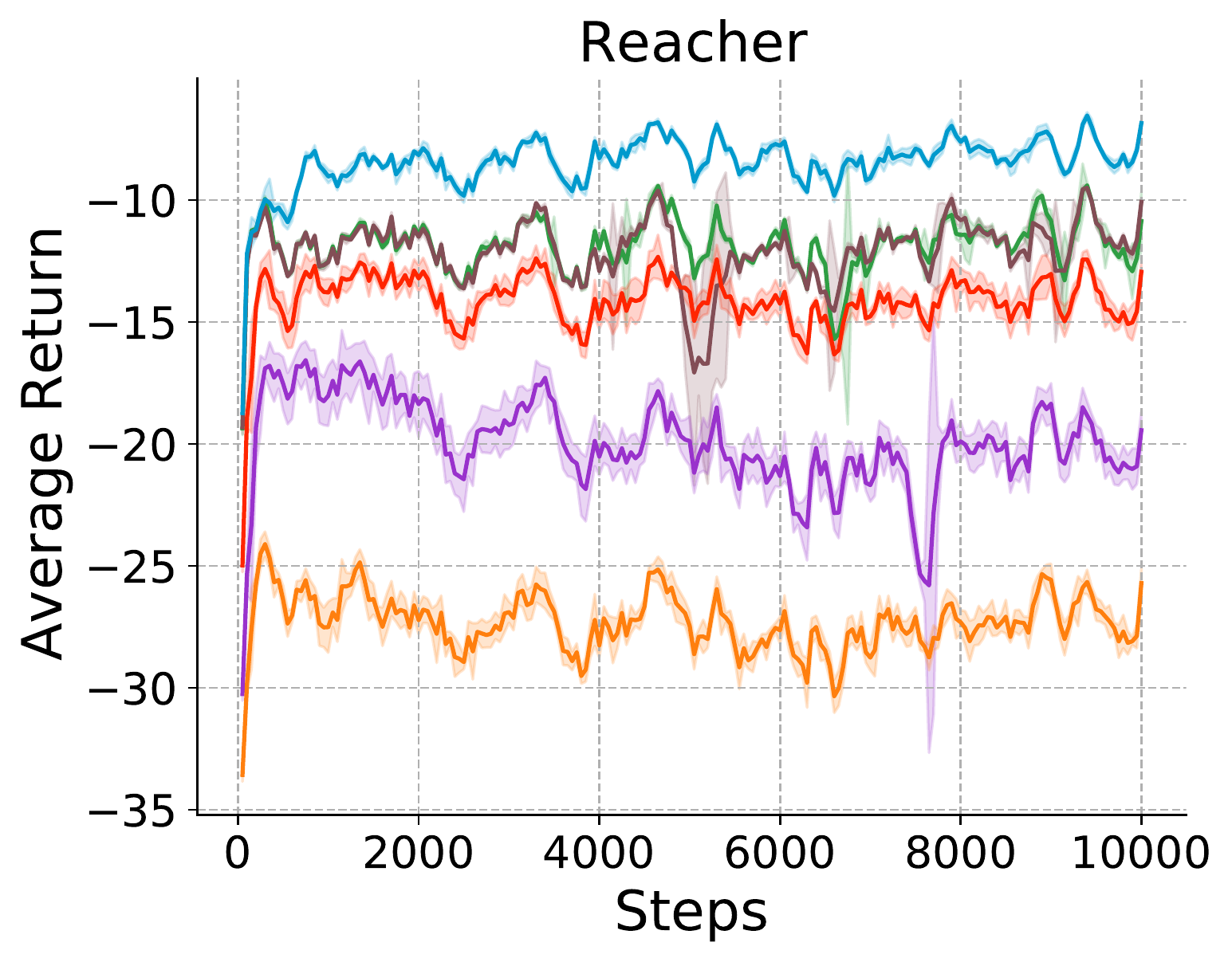}}	
    & {\includegraphics[height=4cm,trim=0.3cm 0 0.3cm 0, clip]{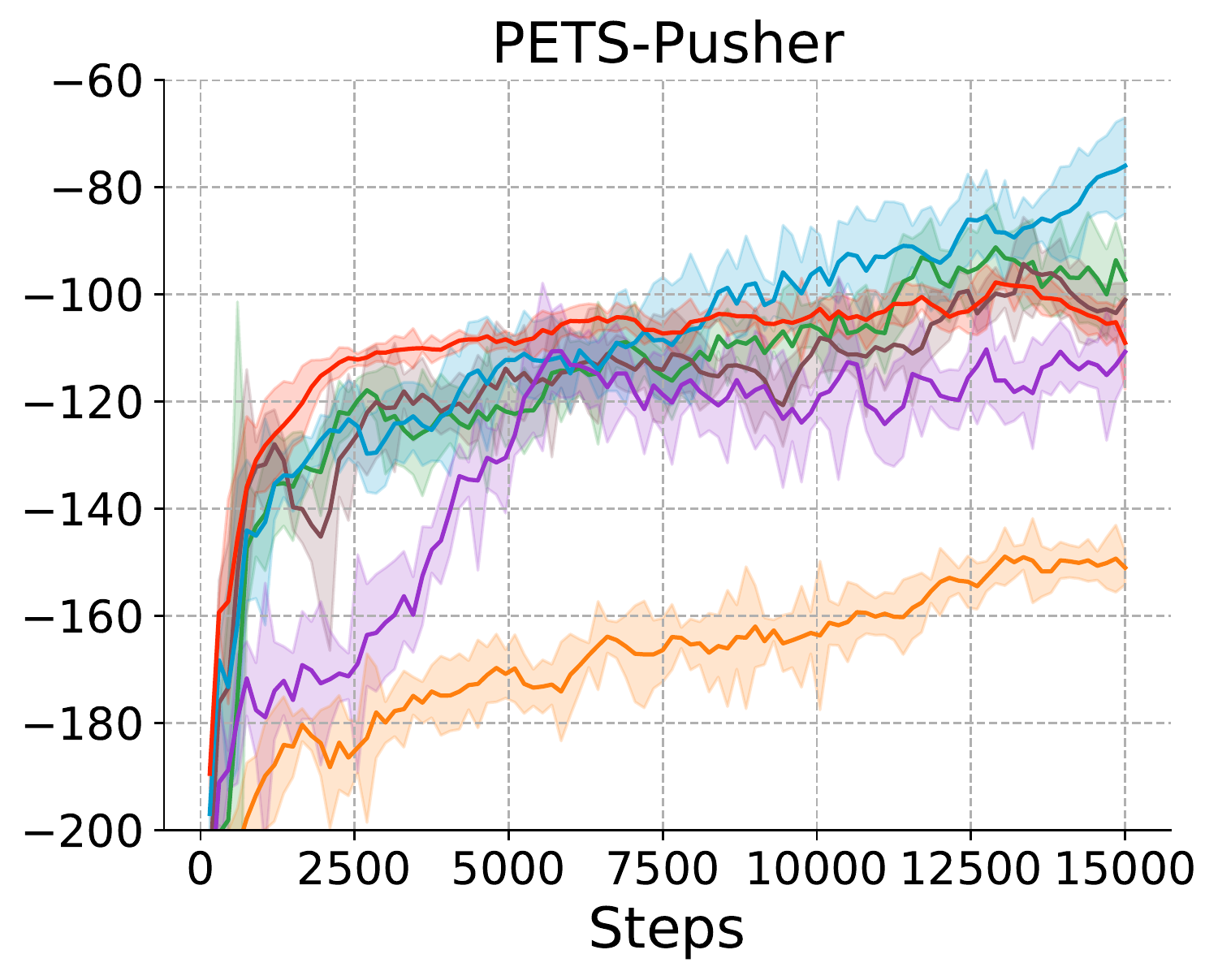}}
    & {\includegraphics[height=1.95cm,trim=0.6cm 0 0.4cm 0, clip]{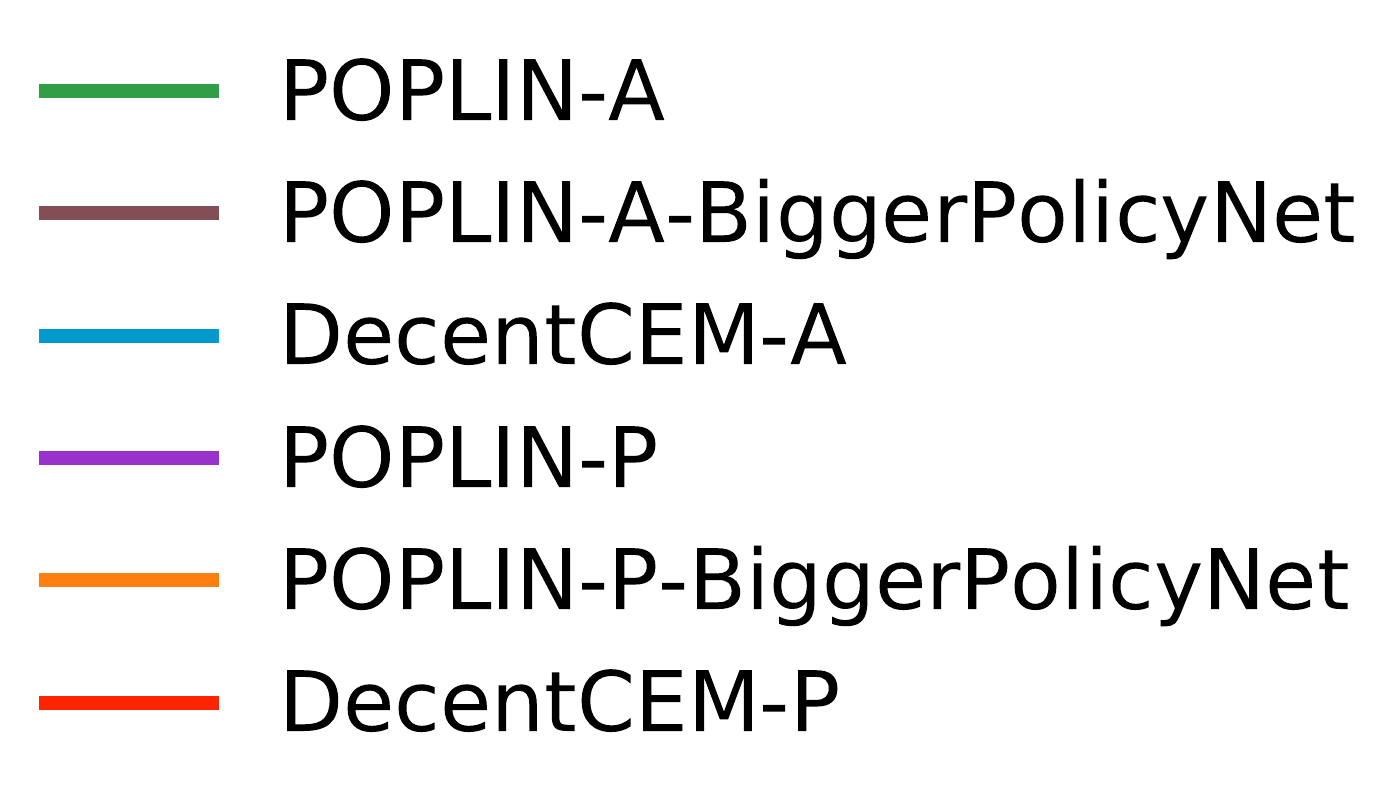}}
	    \end{tabular}
	\vspace{-4mm}
	\caption{\footnotesize Ablation study on the policy network size where
	\textit{POPLIN-A\&P} have a bigger policy network equivalent in the total number of network weights to their \textit{DecentCEM} counterparts. \normalsize} 
	\vspace{-1mm}
	\label{fig:bignet_pc}
\end{figure}

Next, we look into the impact of ensemble size. Fig.~\ref{fig:ensemble-size} shows the learning curves of different ensemble sizes in the Reacher environments for action space planning (left), parameter space planning (middle) and planning without policy networks (right).
Since we fix the total number of samples the same across the methods, the larger the ensemble size is, the fewer samples that each instance has access to. As the ensemble size goes up, we are trading off the accuracy of the sample mean for better exploration by more instances. 
Varying this hyperparameter allows us to find the best trade-off,
as can be observed from the figure %
where increasing the ensemble size beyond a sweet spot yields diminishing or worse returns.
\vspace{-5pt}
\begin{figure}[h]
	\centering
    	\begin{tabular}{ccc}
	{\includegraphics[height=3.7cm,trim=0.27cm 0 0.28cm 0, clip]{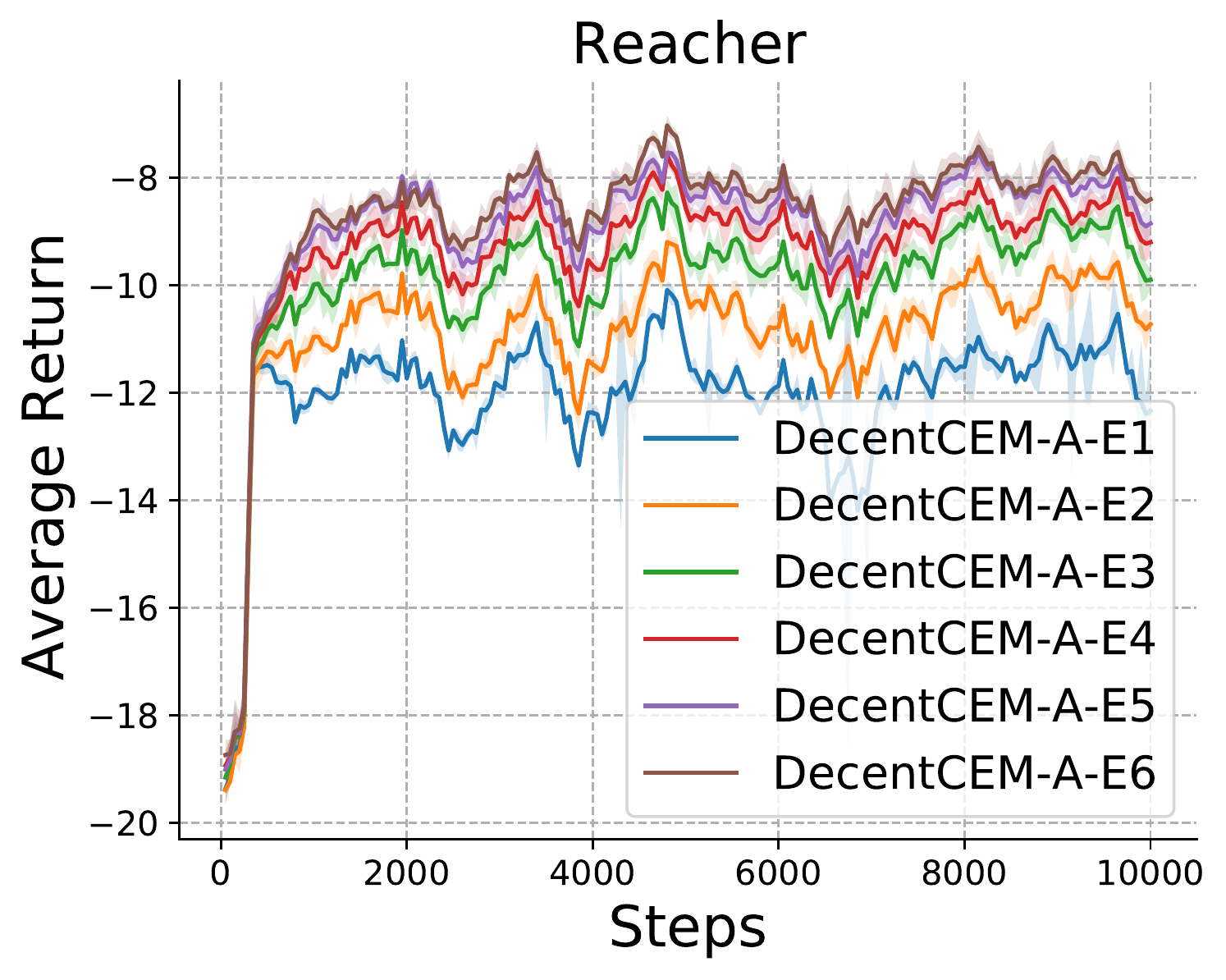}}	
    & {\includegraphics[height=3.7cm,trim=0.35cm 0 0.28cm 0, clip]{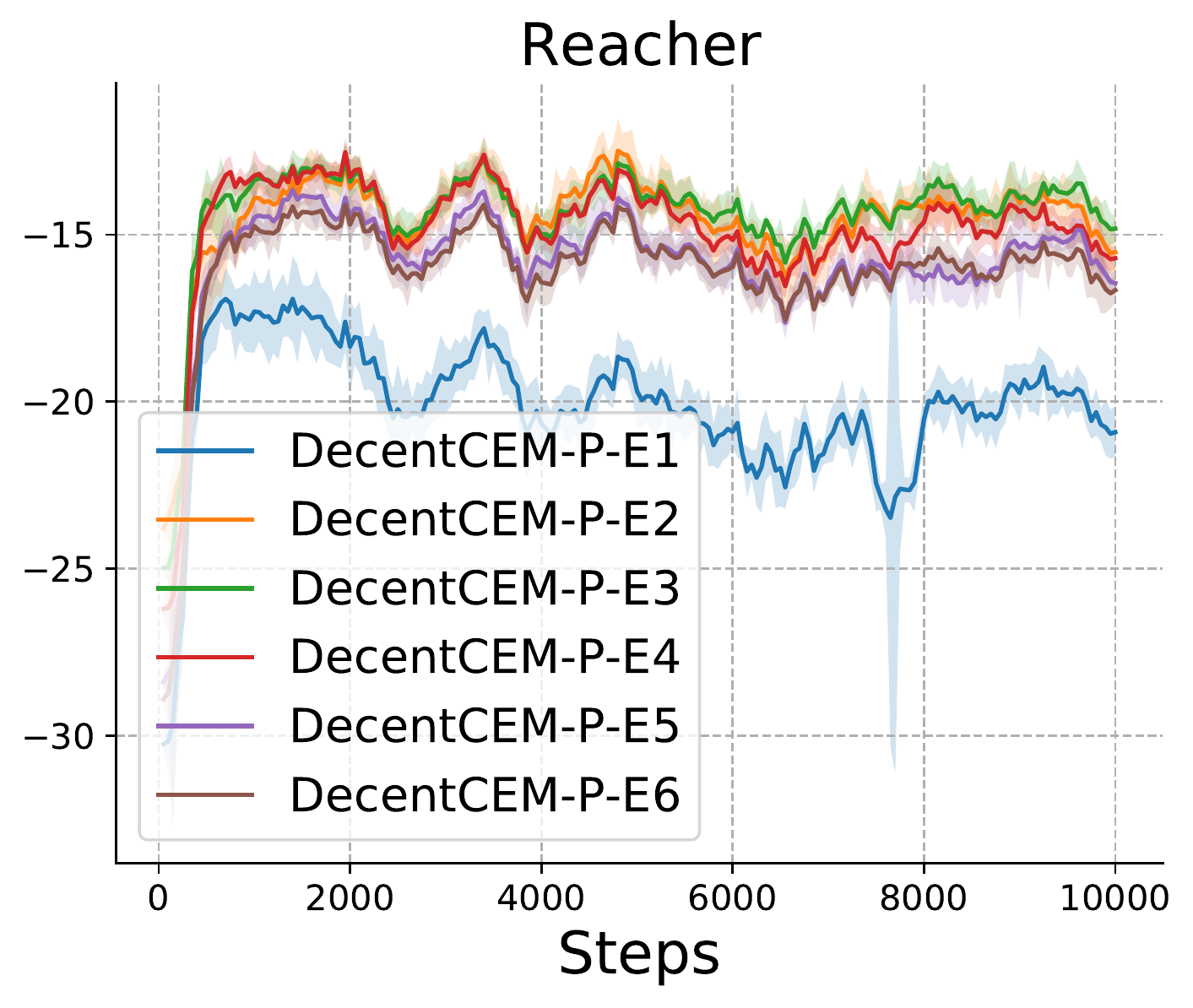}}
    & {\includegraphics[height=3.7cm,trim=0.35cm 0 0.28cm 0, clip]{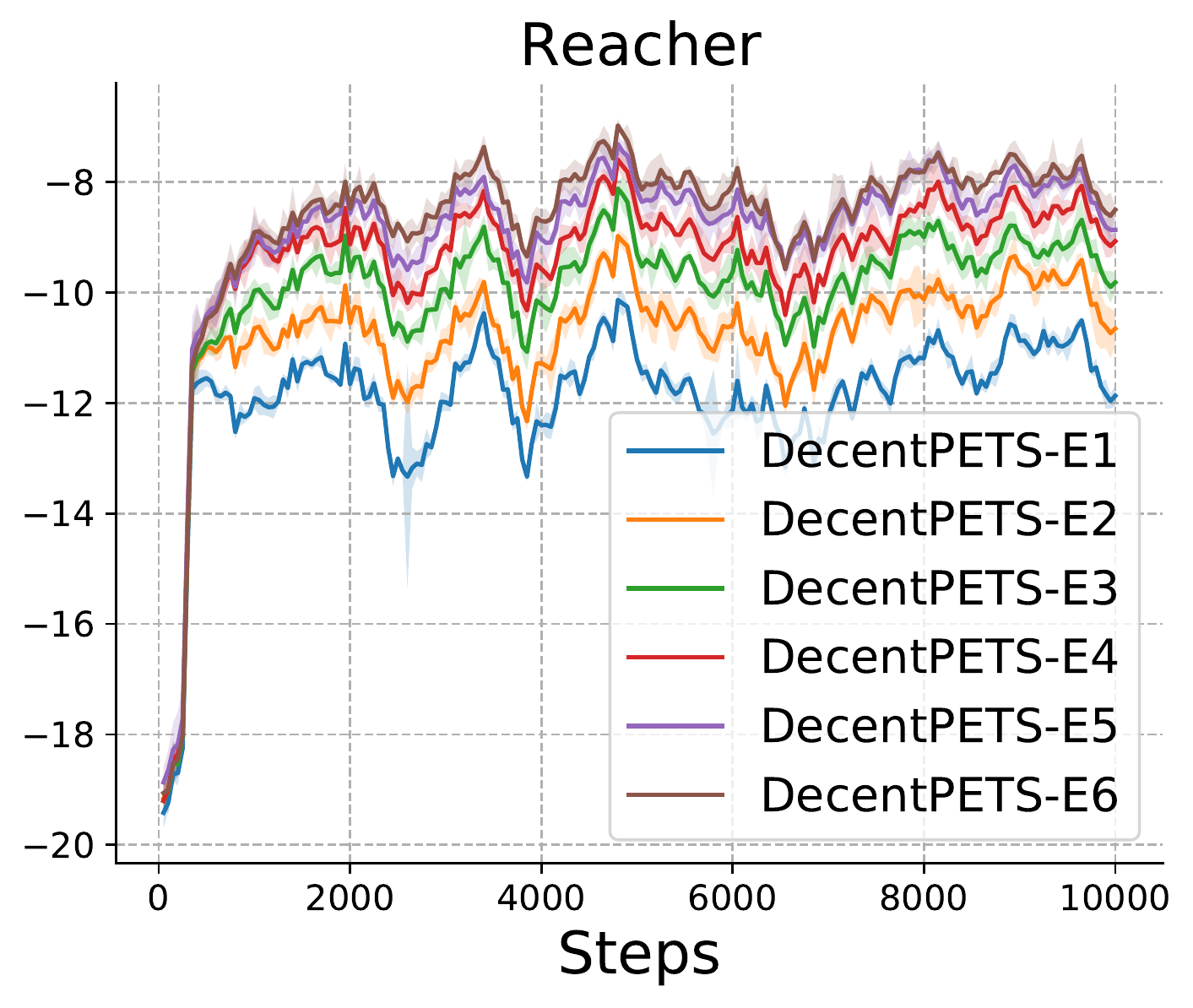}}
	    \end{tabular}
	 \vspace{-2mm}
	\caption{\footnotesize Ablation study on the ensemble size in \emph{DecentCEM-A/P} and \emph{DecentPETS}, e.g. E2 denotes an ensemble with 2 instances. The total number of samples during planning are the same across all variations.\normalsize} 
	\label{fig:ensemble-size}
 	\vspace{-5pt}
\end{figure}

Figure~\ref{fig:selection-ratio} shows the cumulative selection ratio of each CEM instance during training of \textit{DecentCEM-A} with an ensemble size of 5. It suggests that the random initialization of the policy network is sufficient to avoid mode collapse. 
We also plot the actions and pairwise action distances of the instances in
Figure~\ref{fig:action-stats} and \ref{fig:action-pair}.
For visual clarity, we show a time segment toward the end of the training rather than all the 10k steps. %
\emph{DecentCEM-A} has maintained enough diversity in the instances even toward the end of the training.
\textit{DecentCEM-P} and \emph{DecentPETS} share similar trends. The plots of their results along with other ablation results are included in Appendix \ref{ap:ab}.
\begin{figure}
  \centering
\subfigure[Selection Ratios]{\includegraphics[height=3.6cm]{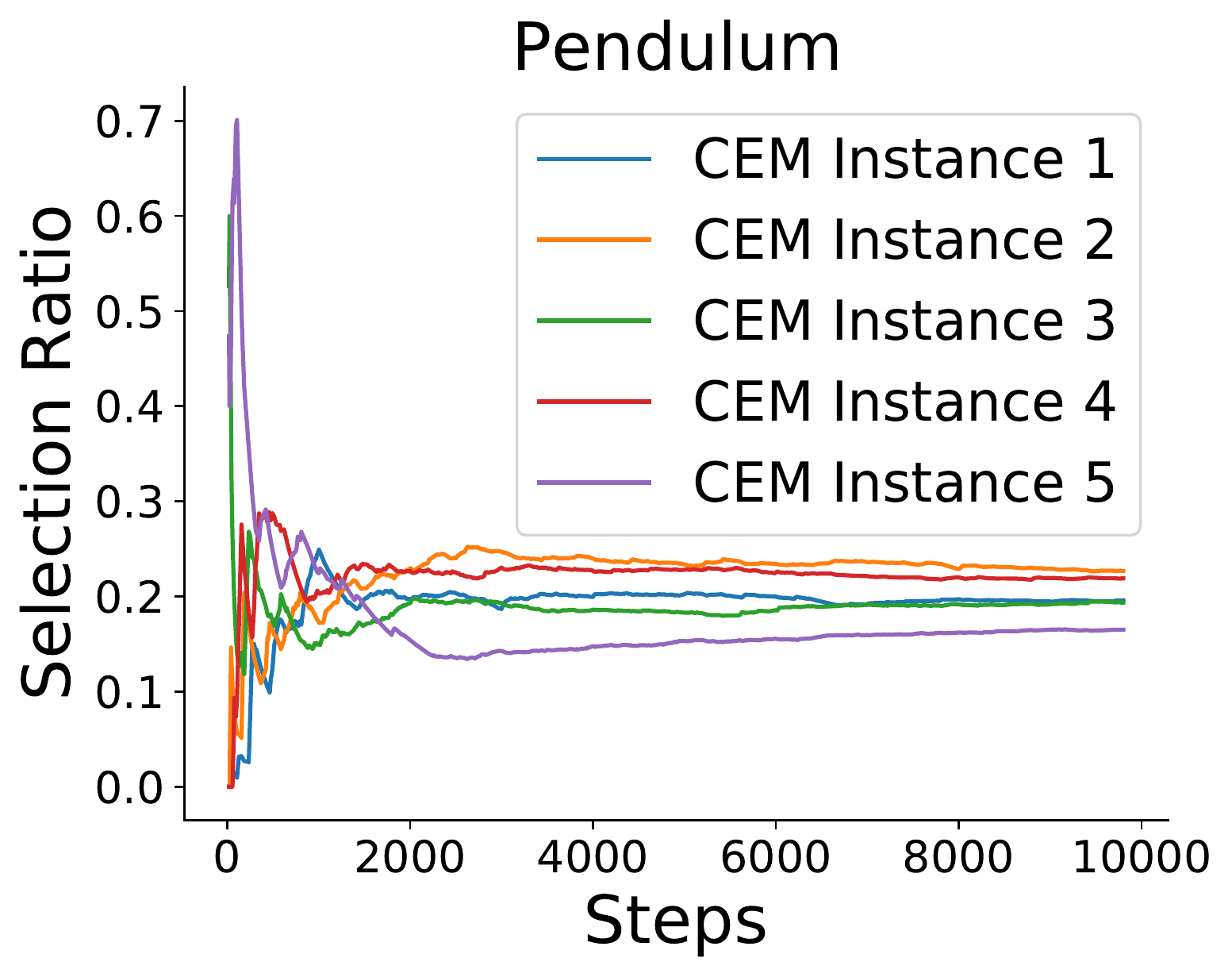}\label{fig:selection-ratio}}    
\subfigure[Action Statistics]{\includegraphics[height=3.6cm]{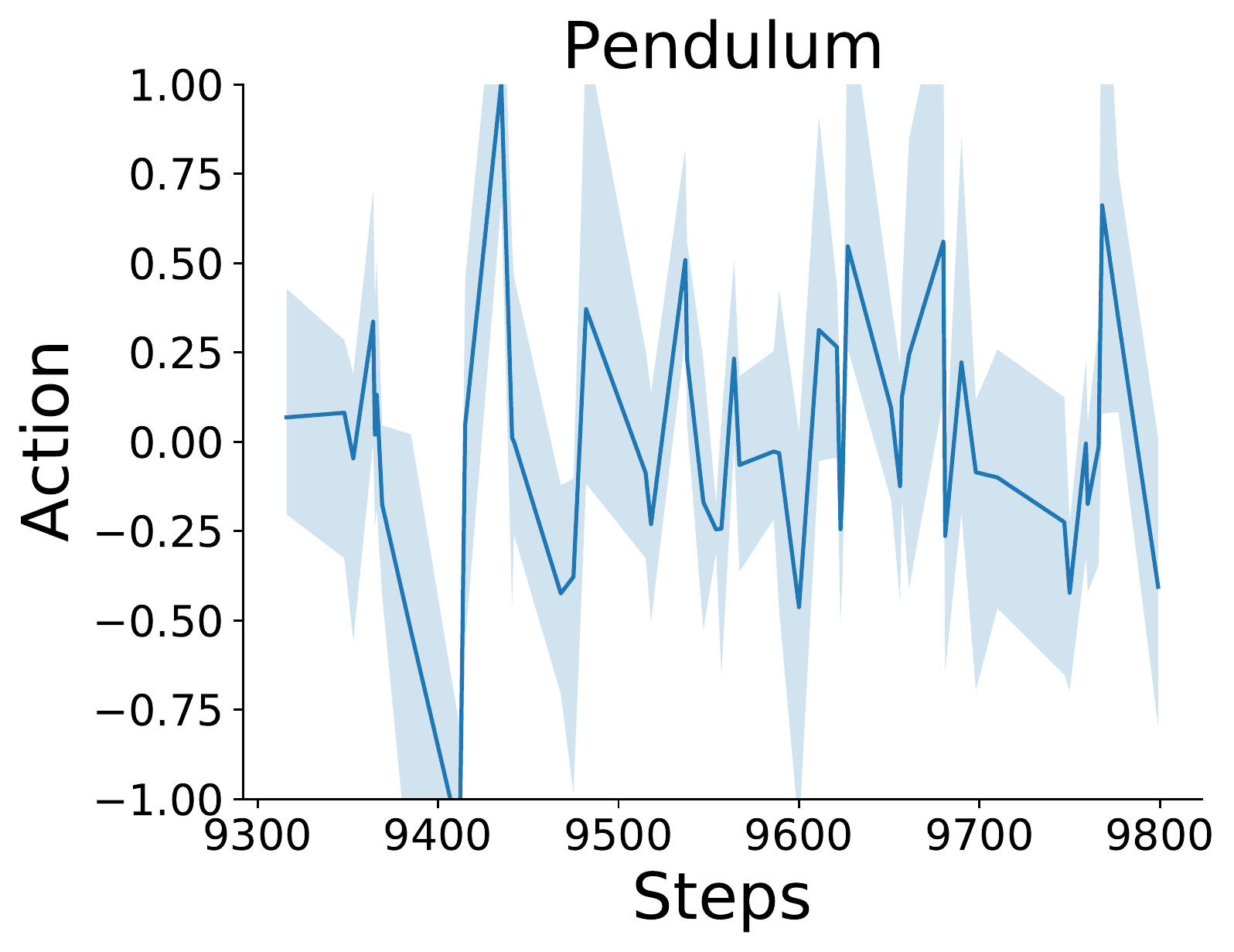}\label{fig:action-stats}}
\subfigure[Action Distance Statistics]{\includegraphics[height=3.6cm]{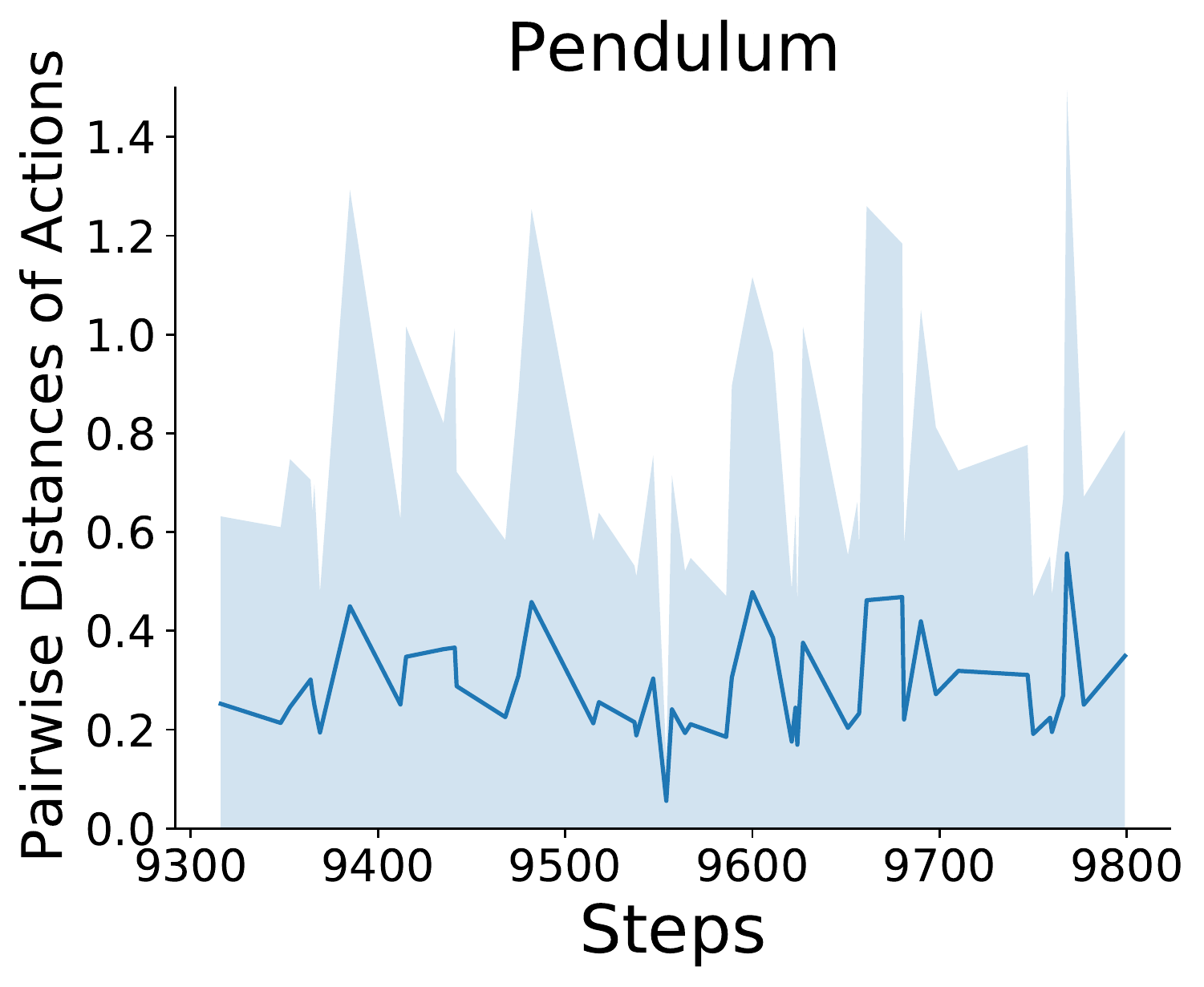} \label{fig:action-pair}}
\caption{\footnotesize Ablation of ensemble diversity in Pendulum during training of \textit{DecentCEM-A} with 5 instances. (a) Cumulative selection ratio of each CEM instance. (b)(c) Statistics of the actions and pairwise action distances of the instances, respectively. The line and shaded region represent the mean and min/max.\normalsize}
 \label{fig:ensemble-diversity}
\end{figure}

\section{Conclusion and Future Work}
\label{sec:conclusion}

In this paper, we study CEM planning in the context of continuous-action MBRL.
We propose a novel \textit{decentralized} formulation of CEM named \textit{DecentCEM}, which generalizes CEM to run multiple independent instances and recovers the conventional CEM when the number of instances is one.
We illustrate the strengths of the proposed DecentCEM approach in a motivational one-dimensional optimization task and show how it fundamentally differs from the CEM approach that uses a Gaussian or GMM.
We also show that DecentCEM has almost sure convergence to a local optimum.
We extend the proposed approach to MBRL by plugging in the \textit{decentralized} CEM into three previous CEM-based methods: \textit{PETS}, \textit{POPLIN-A}, \textit{POPLIN-P}.
We show their efficacy in benchmark control tasks and ablations studies.

There is a gap between the convergence result and practice where the theory assumes that the number of samples grow polynomially with the iterations whereas a constant sample size is commonly used in practice including our work. 
Investigating the convergence properties of \textit{CEM} under a constant sample size makes an interesting direction for future work. Another interesting direction to pursue is finite-time analysis of CEM under both \textit{centralized} and \textit{decentralized} formulations.
In addition, the implementation has room for optimization: the instances currently run serially but can be improved by a parallel implementation to take advantage of the parallelism of the ensemble.

\begin{ack}

We would like to thank the anonymous reviewers for their time and valuable suggestions.
Zichen Zhang would like to thank Jincheng Mei for helpful discussions in the convergence analysis and Richard Sutton for raising a question about the model learning.
This work was partially done during Zichen Zhang's internship at Huawei Noah's Ark Lab.
Zichen Zhang gratefully acknowledges the financial support by an NSERC CGSD scholarship and an Alberta Innovates PhD scholarship. He is thankful for the compute resources generously provided by Digital Research Alliance of Canada (and formerly Compute Canada), which is sponsored through the accounts of Martin Jagersand and Dale Schuurmans.
Dale Schuurmans gratefully acknowledges funding from the Canada CIFAR AI Chairs Program, Amii and NSERC.

\end{ack}

\bibliography{rl}
\bibliographystyle{abbrvnat}

\medskip

\small

\section*{Checklist}

\begin{enumerate}

\item For all authors...
\begin{enumerate}
  \item Do the main claims made in the abstract and introduction accurately reflect the paper's contributions and scope?
    \answerYes{}
  \item Did you describe the limitations of your work?
    \answerYes{} Limitations are discussed in Sec.~\ref{sec:conclusion} and Appendix~\ref{ap:overhead}.
  \item Did you discuss any potential negative societal impacts of your work?
    \answerNA{}
  \item Have you read the ethics review guidelines and ensured that your paper conforms to them?
    \answerYes{}
\end{enumerate}

\item If you are including theoretical results...
\begin{enumerate}
  \item Did you state the full set of assumptions of all theoretical results?
    \answerYes{}
	\item Did you include complete proofs of all theoretical results?
    \answerYes{}
\end{enumerate}

\item If you ran experiments...
\begin{enumerate}
  \item Did you include the code, data, and instructions needed to reproduce the main experimental results (either in the supplemental material or as a URL)?
    \answerYes{}
  \item Did you specify all the training details (e.g., data splits, hyperparameters, how they were chosen)?
    \answerYes{} See Section \ref{sec:exp}, Appendix A, B and D.
	\item Did you report error bars (e.g., with respect to the random seed after running experiments multiple times)?
    \answerYes{} See Section \ref{sec:exp} and Appendix A.
	\item Did you include the total amount of compute and the type of resources used (e.g., type of GPUs, internal cluster, or cloud provider)?
    \answerYes{} See section \ref{sec:benchmark}.
\end{enumerate}

\item If you are using existing assets (e.g., code, data, models) or curating/releasing new assets...
\begin{enumerate}
  \item If your work uses existing assets, did you cite the creators?
    \answerYes{}
  \item Did you mention the license of the assets?
    \answerNA{} The assets (Gym, mujoco) are cited and well known.%
  \item Did you include any new assets either in the supplemental material or as a URL?
    \answerNo{}
  \item Did you discuss whether and how consent was obtained from people whose data you're using/curating?
    \answerNA{}
  \item Did you discuss whether the data you are using/curating contains personally identifiable information or offensive content?
    \answerNA{}
\end{enumerate}

\item If you used crowdsourcing or conducted research with human subjects...
\begin{enumerate}
  \item Did you include the full text of instructions given to participants and screenshots, if applicable?
    \answerNA{}
  \item Did you describe any potential participant risks, with links to Institutional Review Board (IRB) approvals, if applicable?
    \answerNA{}
  \item Did you include the estimated hourly wage paid to participants and the total amount spent on participant compensation?
    \answerNA{}
\end{enumerate}

\end{enumerate}

\clearpage
\newpage
\appendix
\numberwithin{table}{section}

\section*{Appendix}
\numberwithin{table}{section}

\section{Details of the Motivational Example}
\label{ap:motivation}

\subsection{Setup and Running Time}
For a fair comparison of the three methods \textit{CEM}, \textit{CEM-GMM} and \textit{DecentCEM}, we performed a hyperparameter search for all.
The list of hyperparmeters are summarized in Table \ref{tb:motivation-hyperparams} and the best performing hyperparameters for each method under each population size are shown in Table \ref{tb:motivation-best-hyperparam}.
These hyperparameters were what the data in Fig.~\ref{fig:motivation-1d} were based on.
Note that the
top percentage of samples ``Elite Ratio'' (in Table A.1) was used in the implementation instead of top-$k$ but they are equivalent.
The running time are included in Table \ref{tb:motivation-runningtime}.

\begin{table*}[htbp]
 \footnotesize
  \caption{Hyperparameters}
  \label{tb:motivation-hyperparams}
  \centering
 \begin{tabular}{lll}
    \toprule
    Algorithm & Parameter & Value \\
    \midrule
Shared   &  Total Sample Size & {100, 200, 500, 1000}\\
Parameters  &    Elite Ratio & 0.1 \\
 &  $\alpha$: Smoothing Ratio   & 0.1 \\
 &   $\epsilon$: Minimum Variance Threshold  & 1e-3 \\
 & Maximum Number of Iterations & 100 \\
    \midrule
CEM-GMM &        $M$: Number of Mixture Components & {3,5,8,10}\\
   &  $\kappa$: Weights of Entropy Regularizer  & 0.25, 0.5 \\
 &    $r$: Return Mode  & `s': mean of the mixture component \\
 & & \ \ \ \ \ \ \  sampled  based on their weights\\
 & & `m': mean of the component that  \\
 & & \ \ \ \ \ \ \ achieves the minimum cost \\
    \midrule 
DecentCEM &    $E$: Number of Instances in the Ensemble & {3,5,8,10} \\
    \bottomrule
  \end{tabular}
  \normalsize
\end{table*}

\begin{table}[htbp]
 \footnotesize
  \caption{Best Hyper-Parameter}
  \label{tb:motivation-best-hyperparam}
  \centering
 \begin{tabular}{lllll}
    \toprule
     & \multicolumn{4}{c}{Total Sample Size } \\
     & 100 & 200 & 500 & 1000 \\ %
    \midrule
    CEM-GMM & $M=10$ & $M=8$ & $M=8$ & $M=8$\\
    & $\kappa=0.25$, & $\kappa=0.5$ & $\kappa=0.25$ & $\kappa=0.5$ \\
    & $r=$`m' & $r=$`m' & $r=$`m'& $r=$`s' \\
    \midrule 
    DecentCEM & $E=10$ & $E=10$ & $E=10$ & $E=8$ \\
    \bottomrule
  \end{tabular}
  \normalsize
\end{table}

\begin{table}[htbp]
  \footnotesize
  \caption{Total Time of 10 Runs (in seconds)}
  \label{tb:motivation-runningtime}
  \centering
 \begin{tabular}{lllll}
    \toprule
     & \multicolumn{4}{c}{Total Sample Size } \\
     & 100 & 200 & 500 & 1000 \\ %
    \midrule
    CEM  & 0.079 & 0.093 & 0.165 & 0.318 \\
    CEM-GMM  & 7.322 & 11.500 & 24.431 & 59.844 \\
    DecentCEM  & 0.407 & 0.420 & 0.506 & 0.545 \\
    \bottomrule
  \end{tabular}
  \normalsize
\end{table}

\subsection{Output of CEM Approaches}
\label{ap:cem-output}
In terms of the output of CEM approaches, there exist different options in the literature. 
The most common option is to return the sample in the domain that corresponds to the highest probability density in the final sampling distribution. It is the mean in the case of Gaussian and the mode with the highest probability density in the case of GMM. 
One can also draw a sample from the final sampling distribution \citep{okada2020variational} and return it.
Another option is to return the best sample observed \citep{pinneri2020sampleefficient}.
The best option among the three may be application dependent. It has been observed that in many applications, the sequence of sampling distributions numerically converges to a deterministic one \citep{CEM}, in which case the first two options are identical.

\begin{table*}[tbh]
\footnotesize
  \caption{The setup of the environments. The number in the bracket in the ``Environment'' column denotes the source of this environment: [1] refers to the benchmark paper from \citet{mbbl}; [2] denotes PETS \citep{PETS}.
  In the reward functions,
  $\mathbf{d}_t$ denotes the vector between the end effector to the target position. $z_t$ denotes the height of the robot.
  $\| \mathbf{v} \|_1 $ and $\| \mathbf{v} \|_2 $ denote the 1-norm and 2-norm of vector $\mathbf{v}$, respectively.
  In PETS-Pusher, $\mathbf{d}_{1,t}$ is the vector between the object position and the goal and $\mathbf{d}_{2,t}$ denotes the vector between the object position and the end effector.}
  \label{tb:env}
 \centering
  \begin{tabular}{lllll}
    \toprule 
    Environment & $|S|$ & $|A|$ & Episode Length & Reward Function \\
    \midrule 
    Pendulum & 3 & 1 & 200 &  $\theta_t ^2 + 0.1 \dot{\theta_t} ^2 + 0.001  a_t ^2 $  \\
    InvertedPendulum [1] & 4 & 1 & 100 & $-\theta_t ^2$    \\
    Cartpole [1] & 4 & 1 & 200 &  $\cos \theta_t - 0.01 x_t^2$    \\
    Acrobot [1] & 6 & 1 & 200 &  $-\cos\theta_{1,t} - \cos(\theta_{1,t} + \theta_{2,t})$  \\
    FixedSwimmer [1] & 9 & 2 & 1000 & $\dot{x}_t - 0.0001 \| \mathbf{a}_t\|_2^2 $    \\
    Reacher [1] & 11 & 2 & 50 &  $ -\|\mathbf{d}_t\| - \| \mathbf{a}_t\|_2^2 $   \\
    Hopper [1] & 11 & 3 & 1000  &   $ \dot{x}_t - 0.1\| \mathbf{a}_t\|_2^2 - 3(z_t - 1.3)^2 $   \\
    Walker2d [1] & 17 & 6 & 1000 &  $ \dot{x}_t - 0.1\| \mathbf{a}_t\|_2^2 - 3(z_t - 1.3)^2 $    \\
    HalfCheetah [1] & 17 & 6 & 1000  &  $ \dot{x}_t - 0.1\| \mathbf{a}_t\|_2^2$  \\
    PETS-Reacher3D [2] & 17 & 7 & 150  &  $ - \|\mathbf{d}_t\|^2_2 - 0.01\| \mathbf{a}_t\|_2^2$  \\
    PETS-HalfCheetah [2] & 18 & 6 & 1000  &  $ \dot{x}_t - 0.1\| \mathbf{a}_t\|_2^2$  \\
    PETS-Pusher [2] & 20 & 7 & 150  & $ -1.25 \| \mathbf{d}_{1,t} \|_1  - 0.5 \| \mathbf{d}_{2,t} \|_1  - 0.1 \| \mathbf{a_t}\|_2^2$   \\
    Ant [1] & 27 & 8 & 1000 & $ \dot{x}_t - 0.1\| \mathbf{a}_t\|_2^2 -3(z_t-0.57)^2 $    \\
    \bottomrule
  \end{tabular}
\normalsize
\end{table*}

\section{Benchmark Environment Details}
\label{ap:env}In this section, we go over the details of the benchmark environments used in the experiments.

Table~\ref{tb:env} lists the environments along with their properties, including the dimensionality of the state $|S|$ and action spaces $|A|$, the maximum episode length.
as well as the reward function.
Whenever possible, we reuse the original implementations from the literature as noted in Table~\ref{tb:env} so as to avoid confusion.
The environments that start with ``PETS'' are from the PETS paper \citep{PETS}, which is one of the baseline methods. Most other environments are from \cite{mbbl} where the dynamics are the same as the OpenAI gym version and the reward function in Table~\ref{tb:env} is exposed to the agent.
For more details of the environments, the readers are referred to the original paper.

Note that FixedSwimmer is a modifed version of the original Gym Swimmer environment where the velocity sensor on the neck is moved to the head. This fix was originally proposed by \cite{poplin}. For the Pendulum environment, we use the OpenAI Gym version. The modified version in \citep{mbbl} uses a different reward function which we have found to be incorrect.

\vspace{-5pt}
\section{Algorithms}
\label{ap:alg}
In this section, we give the pseudo-code of the proposed algorithms DecentCEM-A and DecentCEM-P in Algorithm \ref{alg:DecentCEM-A} and \ref{alg:DecentCEM-P} respectively.
We only show the training phase. The algorithm at inference time is simply the same process without the data saving and network update. For the internal process of CEM, we refer the readers to \cite{CEM, poplin}.

\begin{algorithm}[hp]
\DontPrintSemicolon
\SetNoFillComment
 Initialize the policy networks $p_i$ with $\theta_i, i = {1,2,\cdots, M}$ where $M$ is the ensemble size. Planning horizon $H$. 
 Initialize the dynamics network $f_\omega$ parameterized by $\omega$. 
 Empty Datasets $D_m$ and $D_p$\\
  
 \tcp {Episode 1, warmup phase}
 Rollout using a random policy, fill the dataset $D_m$ with the transition data $\{(s_t, a_t, s_{t+1})\}$ \\
Update $\omega$ using $D_m$ by Mean-Squared Loss \tcp* {Train the dynamics network with $D_m$}
\tcp {Episode 2 onwards}
  \Repeat{bored} 
  {
    $t=0$, $D_p=\{\}$ \tcp*{Each episode, reset time and dataset}
    \Repeat{Either reached the maximum episode length or terminal state} 
    {
        \ForEach{ policy network $p_i$ in the ensemble}  
            {
            generate reference mean of action sequence distribution $\mu_i$ using $p_i$ and the model $f_\omega$.\\
            \tcp {Apply CEM to refine the action distribution.} 
            \tcp {$\hat{\mu}_i$, $v_i$ are the mean action sequence of the refined distribution and its expected value}
            $\hat{\mu}_i, v_i = \text{CEM}(\mu_i) $\\
            $\hat{a}_{t,i} = \hat{\mu}_i[0]$
            
           }
           $a_t = \argmax_{\hat{a}_{t,i}} v_i$\tcp*{Pick best distribution}
        $s_{t+1} = step(a_t)$ 
        \tcp*{Execute the first action in the mean sequence}
        Append the transition $(s_t, a_t, s_{t+1})$ to $D_m$ \\
        Append the data $\{(s_t, \hat{a}_{t,i})\}_{i=1}^{M}$ to $D_p$ \\
        $ t = t+1 $ \tcp*{Update time step}
    }
    Update the model parameter $\omega$ using dataset $D_m$ \\
    Update the policy network weights $\{\theta_i\}_{i=1}^{M}$  using dataset $D_p$ by the behavior cloning objective \\
 }
 \caption{DecentCEM-A Training}
\label{alg:DecentCEM-A}
\end{algorithm}
\begin{algorithm}[hp]
\DontPrintSemicolon
\SetNoFillComment

  Initialize the policy networks $p_i$ with $\theta_i, i = {1,2,\cdots, M}$ where $M$ is the ensemble size. Planning horizon $H$.
 Initialize the dynamics network $f_\omega$ parameterized by $\omega$. 
 Empty Datasets $D_m$ and $D_p$\\
  
\tcp {Episode 1, warmup phase}
 Rollout using a random policy, fill the dataset $D_m$ with the transition data $\{(s_t, a_t, s_{t+1})\}$ \\
Update $\omega$ using $D_m$ by Mean-Squared Loss
\tcp* {Train the dynamics network with $D_m$}

\tcp {Episode 2 onwards}
  \Repeat{bored} 
  {
    $t=0$, $D_p=\{\}$ \tcp*{Each episode, reset time and dataset}
    \Repeat{either reached the maximum episode length or terminal state} 
    {
        \ForEach{ policy network $p_i$ in the ensemble}  
            {
            \tcp {Apply CEM to refine the distribution of the neural network weight.} 
            \tcp {$\hat{\mu}_i$, $v_i$ are the mean of the refined weight distribution sequence and its expected value}
            $\hat{\mu}_i, v_i = \text{CEM}(\theta_i) $ \\
            $\delta_i = \hat{\mu}_i[0]$  \tcp*{Keep the weight at the first step and discard the rest}
           }
        $\theta_t= \argmax_{\theta_i + { \delta}_i} v_i$  \tcp*{Pick the best distribution of weight sequence}
        $a_t = p_{\theta_t}(s_t)$ \\
        $s_{t+1} = step(a_t)$ \tcp*{Execute the action returned by the policy network $p_{\theta_t}$}
        Append the transition $(s_t, a_t, s_{t+1})$ to $D_m$ \\
        Append the data $\{ {\delta}_i\}_{i=1}^{M}$ to $D_p$ \\
        $ t = t+1 $ \tcp*{Update time step}
    }
    Update the model parameter $\omega$ using dataset $D_m$ \\
    Update the policy network weights $\{\theta_i\}_{i=1}^{M}$  using dataset $D_p$ by the AVG training objective\\
 }

\caption{DecentCEM-P Training}
\label{alg:DecentCEM-P}
\end{algorithm}

\vspace{-5pt}
\section{Implementation Details}
\label{ap:implementation}

\subsection{Reproducibility}
\label{ap:random}
Our implementation is fully reproducible by identifying the sources of randomness and controlling the random seeds as summarized in Table~\ref{tb:randomseed}. The seeds are set once at the beginning of the experiments.

\begin{table}[htbp]
 \footnotesize
  \caption{Random Seed. The set \{1,2,3,4,5\} refers to the seeds for five runs. Note that we control the random seed for the environments since there is a random number generator in openai gym environments independent from other sources}
  \label{tb:randomseed}
 \centering
 \begin{tabular}{ll}
    \toprule
    Source of randomness & Random Seed \\
    \midrule
     Tensorflow & \multirow{3}*{\{1,2,3,4,5\}}   \\
    \cline{1-1}
    numpy &  \\
    \cline{1-1}
    python random module &  \\
    \hline
    the training environment & 1234 \\
    the evaluation environment & 0 \\
    \bottomrule
  \end{tabular}
  \vspace{-5pt}
\normalsize
\end{table}

\subsection{Hyperparameters}
\label{ap:hyperparam}

\begin{table}[h]
 \footnotesize
  \caption{Hyperparameters of SAC}
  \label{tb:sac-param}
 \centering
  \begin{tabular}{ll}
    \toprule
    Parameter & Value \\
    \midrule
    Actor learning rate & 0.0001 \\
    Critic learning rate & 0.0001 \\
    Actor network architecture & [$|S|$, 64, 64, 2$\times$ $|A|$]\\
    Critic network architecture & [$|S|+|A|$, 64, 64, 1]\\
    \bottomrule
  \end{tabular}
\normalsize
 \vspace{5pt}
 \footnotesize
  \caption{Hyperparameters of PETS}
  \label{tb:pets-param}
  \begin{tabular}{ll}
    \toprule
    Parameter & Value \\
    \midrule
    Model learning rate & 0.001 \\
    Warmup episodes & 1 \\
    Planning Horizon & 30 \\
    CEM population size & 500 (400 in PETS-reacher3D) \\
    CEM proportion of elites & $10\%$ \\
    CEM initial distribution variance & 0.25 \\
    CEM max \# of internal iterations & 5 \\
    \bottomrule
  \end{tabular}
  \normalsize
 \vspace{5pt}
 \footnotesize
  \caption{Hyperparameters of POPLINA and POPLINP}
  \label{tb:poplin-param}
  \begin{tabular}{ll}
    \toprule
    Parameter & Value \\
    \midrule
    Model learning rate & 0.001 \\
    Warmup episodes & 1 \\
    Planning Horizon & 30 \\
    CEM population size & 500 (400 in PETS-reacher3D) \\
    CEM proportion of elites & $10\%$ \\
    CEM initial distribution variance & 0.25 \\
    CEM max \# of internal iterations & 5 \\
    Policy network architecture (A) & [$|S|$, 64, 64, $|A|$]\\
    Policy network architecture (P) & [$|S|$, 32, $|A|$]\\
    Policy network learning rate & 0.001 \\
    Policy network activation function & tanh \\
    \bottomrule
  \end{tabular}
\vspace{-20pt}
\end{table}
\normalsize

This section includes the details of the key hyperparameters used in the baseline algorithms \textit{PETS} (Table \ref{tb:pets-param}), \textit{POPLIN-A/P} (Table \ref{tb:poplin-param}) and \textit{SAC}\footnote{Our SAC implementation used network architectures that are similar to the policy network in our method. The results of our implementation either matches or surpasses the ones reported in \citep{PETS,poplin} and \citep{mbbl}} (Table \ref{tb:sac-param}).
The proposed \textit{DecentCEM} algorithms (\emph{DecentPETS, DecentCEM-A, DecentCEM-P}) have identical hyperparameters as their corresponding baselines (\emph{PETS, POPLIN-A, POPLIN-P}) except for an additional ensemble size parameter.
The hyperparameter search for the ensemble size is performed by sweeping through the set $\{2,3,4,5,6\}$ for each environment. 
For the neural network architecture for the dynamics model, the \textit{DecentCEM} methods exactly follow the original one in \textit{PETS} and \textit{POPLIN} for a fair comparison, which is an ensemble of fully connected networks.

\section{Full Results}
\label{ap:results}
\subsection{Detailed visualization of the iterative updates in the one-dimensional optimization task}
Figure \ref{fig:motivation_1D_full_compare} is a version of Figure \ref{fig:vis1d} with more iterations.
It shows the iterative sampling process of CEM methods in the 1D optimization task and how the sampling distributions evolve over time.

\begin{figure*}[tbp]
    \centering
    \includegraphics[width=0.9\textwidth]{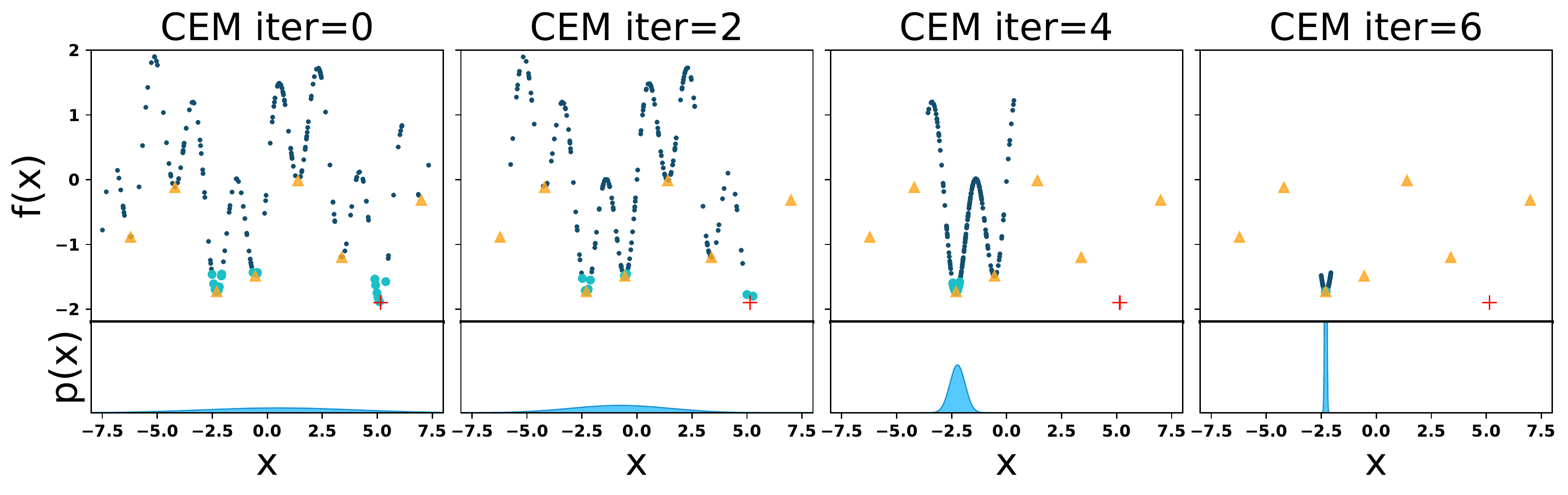}\\
    \includegraphics[width=0.9\textwidth]{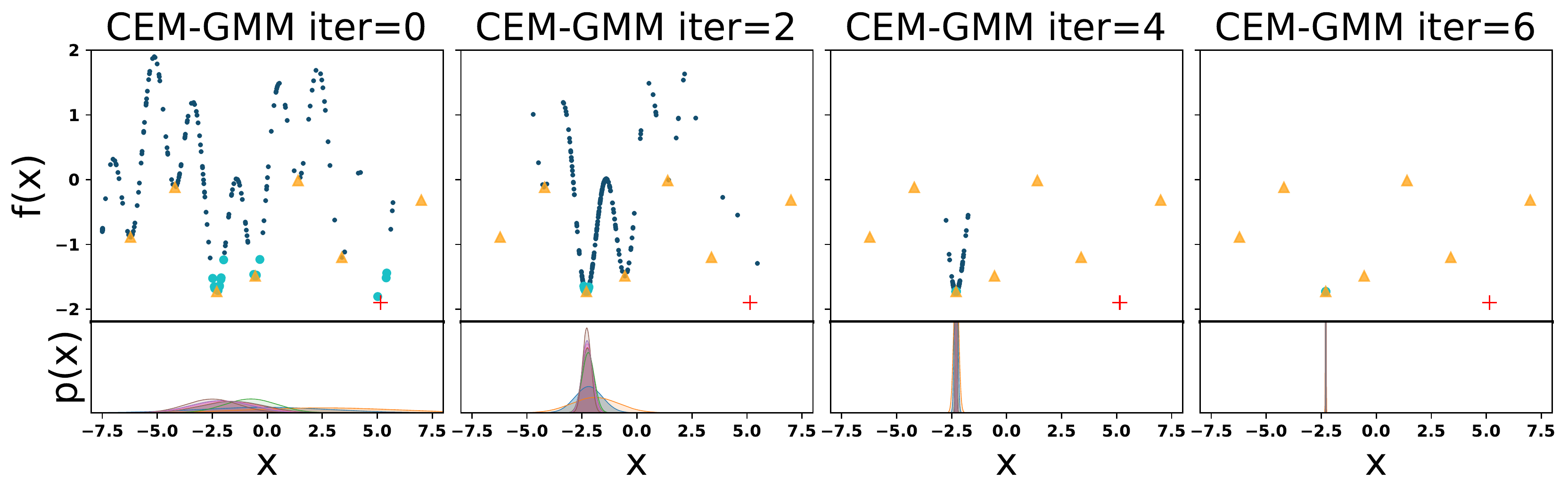}\\
    \includegraphics[width=0.9\textwidth]{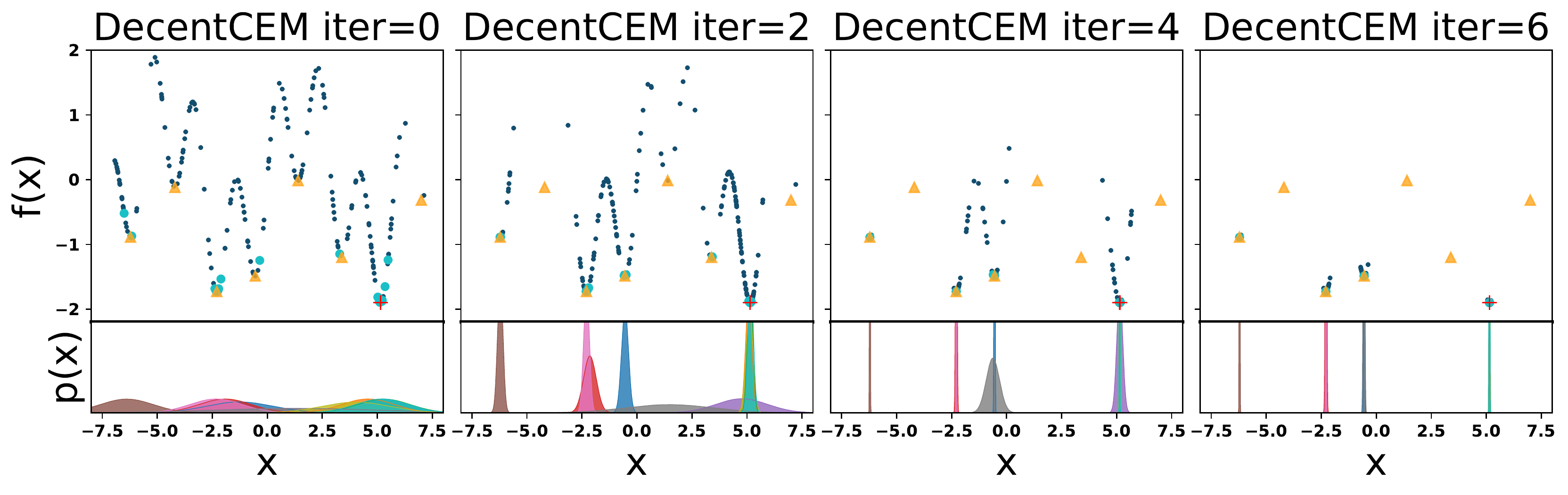}
    \caption{The iterative sampling process in the one-dimensional optimization task.}
    \label{fig:motivation_1D_full_compare}
\end{figure*}

\begin{figure*}[tbp]
	\centering
	\begin{tabular}{@{}p{47mm}@{}p{47mm}@{}p{47mm}}
		{\includegraphics[height=3.7cm]{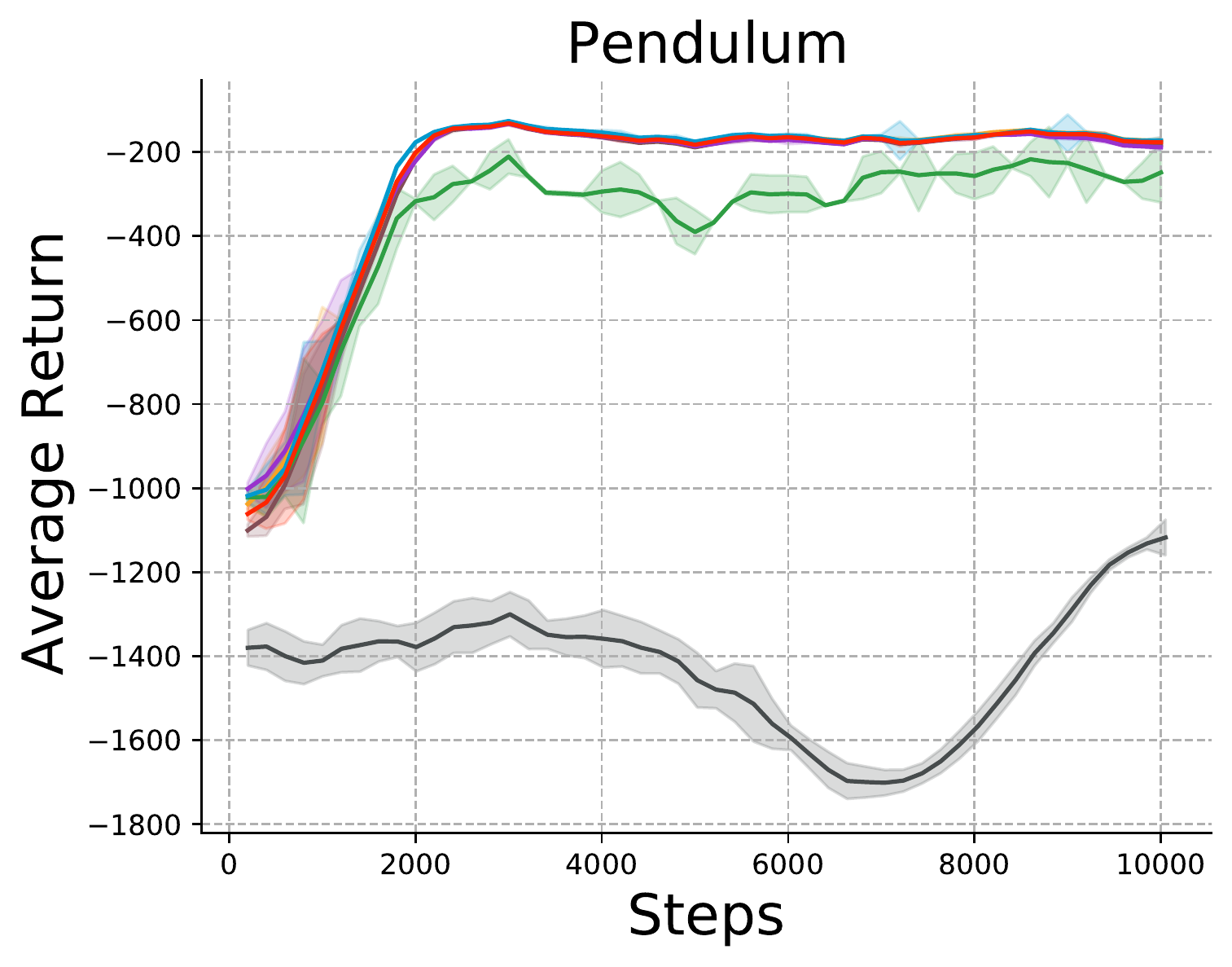}}
		&{\includegraphics[height=3.7cm]{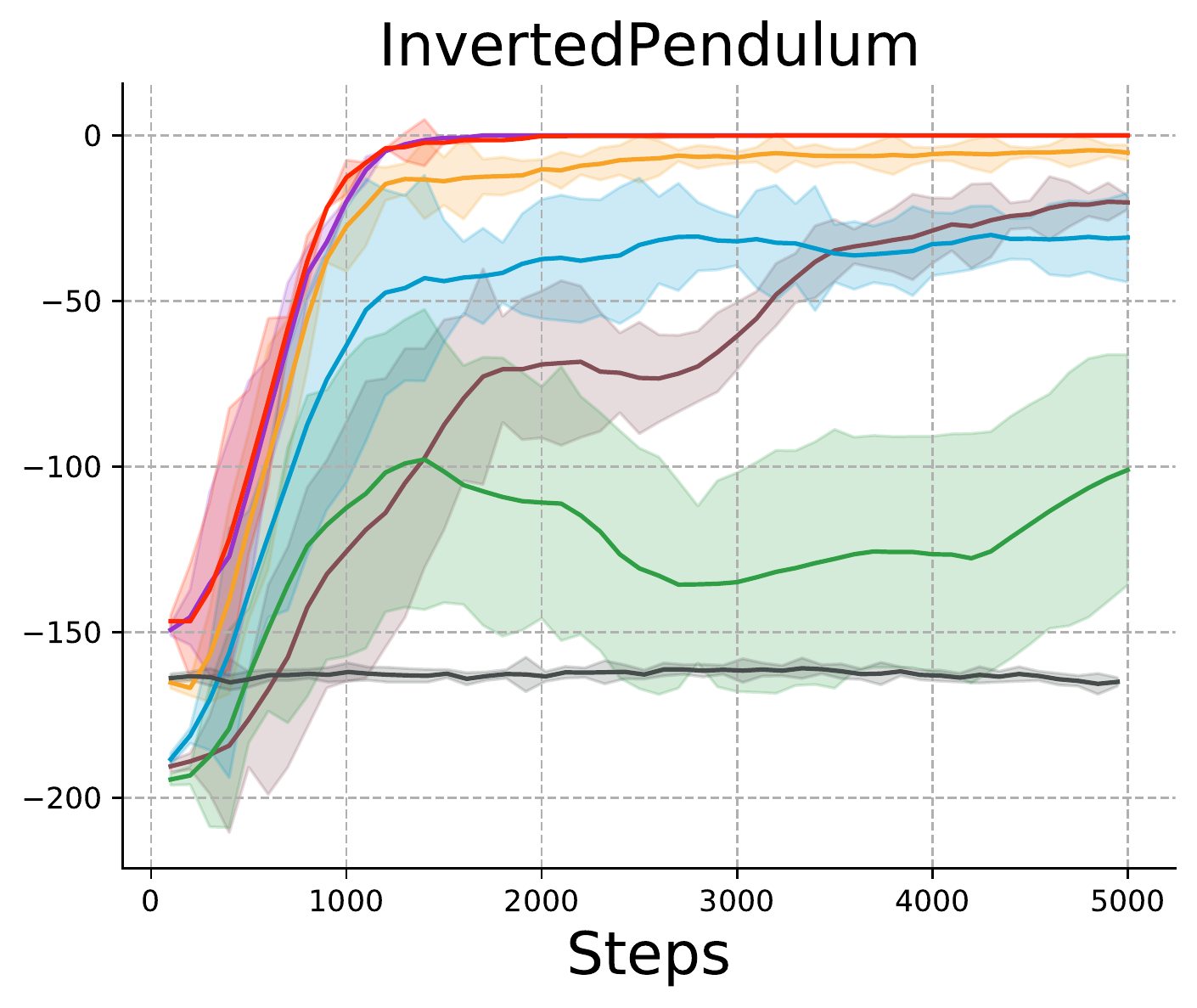}}
		&{\includegraphics[height=3.7cm]{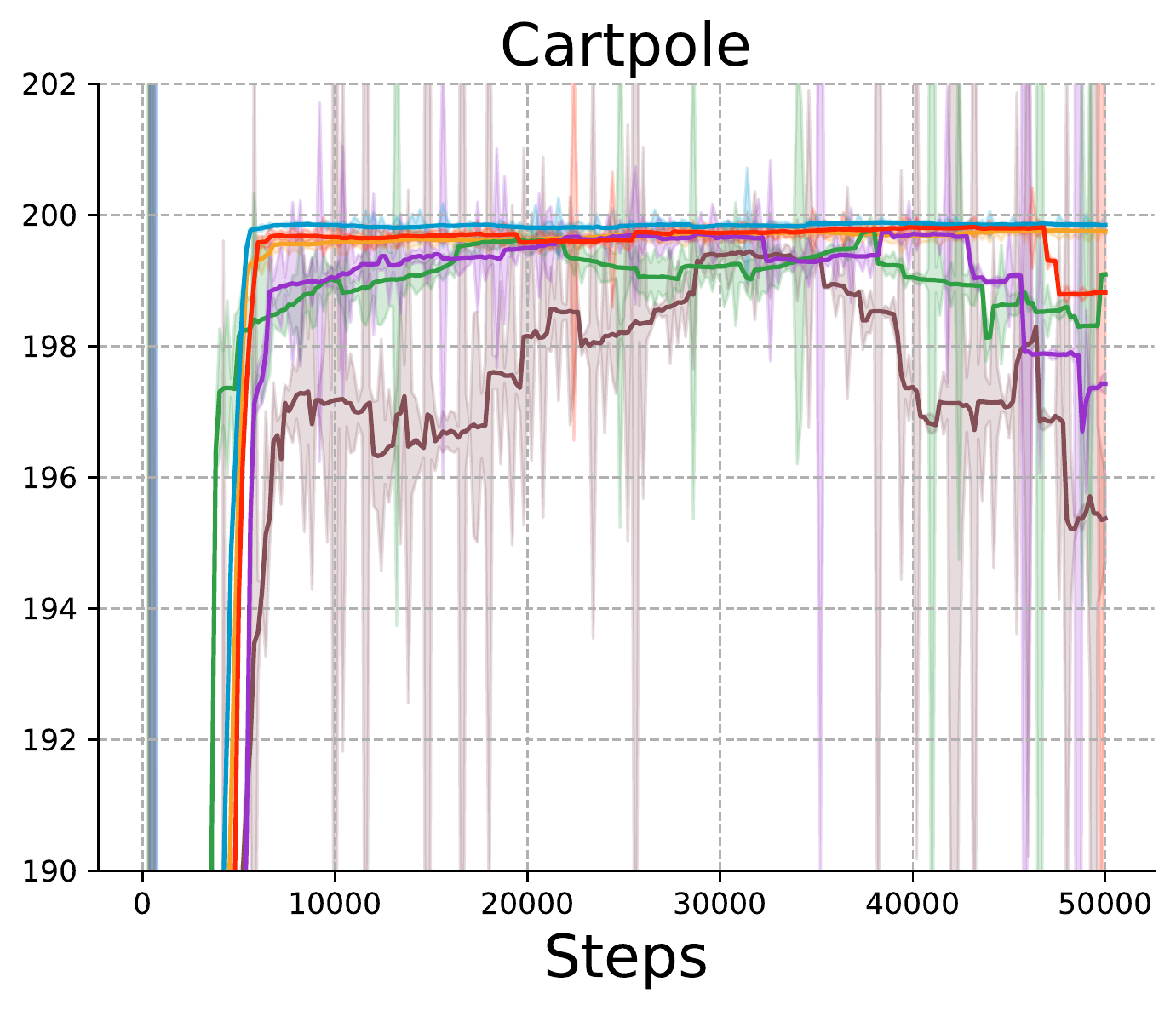}}
	\end{tabular}
	\begin{tabular}{@{}p{47mm}@{}p{47mm}@{}p{47mm}}
		{\includegraphics[height=3.7cm]{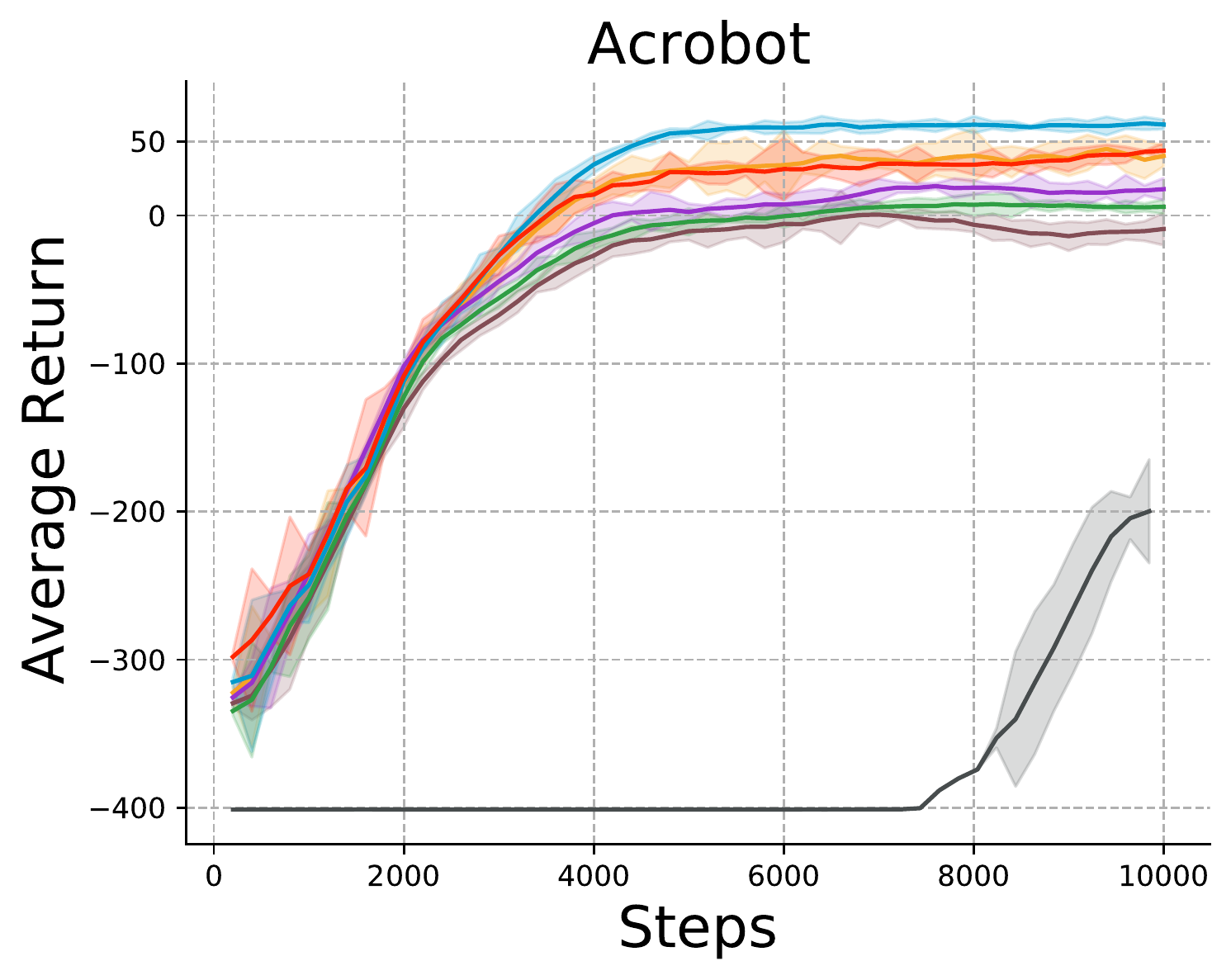}}
		&{\includegraphics[height=3.7cm]{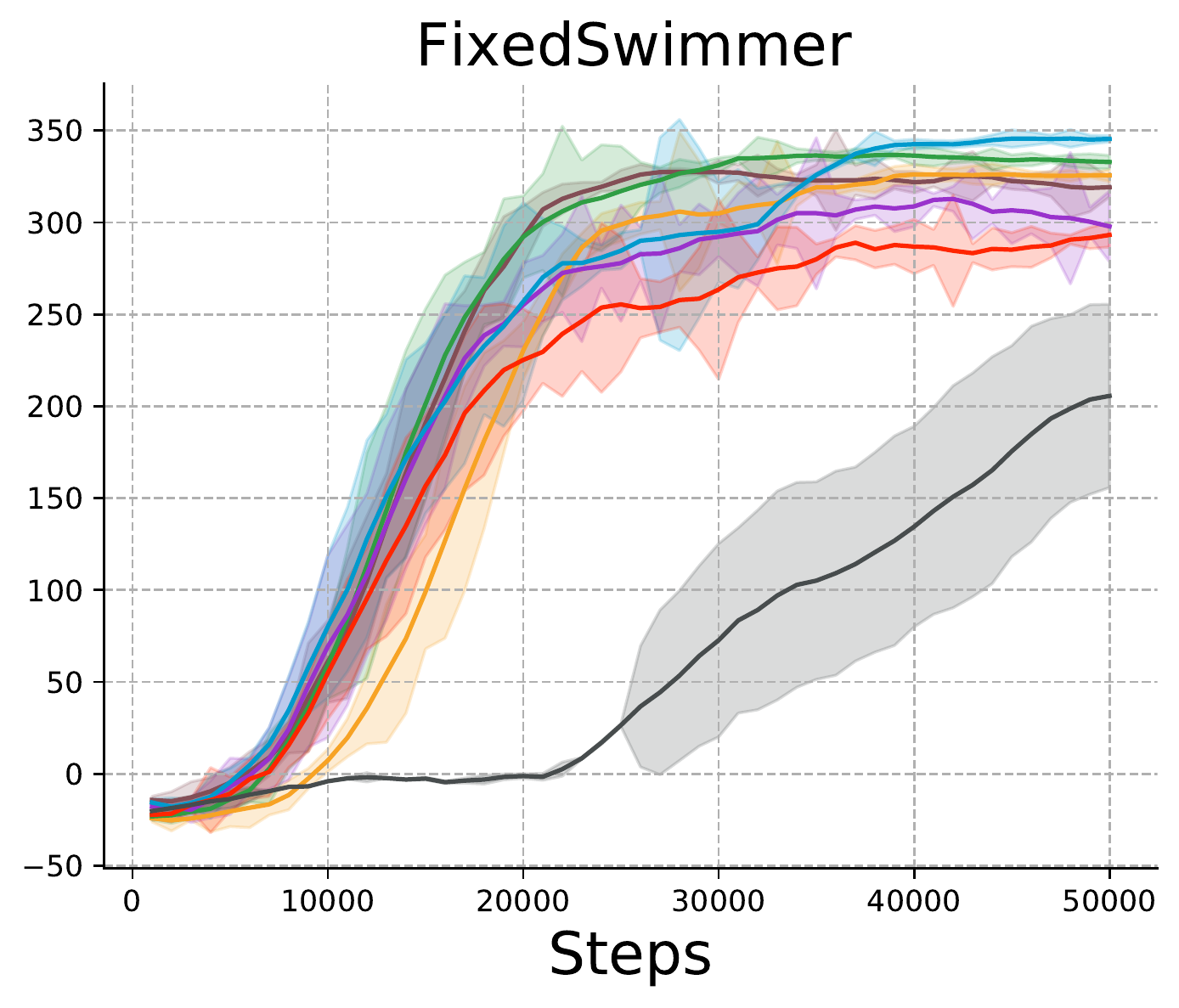}}
		&{\includegraphics[height=3.7cm]{figs/gym_reacher_20_smooth_noylabel.pdf}}
	\end{tabular}
	\begin{tabular}{@{}p{47mm}@{}p{47mm}@{}p{47mm}}
		{\includegraphics[height=3.7cm]{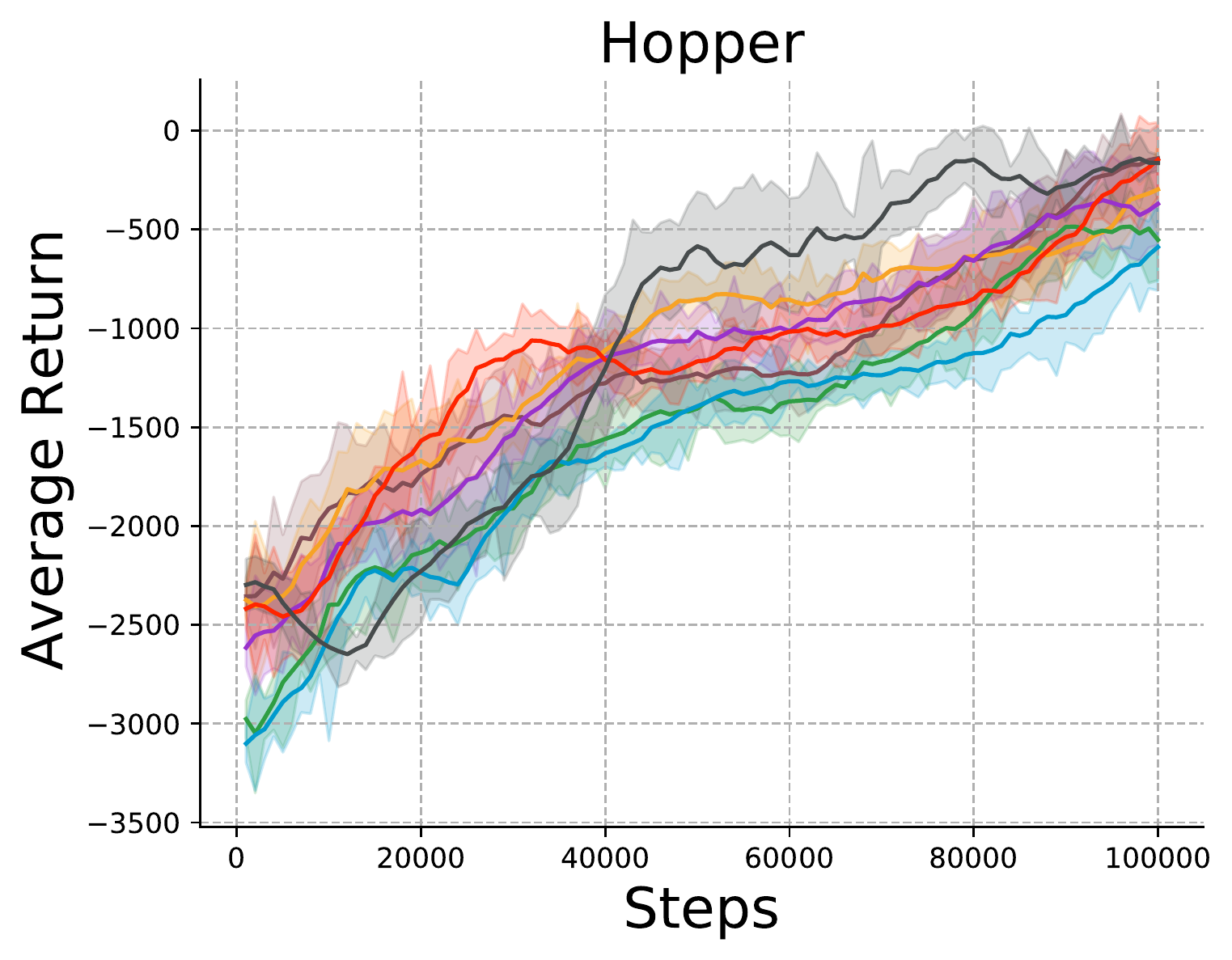}}
		&{\includegraphics[height=3.7cm]{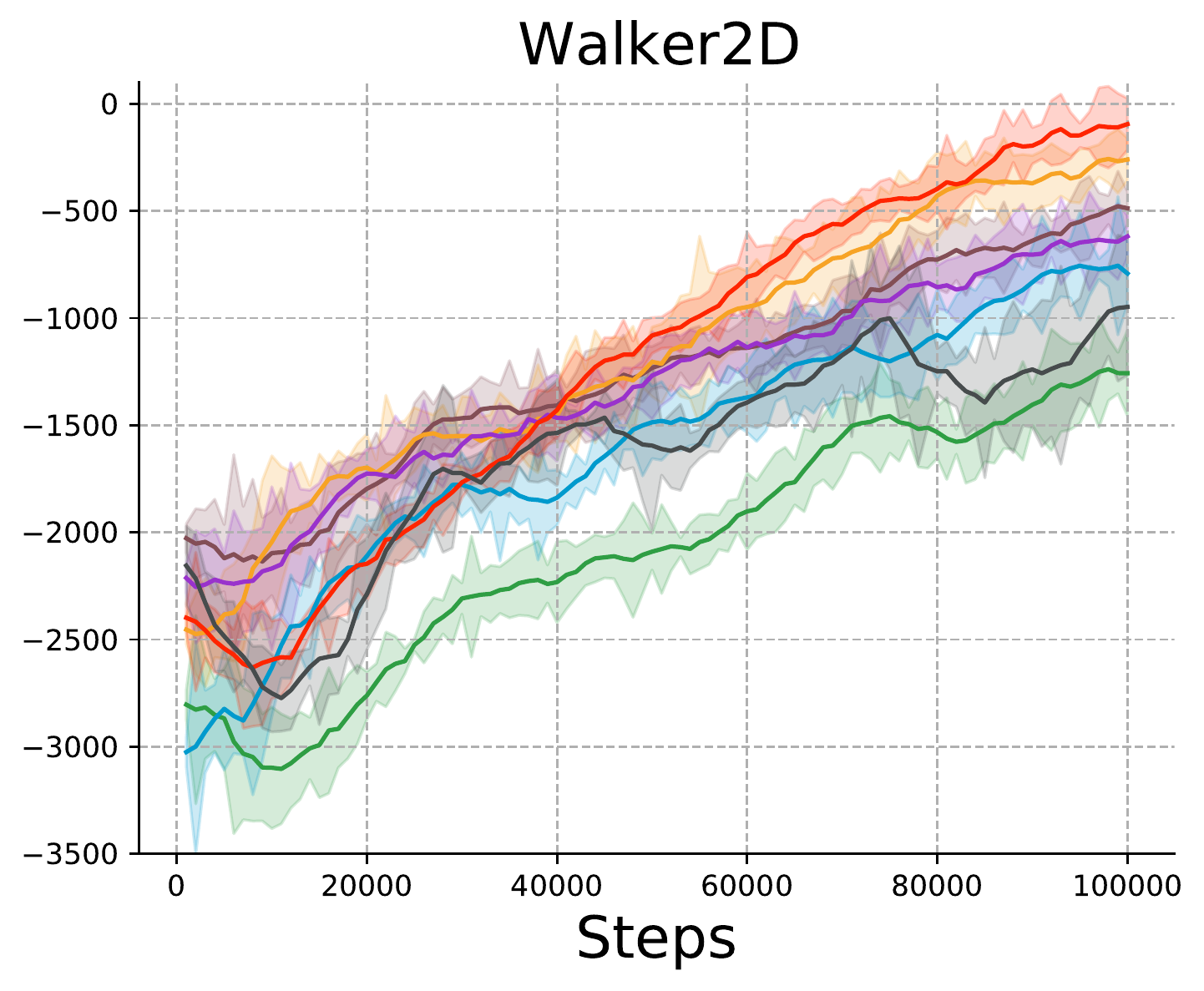}}
		&{\includegraphics[height=3.7cm]{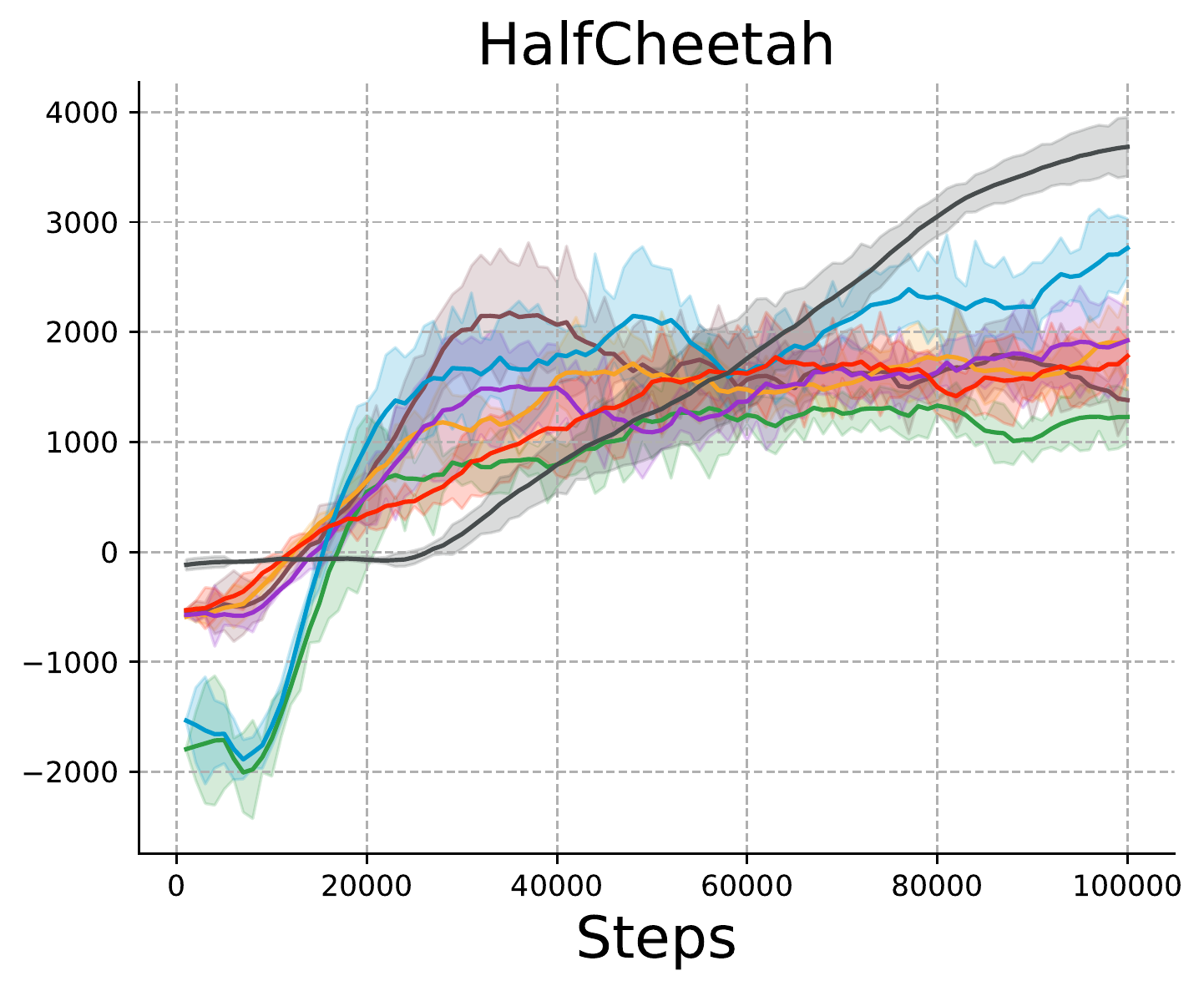}}
	\end{tabular}
	\begin{tabular}{@{}p{47mm}@{}p{47mm}@{}p{47mm}}
		{\includegraphics[height=3.7cm]{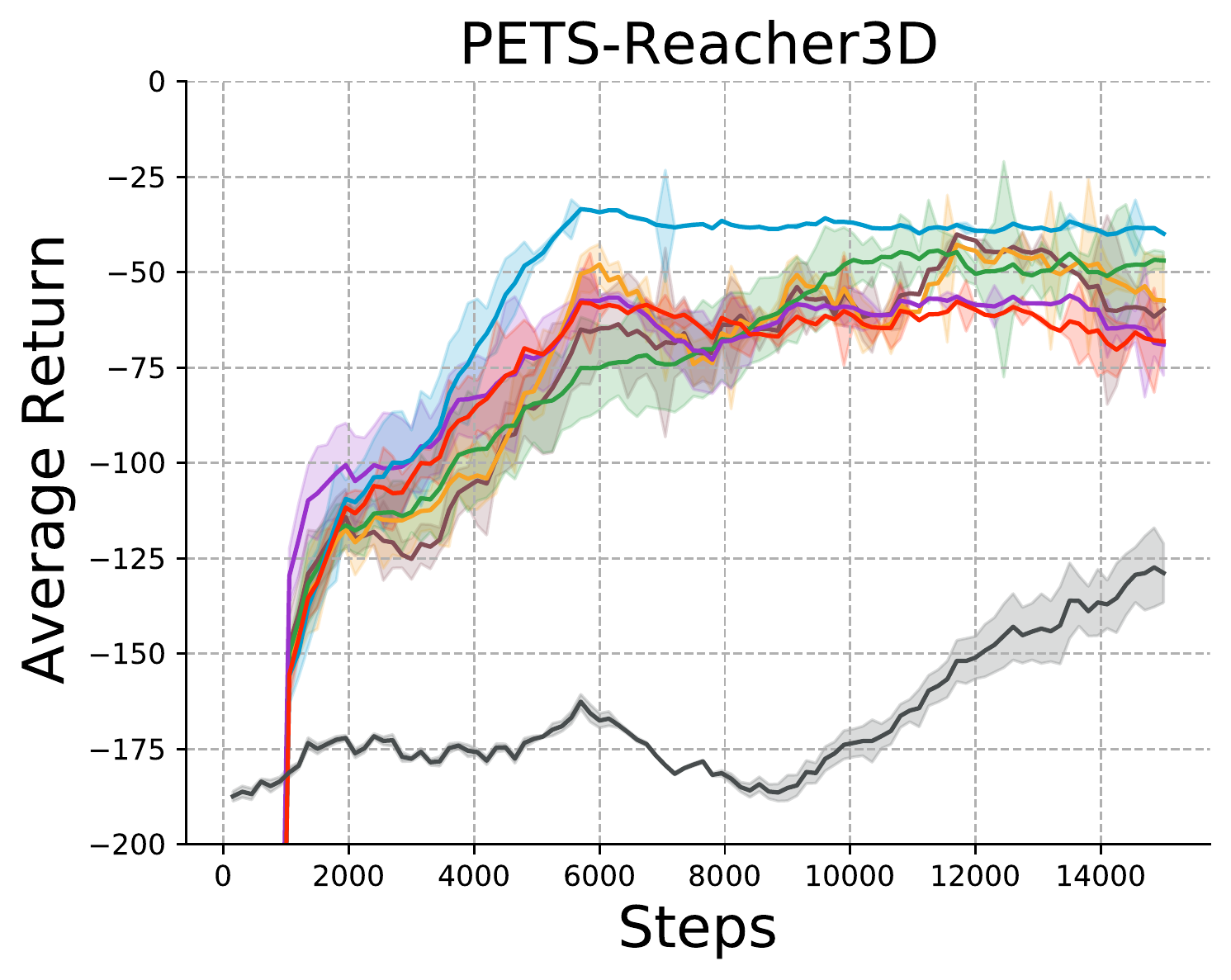}}
		&{\includegraphics[height=3.7cm]{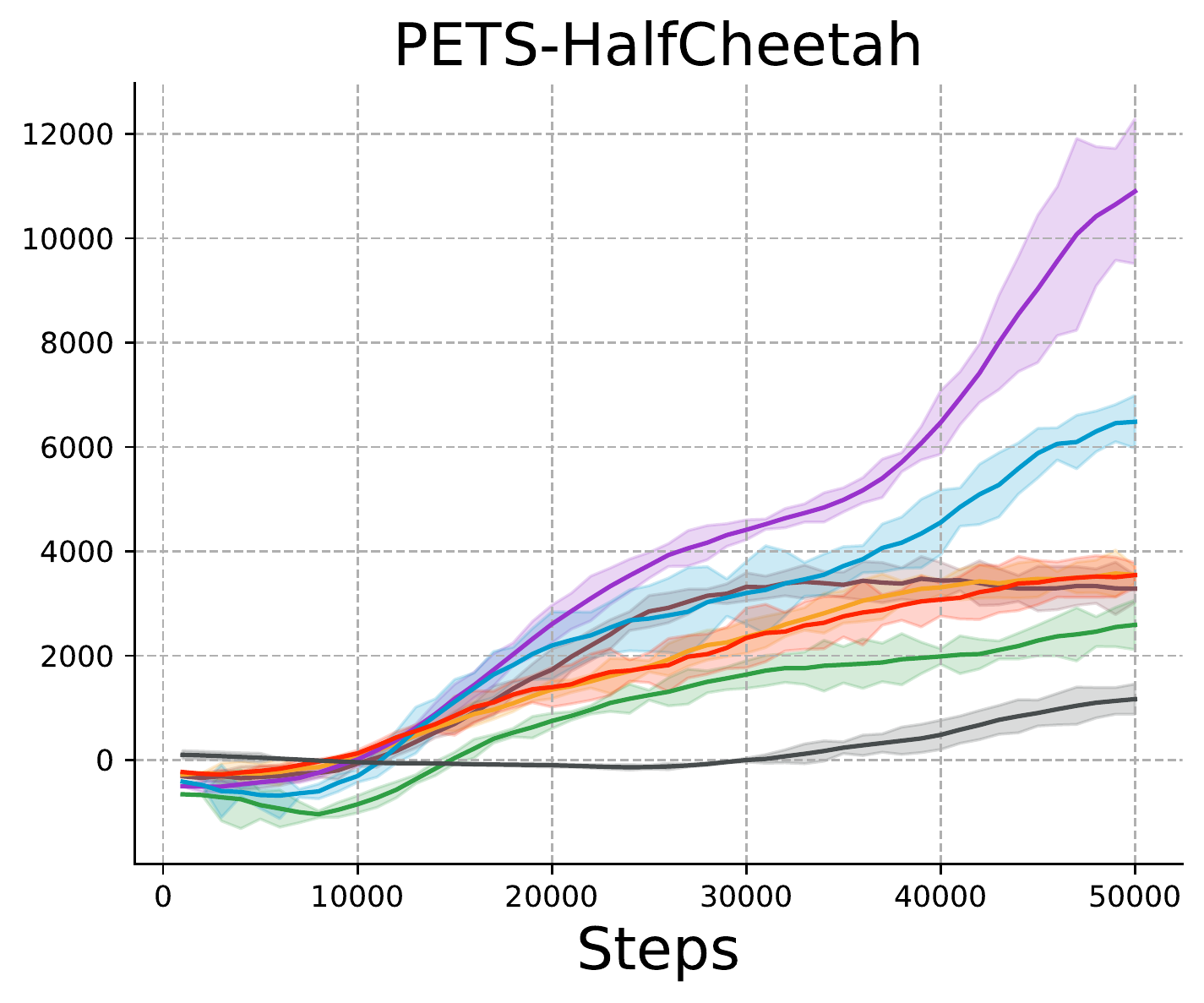}}
		&{\includegraphics[height=3.7cm]{figs/pusher_20_smooth_noylabel.pdf}}
	\end{tabular}
	\begin{tabular}{@{}p{47mm}@{}p{47mm}@{}p{47mm}}
		&
        {\includegraphics[height=3.7cm]{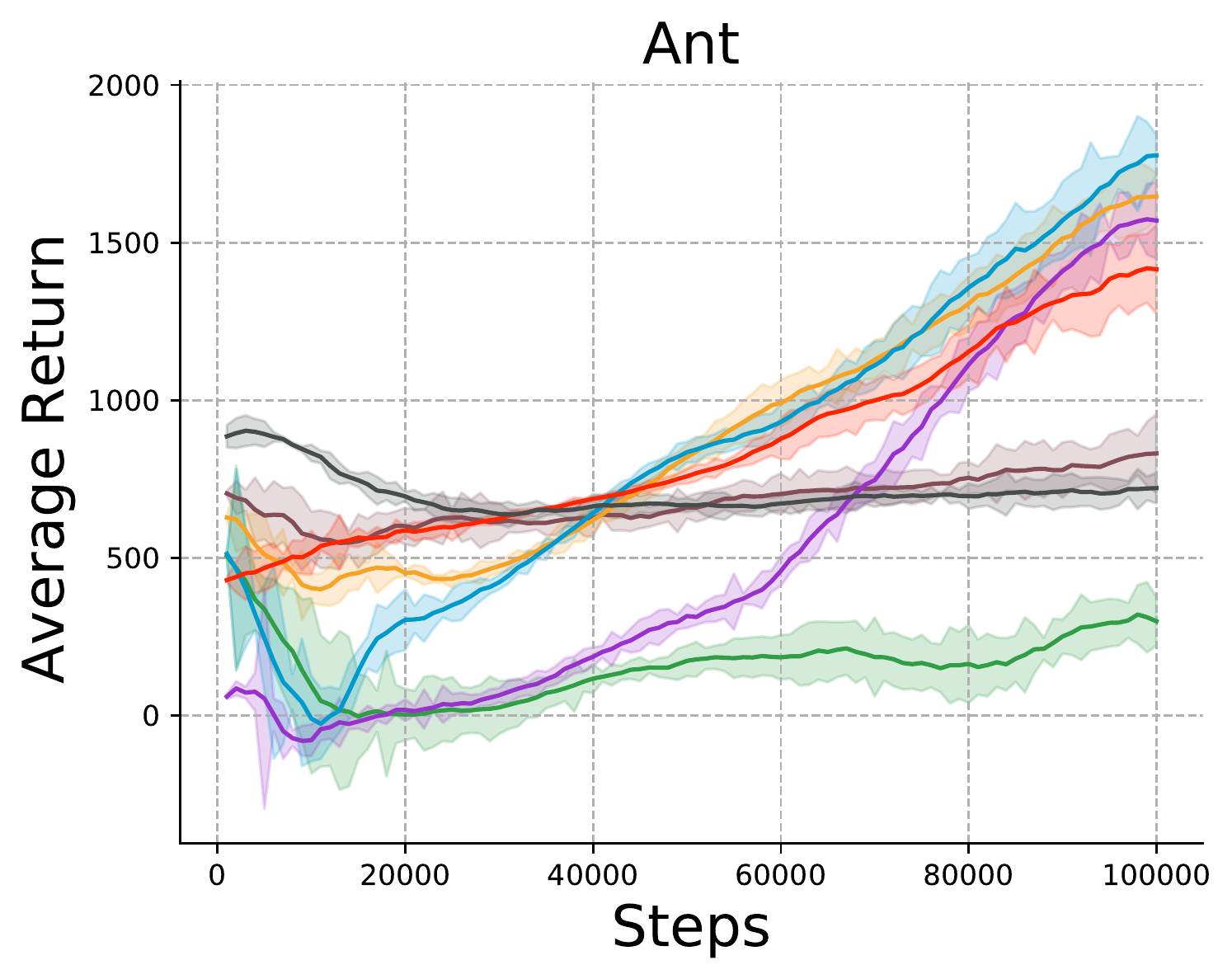}}
		&
	\end{tabular}
	\begin{tabular}{@{}c@{}}
		\includegraphics[width=0.9\textwidth]{figs/main_legend.pdf}
	\end{tabular}
	
	\caption{The learning curves of the proposed \textit{DecentCEM} methods and the baseline methods on continuous control environments. The line and shaded region shows the mean and standard error of evaluation results from 5 training runs using different random seeds. Each run is evaluated in an environment independent from training and reports average return of 5 episodes at every training episode. To ensure that the evaluation environments are the same across different methods and multiple runs, we set a fixed random seed in the evaluation environment of each task.}
    \label{fig:full-lc}
\end{figure*}

\subsection{Full Learning Curves}
In Figure \ref{fig:full-lc}, we report the learning curves in all environments listed in Appendix \ref{ap:env}.

The algorithms evaluated in the benchmark are: PETS, POPLIN-A, POPLIN-P and the proposed methods \emph{DecentPETS, DecentCEM-A} and \emph{DecentCEM-P}. 
We also included a model-free algorithm SAC as a baseline.
\emph{DecentCEM} subsumes POPLIN and they are equivalent when the ensemble size is one. The same applies to \emph{DecentPETS} and PETS. To distinguish them in the learning curves and discussions, we show the \emph{DecentCEM} results from an ensemble size larger than one.

The learning curves in some environments can be noisy. We apply smoothing with 1D uniform filter.
The window size of the filter was 10 for all but Cartpole, where 30 was used due to its large noise for all algorithms. 

Note that the performance of the baseline methods may be different from the results reported in their original paper. 
Specifically, in the paper by \cite{poplin}, PETS, POPLIN-A and  POPLIN-P have been evaluated in a number of environments that we use for the benchmark.
Our benchmark results may not be consistent with theirs due to differences in the implementation and evaluation protocol.
For example, our results of PETS, POPLIN-A and POPLIN-P in the Acrobot environment are all better than the results in \cite{poplin}.
We have identified a bug in the POPLIN code base that causes the evaluation results to be on a wrong timescale that is much slower than what it actually is. Hence the results of our implementation look far better, reaching a return of 0 at about 4k steps as opposed to 20k steps reported in \cite{poplin}. %

\subsection{Analysis}
Let's group the environments into two categories based on how well the \textit{decentralized} methods perform in them:
\begin{enumerate}
    \item Pendulum, InvertedPendulum, Acrobot, Cartpole, FixedSwimmer, Reacher, Walker2D, PETS-Pusher,  PETS-Reacher3D, Ant
    \item Hopper, HalfCheetah, PETS-HalfCheetah
\end{enumerate}

The first category is where the best performing method is one of the proposed \textit{decentralized} algorithms: \emph{DecentPETS}, \textit{DecentCEM-A} or \textit{DecentCEM-P}.
In environments where the baseline \emph{PETS}, \textit{POPLIN-A} or \textit{POPLIN-P} could reach near-optimal performance such as pendulum and invertedPendulum, applying the ensemble method would yield similar performance as before. It is also evident from InvertedPendulum results that the decentralized version significantly improves over the centralized algorithm where the performance of the latter is poor.
In Cartpole, Acrobot, Reacher, Walker2D and PETS-Pusher, applying the decentralized approach increases the performance in both action space planning (``A'') and parameter space planning (``P'').
In Pendulum, InvertedPendulum, FixedSwimmer, PETS-Reacher3D and Ant, ensemble helps in the action space planning but either has no impact or negative impact on the parameter space planning.

The second category is where it is better not to use a \textit{decentralized} approach with multiple instances (note that the decentralized methods with one instance fall back to one of \emph{PETS}, \textit{POPLIN-A}, \textit{POPLIN-P}).
In Hopper and HalfCheetah, the issue might lie in the model rather than planning since all MBRL baselines performed worse than the model-free baseline SAC.
In HalfCheetah, \textit{DecentCEM-A} in fact performs the best out of all model-based methods but it falls behind SAC.
One possible issue is that the true dynamics is difficult to approximate with our model learning approach. Another possibility is that it may be necessary to learn the variance of the sampling distribution, which none of these model-based approaches do. To be clear, the variance is \emph{adapted} online by CEM but it is not \emph{learned}.
PETS-HalfCheetah is slightly different in that the ensemble does improve the performance significantly when used for action space planning. However,\textit{ POPLIN-P} performs significantly better than all other algorithms. This suggests that the parameter space planning has been able to successfully find a high return region using a single Gaussian distribution. In this case, distributing the population size would not be able to trade the estimation accuracy for a better global search.

One interesting phenomenon is that DecentPETS performs better than or comparably as PETS in \emph{all} environments in both categories except in Hopper (where they are quite close as well). This suggests that when not using a learned neural network to initialize the distribution in CEM, decentralizing the samples is an effective technique to achieve an improvement of the optimization performance.

\section{More Ablations}
\label{ap:ab}

\begin{figure}[bht]
	\centering
	\includegraphics[trim=0.2cm 0 0.3cm 0, clip,height=5cm]{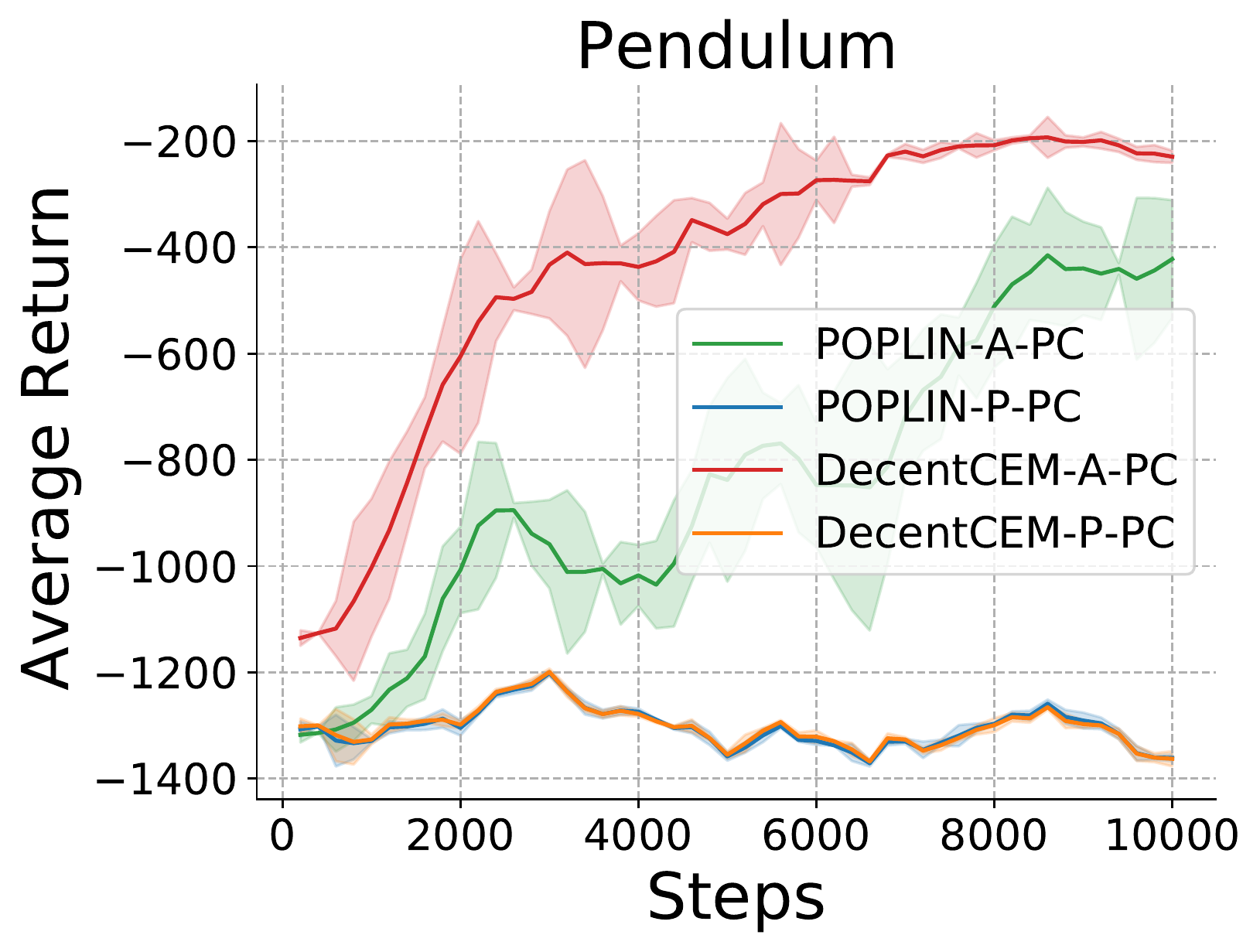} 
	\caption{Policy control ablation 
   where the policy network is directly used for control without CEM policy improvement}
	\label{fig:pendulum-pc}
\end{figure}

We study the performance of policy control where the policy network is directly used for control without the CEM step, denoted by the extra suffix ``-PC''.
The result is shown in Fig.~\ref{fig:pendulum-pc}.
Without the policy improvement from CEM, 
all algorithms perform worse than their counter-part of using CEM.
\textit{POPLIN-P-PC} and \textit{DecentCEM-P-PC} both get stuck in local optima and do not perform very well. This makes sense since the premise of planning in parameter space is that CEM can search more efficiently there. The policy network is not designed to be used directly as a policy.
Interestingly, \textit{DecentCEM-A-PC} achieves a high performance from about 7k steps (35 episodes) of training.
The ensemble of policy networks seems to add more robustness to control compared to using a single one.

\begin{figure}[h]
  \centering
\subfigure[Selection Ratios]{\includegraphics[height=3.6cm]{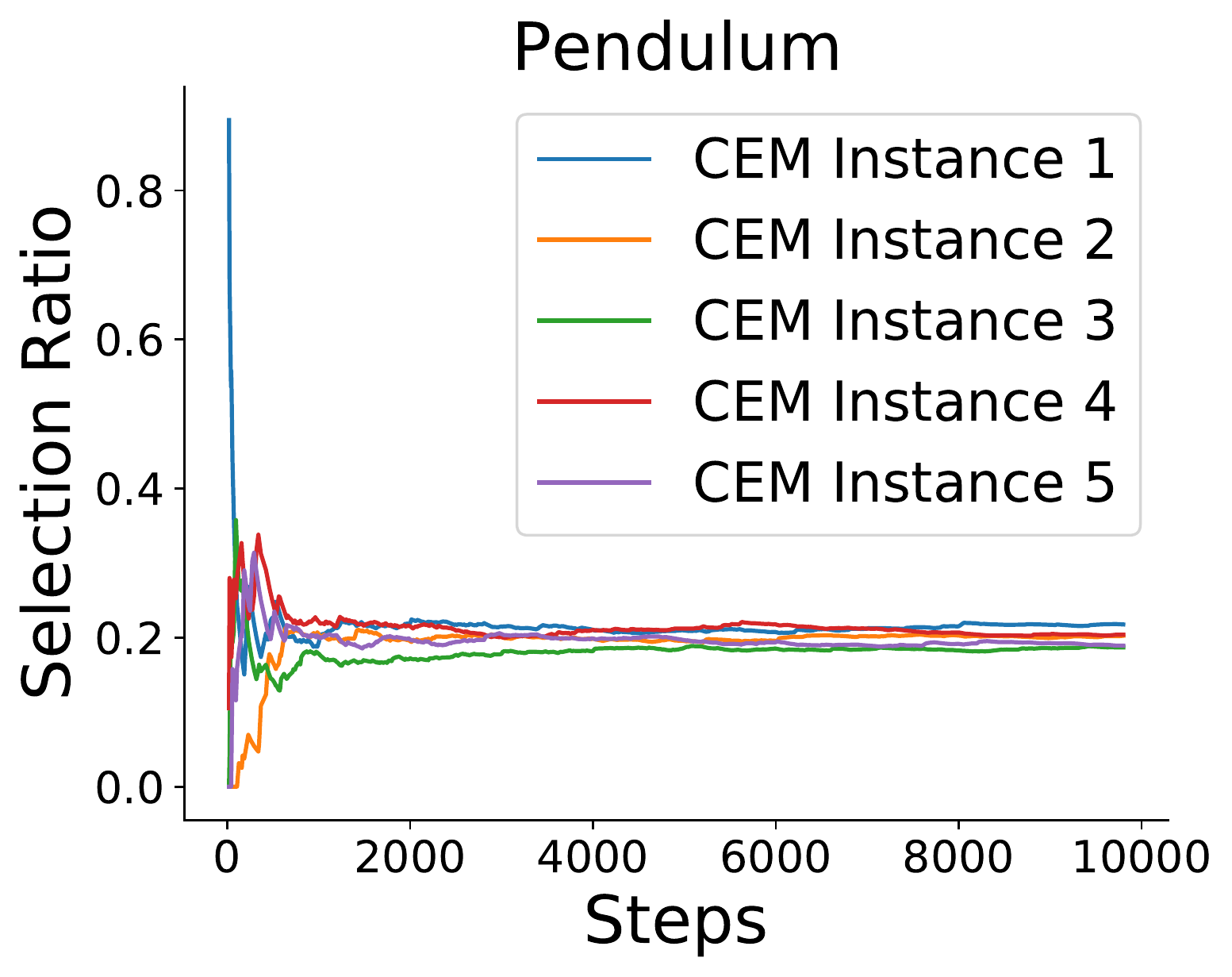}\label{fig:selection-ratio-pets}}    
\subfigure[Action Statistics]{\includegraphics[height=3.6cm]{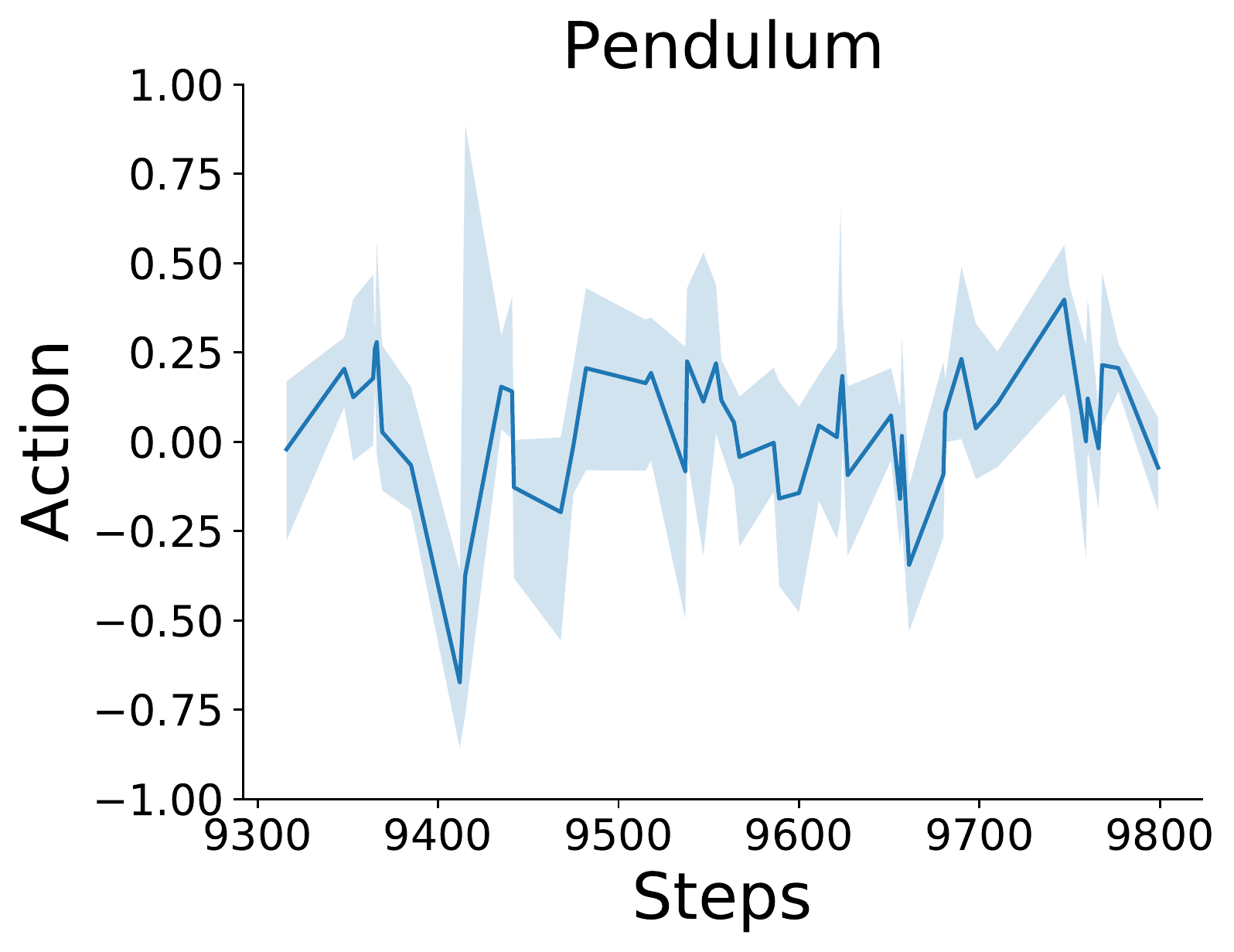}\label{fig:action-stats-pets}}
\subfigure[Action Distance Statistics]{\includegraphics[height=3.6cm]{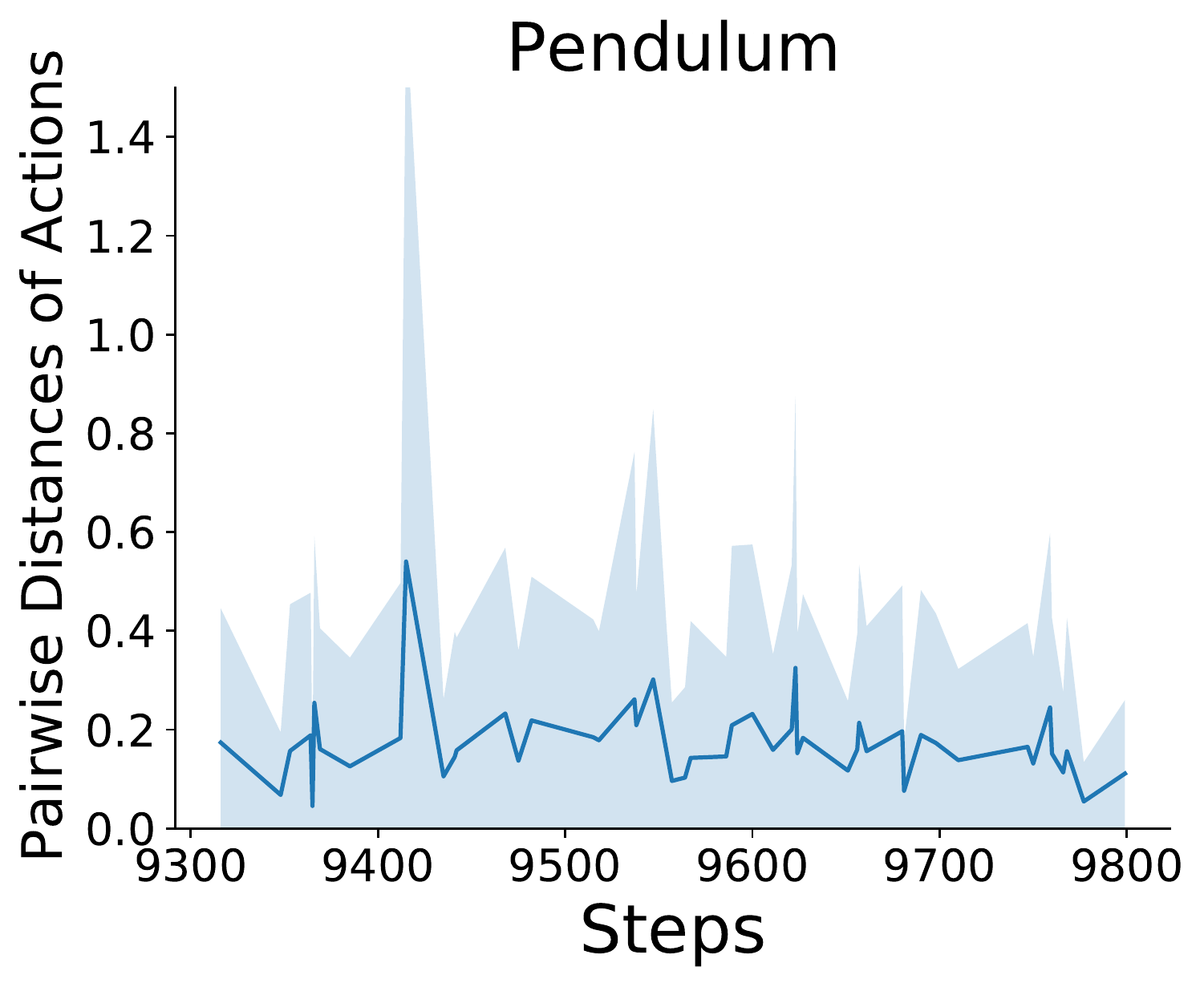} \label{fig:action-pair-pets}}
\caption{\footnotesize Ablation of ensemble diversity in Pendulum during training of \textit{DecentPETS} with 5 instances. (a) Cumulative selection ratio of each CEM instance. (b)(c) Statistics of the actions and pairwise action distances of the instances, respectively. The line and shaded region represent the mean and min/max.\normalsize}
 \label{fig:ensemble-diversity-pets}
\end{figure}

\begin{figure}[h]
  \centering
\subfigure[Selection Ratios]{\includegraphics[height=3.6cm]{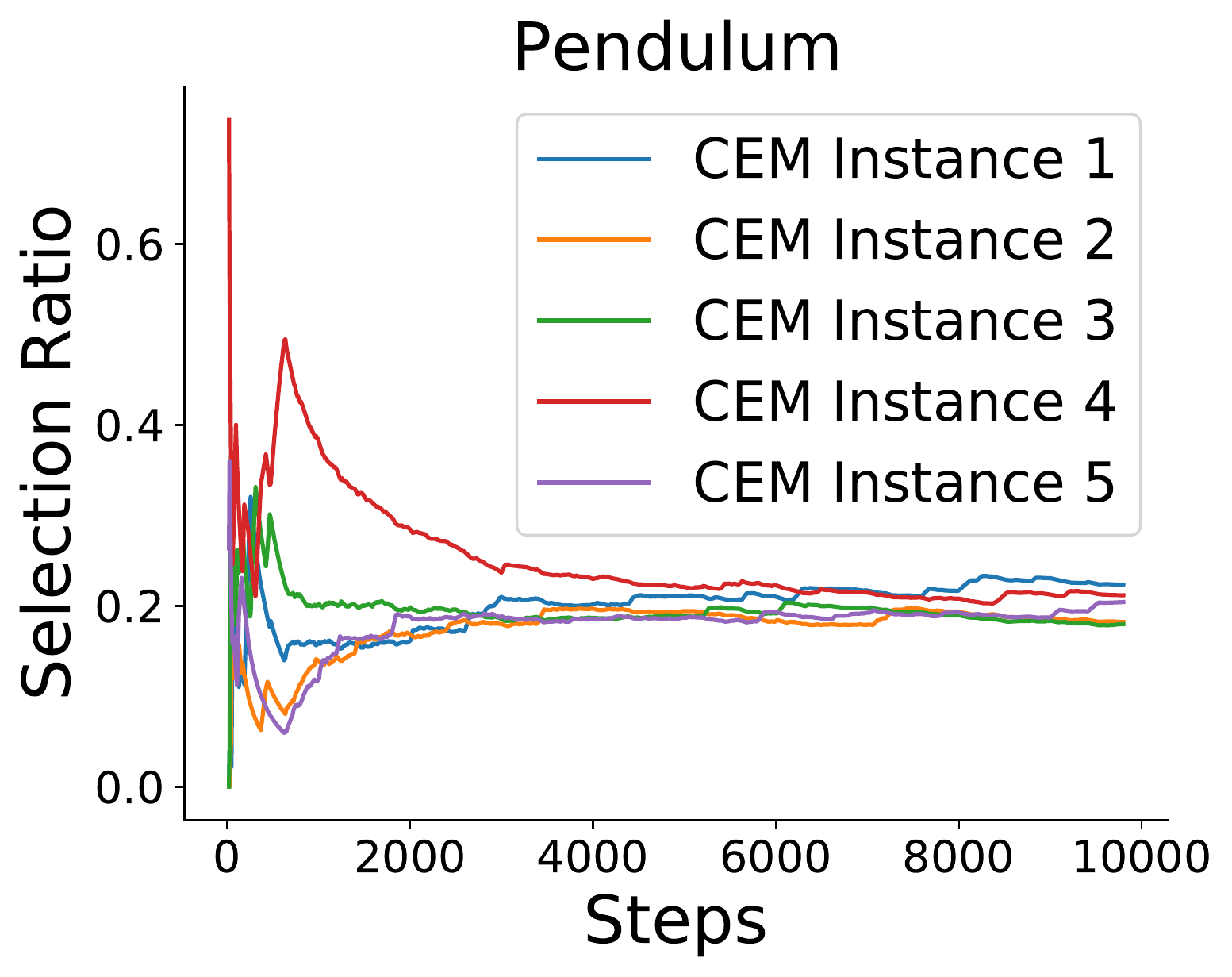}\label{fig:selection-ratio-p}}    
\subfigure[Action Statistics]{\includegraphics[height=3.6cm]{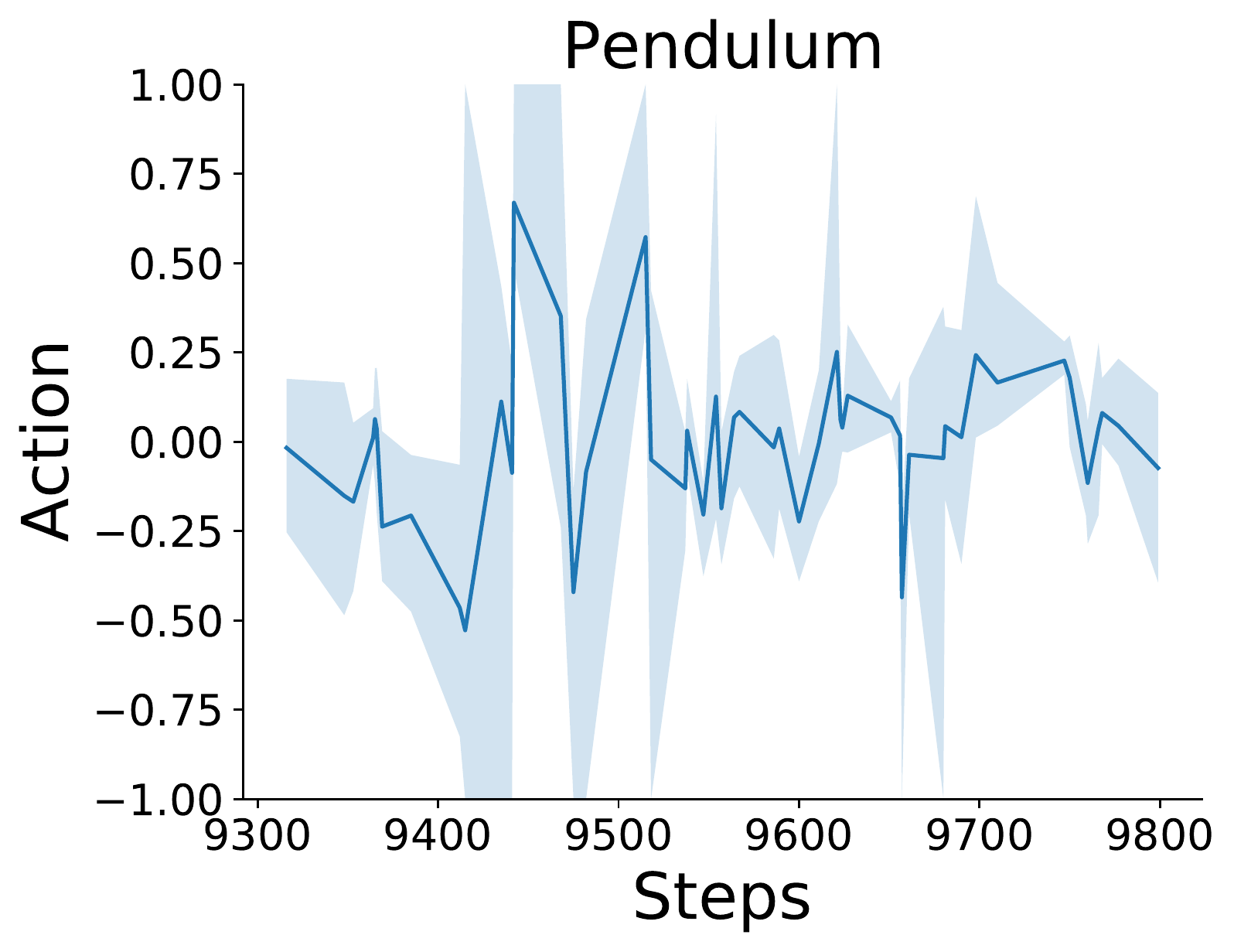}\label{fig:action-stats-p}}
\subfigure[Action Distance Statistics]{\includegraphics[height=3.6cm]{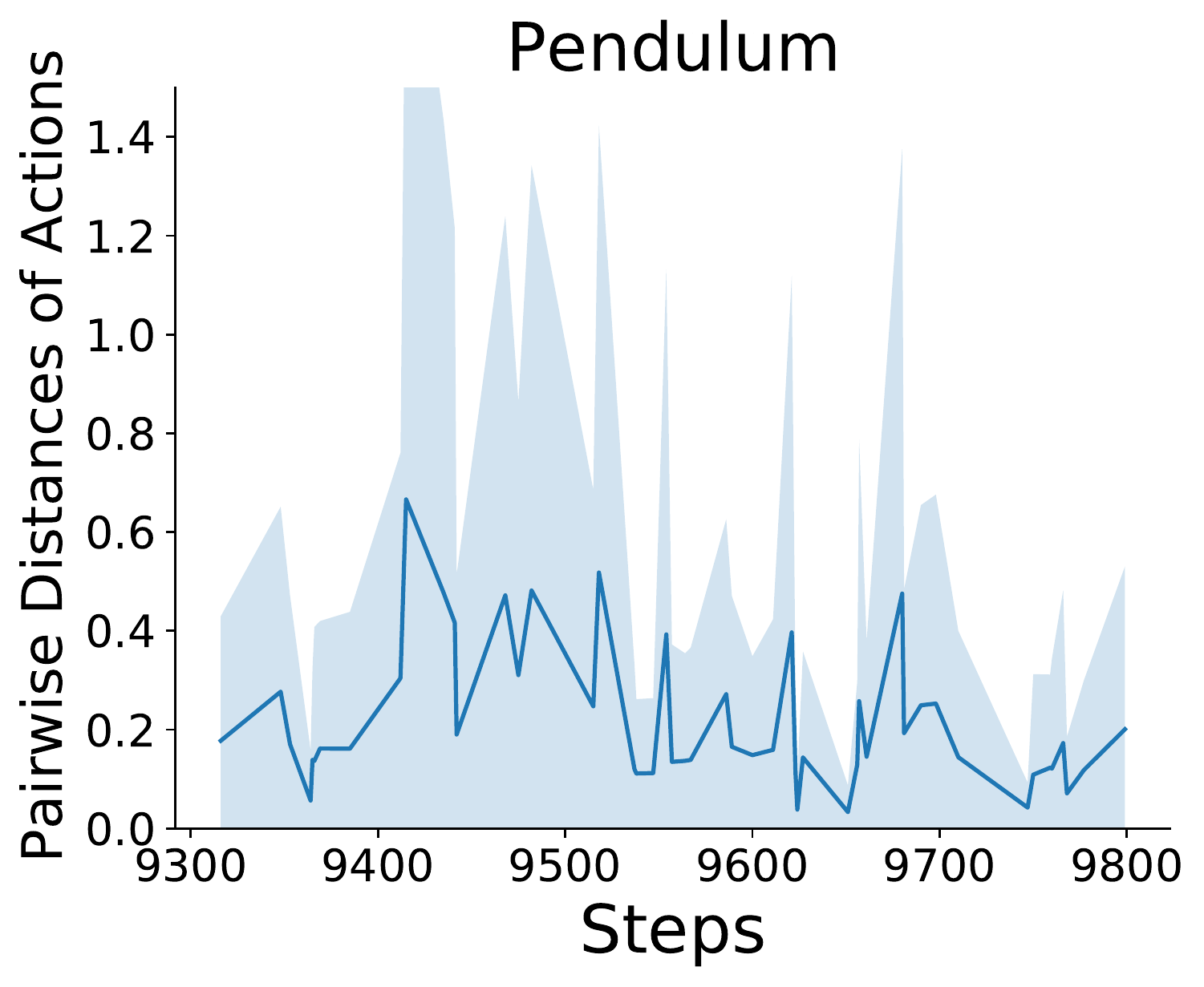} \label{fig:action-pair-p}}
\caption{\footnotesize Ablation of ensemble diversity in Pendulum during training of \textit{DecentCEM-P} with 5 instances. (a) Cumulative selection ratio of each CEM instance. (b)(c) Statistics of the actions and pairwise action distances of the instances, respectively. The line and shaded region represent the mean and min/max.\normalsize}
 \label{fig:ensemble-diversity-p}
\end{figure}

Figure~\ref{fig:ensemble-diversity-pets} and \ref{fig:ensemble-diversity-p} are additional plots for the ensemble diversity ablation. 
They show the results for DecentPETS and DecentCEM-P, respectively.
The same as in Fig.~\ref{fig:ensemble-diversity}~(b)(c), we only show a time window toward the end of training for visual clarity and the line and shaded region represent the mean and min/max.
Comparing the action and action distances statistics of the three algorithms shown in Fig.~\ref{fig:ensemble-diversity}~(b)(c),~\ref{fig:ensemble-diversity-pets}~(b)(c),~\ref{fig:ensemble-diversity-p}~(b)(c),
the actions from \emph{DecentPETS} cover a smaller range of values compared with those from \emph{DecentCEM-A/P}. This suggests that the use of policy networks in the multiple instances add more exploration without sacrificing performance thanks to the $\argmax$.

\section{Overhead of the Ensemble}
\label{ap:overhead}

The sample efficiency is not impaired when going from one policy network to the multiple policy networks used in \textit{DecentCEM-A }and \textit{DecentCEM-P}.
This is because that the generation of the training data only involve taking imaginary rollouts with the model, rather than interacting with the real environment, as discussed in Section \ref{sec:train-policy}.

In terms of the population size (number of samples drawn in CEM), the \textit{DecentCEM} methods do not impose additional cost.
We show in both the motivational example (Sec.~\ref{sec:optimization-1d}) and the benchmark experiments (Appendix \ref{ap:results}) that the proposed methods work better than \emph{CEM} under the same total population size.

The additional computational cost is reasonable in \textit{DecentCEM} compared to \textit{POPLIN}.
Each branch of policy network and CEM instance runs independently from the others, allowing for a parallel implementation.
The instances have to be synchronized ($\argmax$) but its additional cost is minimal. 
One caveat with our current implementation though is that the instances run serially, which slows down the speed. 
This is not a limitation of the method itself though and the speed loss can be alleviated by a parallel implementation.

\section{Convergence Analysis of Decentralized CEM}
\label{ap:convergence-analysis}
This section analyzes the convergence of the proposed DecentCEM algorithm in optimization.

Consider the following optimization problem:
\begin{equation}
x^* \in \argmax_{x\in \mathcal{X}} V(x)   
\label{eq:obj}
\end{equation}
where $\mathcal{X} \subset{\mathbb{R}^n}$ is a non-empty compact set and $V(\cdot)$ is a bounded, deterministic value function to be maximized. We assume that this problem has a unique global optimal solution $x^*$ but the objective function $V(\cdot)$ may have multiple local optimum and may not be continuous.

We will show that the existing convergence result of CEM in continuous optimization established in \citep{hu2011stochastic} also applies to DecentCEM.
It assumes that the sampling distribution $g_\phi(x)$ in CEM is in the natural exponential families (NEFs) which subsumes Gaussian distribution (with known covariance). We restate the definition of NEFs for completeness (definition 2.1 in \citep{hu2011stochastic}):
\begin{definition}[Natural Exponential Family]
A family of parameterized distributions $\{g_\phi(\cdot), \phi \in \Phi \subset \mathbb{R}^d\}$ on $\mathcal{X} \subset{\mathbb{R}^n}$ is called a Natural Exponential Family (NEF) if there exists continuous mappings $\Gamma: \mathbb{R}^n \rightarrow \mathbb{R}^d$, $h: \mathbb{R}^n \rightarrow \mathbb{R}$ and $K: \mathbb{R}^d \rightarrow \mathbb{R}$ such that $g_{\phi}(x)  = \text{exp}(\phi^\top \Gamma(x) - K(\phi)) h(x)$, where the parameter space $\Phi = \{ \phi \in \mathbb{R}^d : | K(\phi)| < \infty \}$, $K(\phi) = \ln \int_\mathcal{X}\text{exp}(\phi^\top \Gamma(x)) h(x) \nu(dx)$ and $\nu$ is the Lebesgue measure of $\mathcal{X}$. 
\label{def:NEF}
\end{definition}
The mean vector function
\begin{equation}
    m(\phi) = \mathbb{E}_\phi[\Gamma(x)]
    \label{eq:mean-vector}
\end{equation}
where the expectation $\mathbb{E}_\phi$ is with respect to $g_\phi$ and $\Gamma$ is the mapping in Def.~\ref{def:NEF}. It can be shown that $m(\cdot)$ is invertible.
Note that the expression of the densities can be simplified when restricted to a multivariate Gaussian distribution (with known diagonal covariance) where the natural sufficient statistics $\Gamma(x) = x$. 

We then present the CEM algorithm below to fix notations.
It follows Algorithm 2 in \citep{hu2011stochastic} but is modified to align with some notations introduced in previous sections in our paper.

\begin{algorithm}[h]
\DontPrintSemicolon
\SetNoFillComment
Choose the family of distributions $g_\phi(x), x\in \mathcal{X}$ from NEFs defined in \ref{def:NEF}
and the initial parameter $\phi_0 \in int(\Phi)$ where 
$int$ denotes the interior of the parameter space $\Phi$. \\
Specify elite ratio $\rho \in (0,1)$ and 
step size sequence $\{\alpha_k\}$ and $\{\lambda_k\}$ where $k$ denotes the time step.
Set $k=0$.
Specify $\epsilon > 0$ which is the parameter in the  thresholding function defined in \Eqref{eq:threholding}.

\begin{equation}
    \mathbbm{1}(x,\gamma) = 
    \begin{cases}
    1,& \text{if } x\geq \gamma\\
    \frac{x-\gamma+\epsilon} {\epsilon},& \text{if } \gamma - \epsilon < x < 1\\
    0 ,& \text{if } x \leq \gamma - \epsilon
    \end{cases}
    \label{eq:threholding}
\end{equation}

  \Repeat{a stopping condition is reached} 
  {
  Step 1: Draw $N_k$ i.i.d samples $\Lambda_k = \{x_1, x_2, ..., x_{N_k}\}$ from the distribution $g_{\phi_k}(x)$ \\
  Step 2: Calculate the sample $(1-\rho)$-quantile  $\hat{\gamma}_k = V_{(\lceil(1-\rho) N_k\rceil)}$ where $\lceil a \rceil$ is the ceiling function that gives the smallest integer greater than $a$ and $V_{({i})}$ is the $i$th-order statistics of the sequence $\{V(x_j)\}_{j=1}^{N_k}$ where $V(\cdot)$ is the objective function to be maximized. \\
  Step 3: Compute the new parameter $\phi_{k+1} = m^{-1}(\eta_{k+1})$, where 
  $\eta_0 = m(\phi_0) = \mathbb{E}_{\phi_0}(\Gamma(x))$
  and
  \begin{align}
\eta_{k+1} &= \alpha_k \frac{\sum_{x\in \Lambda_k} \mathbbm{1}(V(x),\hat{\gamma}_k) \Gamma(x) }{\sum_{x\in \Lambda_k} \mathbbm{1}(V(x),\hat{\gamma}_k)} + \nonumber \\
&\ (1-\alpha_k) \left(\frac{\lambda_k} {N_k} \sum_{x\in\Lambda_k} \Gamma(x) + (1-\lambda_k) \eta_k\right)
  \label{eq:param-update}
  \end{align} \\
Step 4: $k = k+1 $ %
 }
 \Return $\phi_k$
 \caption{CEM}
\label{alg:CEM}
\end{algorithm}

The convergence results will require the following assumptions from \citet{hu2011stochastic}:

\begin{assumption}
The parameter $\phi_{k+1}$ computed at step 3 of Algorithm \ref{alg:CEM} satisfies $\phi_{k+1} \in int(\Phi)$ for all $k$.
\label{ap:a2}
\end{assumption}

\begin{assumption}
The step size sequence $\{\alpha_k\}$ satisfies: $\alpha_k > 0 \ \forall\,k$ , $\lim_{k\rightarrow \infty} \alpha_k =0$ and $\sum_{k=0}^\infty \alpha_k = \infty$.
\label{ap:a3}
\end{assumption}

\begin{assumption}
The sequence $\{\lambda_k\}$ satisfies
$\lambda_k = O(k^{-\lambda})$ for some constant 
$\lambda \geq 0$ and the sample size $N_k=\Theta(k^\beta)$ where $\beta > \max\{0, 1-2\lambda\}$.
\label{assumption:sample}
\end{assumption}

\begin{assumption}
The $(1-\rho)$-quantile of $\{V(x), x \sim g_\phi(x) \}$ is unique for each $\phi \in \Phi$.
\label{ap:a4}
\end{assumption} 

We know from \cite{hu2011stochastic} that the sequence $\{\eta_k\}_{k=0}^\infty$ from \eqref{eq:param-update} asymptotically approaches the solution set of the ODE:
\begin{equation}
    \frac{d \eta(t)}{dt} = L(\eta)
    \label{eq:ODE}
\end{equation}
\begin{equation}
  L(\eta) = \nabla_\phi \ln \mathbb{E}_\phi [\mathbbm{1} (V(x), \gamma(m^{-1}(\eta)))] \, \rvert_{\phi={m^{-1}(\eta)}}
  \label{eq:L}
\end{equation}
where $\gamma(m^{-1}(\eta))$ is the true  $(1-\rho)$-quantile of $V(x)$ under $g_{m^{-1}(\eta)}$.

\begin{assumption}
The function $L(\eta)$ defined in \eqref{eq:L} has a unique integral curve for a given initial condition.
\label{ap:a5}
\end{assumption}

The above assumptions 2-6 are the assumptions required by the previous convergence result of CEM. 
To show the convergence of DecentCEM, we only require one additional mild condition in the assumption \ref{ap:a1} (note that the sample size requirement is included here only for completion since it is already part of Assumption \ref{assumption:sample}).

Now we restate the convergence result of DecentCEM from the main text and show the proof:
\convergence*
\vspace{-8pt}
\begin{proof}
Each individual CEM instance has a sample size of $\frac{N_k}{M}$ and $N_k =\Theta(k^\beta)$.
Since Assumption 1 holds, $M$ is constant and gets absorbed into the $\Theta$ and we have $\frac{N_k}{M} =\Theta(k^\beta)$. 
Hence the conditions of Theorem 3.1 in \citep{hu2011stochastic} holds for each CEM instance indexed by $i$ and can be directly applied to show the almost sure convergence of their solutions $\{\eta_{i,k}\}$ to an internally chain recurrent set of \Eqref{eq:ODE}. 
If the recurrent sets are isolated equilibrium points, then $\{\eta_{i,k}\}$ converges almost surely to a unique equilibrium point.

Due to the fact that the instances in DecentCEM run independently from each other, their solutions $\{\eta_{i,k}\}_{i=1}^M$ (or equivalently $\{\phi_{i,k}\}_{i=1}^M = \{m^{-1}(\eta_{i,k})\}_{i=1}^M$) might converge to identical or different solutions denoted as $\{\eta_i^*\}_{i=1}^M$.
DecentCEM computes the final solution by applying an $\argmax$ over all individual solutions:
$\eta_{o,k} = \argmax_{\eta \in \{\eta_{i,k}\}_{i=1}^M} \mathbb{E}_{m^{-1}(\eta)} [V(x)]$ (equivalent to \Eqref{eq:decentcem-argmax}).
Here the expectation is approximated by the sample mean with respect to the distribution $g_{m^{-1}(\eta)}$: $\frac{1}{N_k} \sum_{j=1}^{N_k} V(x_j)$, 
which converges almost surely to the true expectation according to the strong law of large numbers.
Hence we have that $\eta_{o,k}$ converges almost surely to the best solution in the set $\{\eta_i^*\}_{i=1}^M$ found by the individual CEM instances, in terms of the expected value of $\mathbb{E}_{m^{-1}(\eta)} [V(x)]$.
\end{proof}

Note that the theorem implies that the solution of CEM / DecentCEM assigns the maximum probability to a locally optimal solution to \Eqref{eq:obj}. 
It does not guarantee whether this local optimum is a global optimum or not.
To the best of our knowledge, almost sure convergence to a local optimum is the only convergence result that has been established about CEM in continuous optimization.

\section{Visualization of DecentCEM Planning}
\label{ap:visplan}

\begin{figure}[bth]
  \centering
     \includegraphics[width=0.32\linewidth]{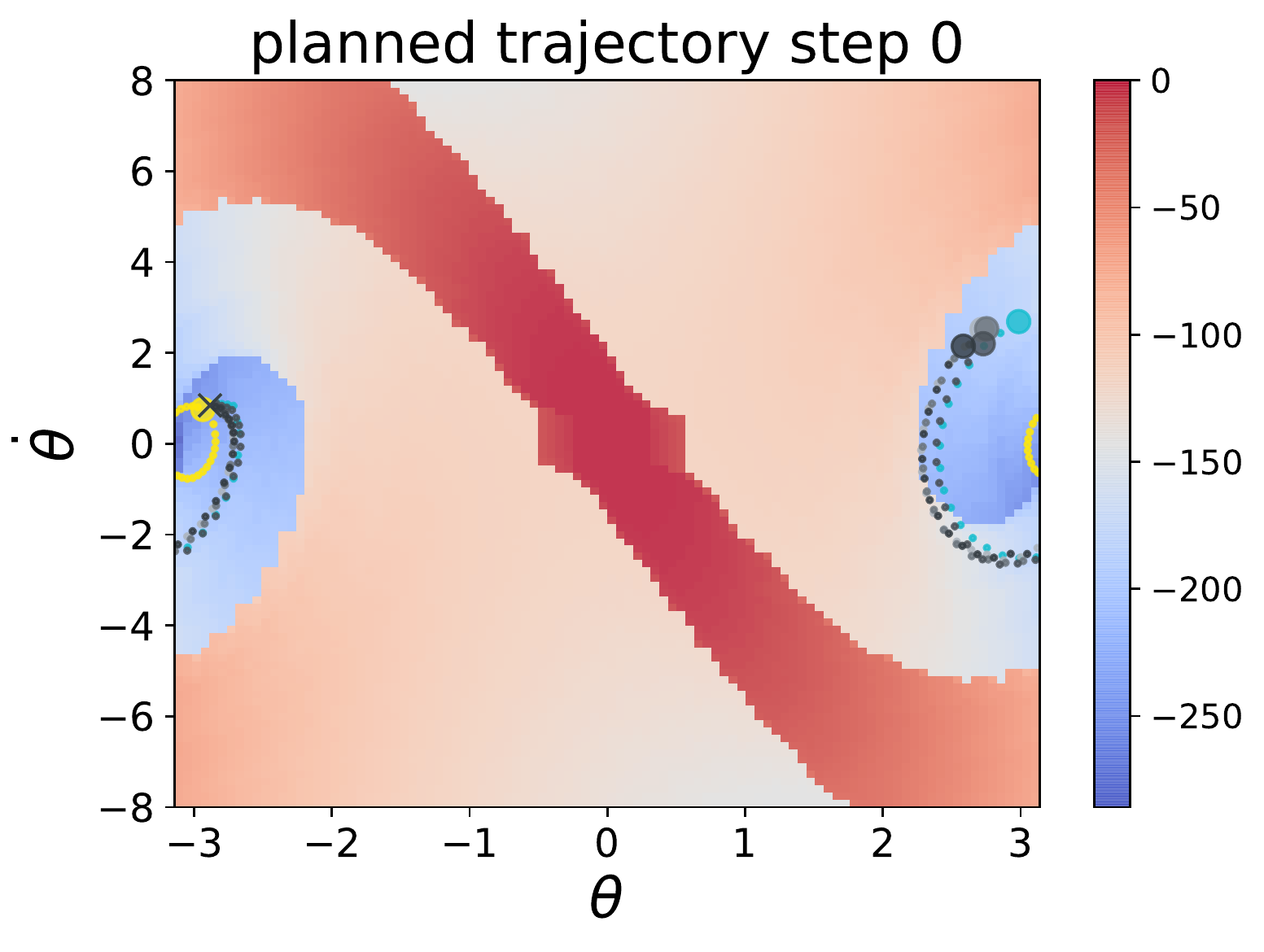}
    \includegraphics[width=0.32\linewidth]{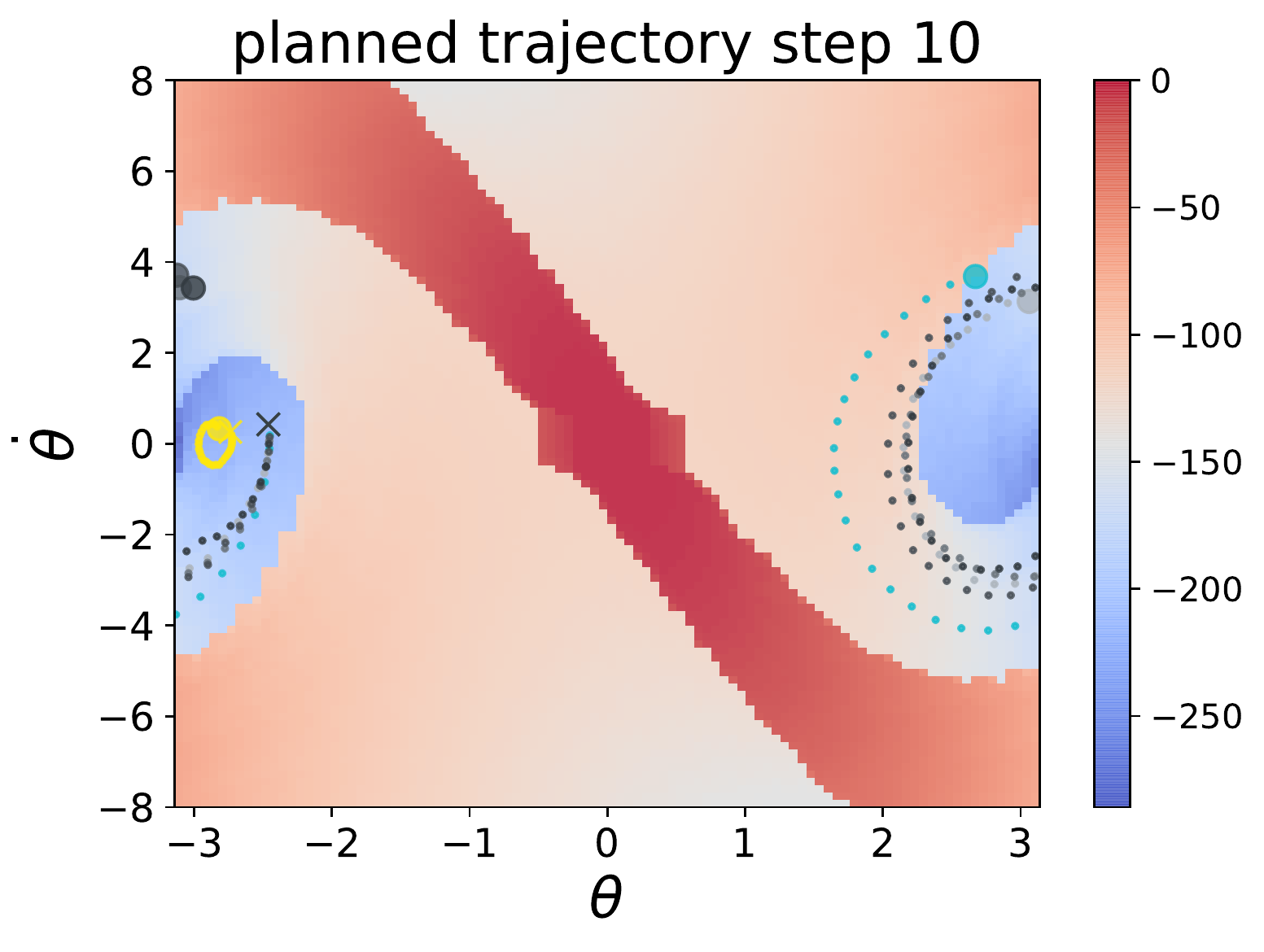}
    \includegraphics[width=0.32\linewidth]{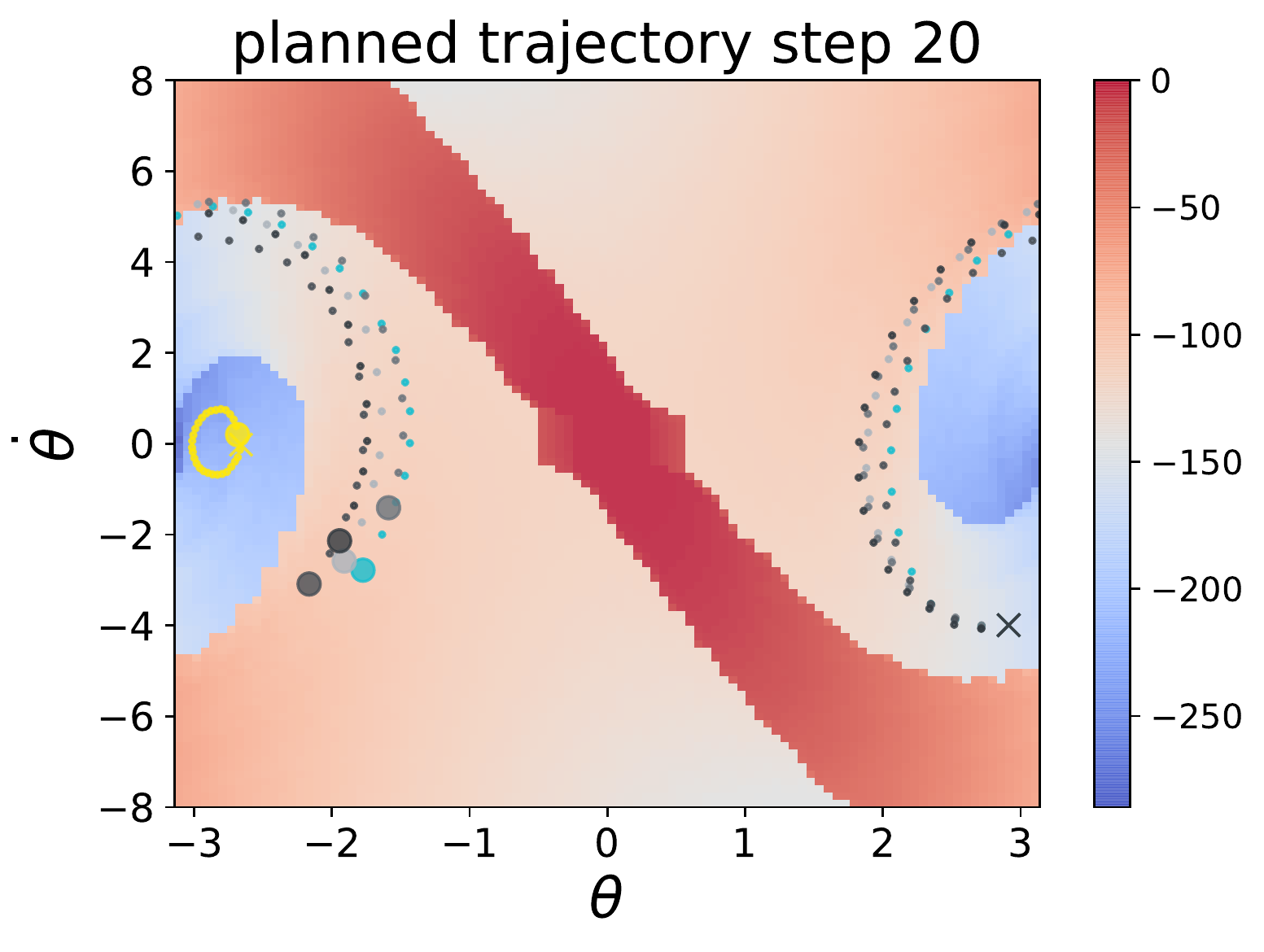} \\
         \includegraphics[width=0.33\linewidth]{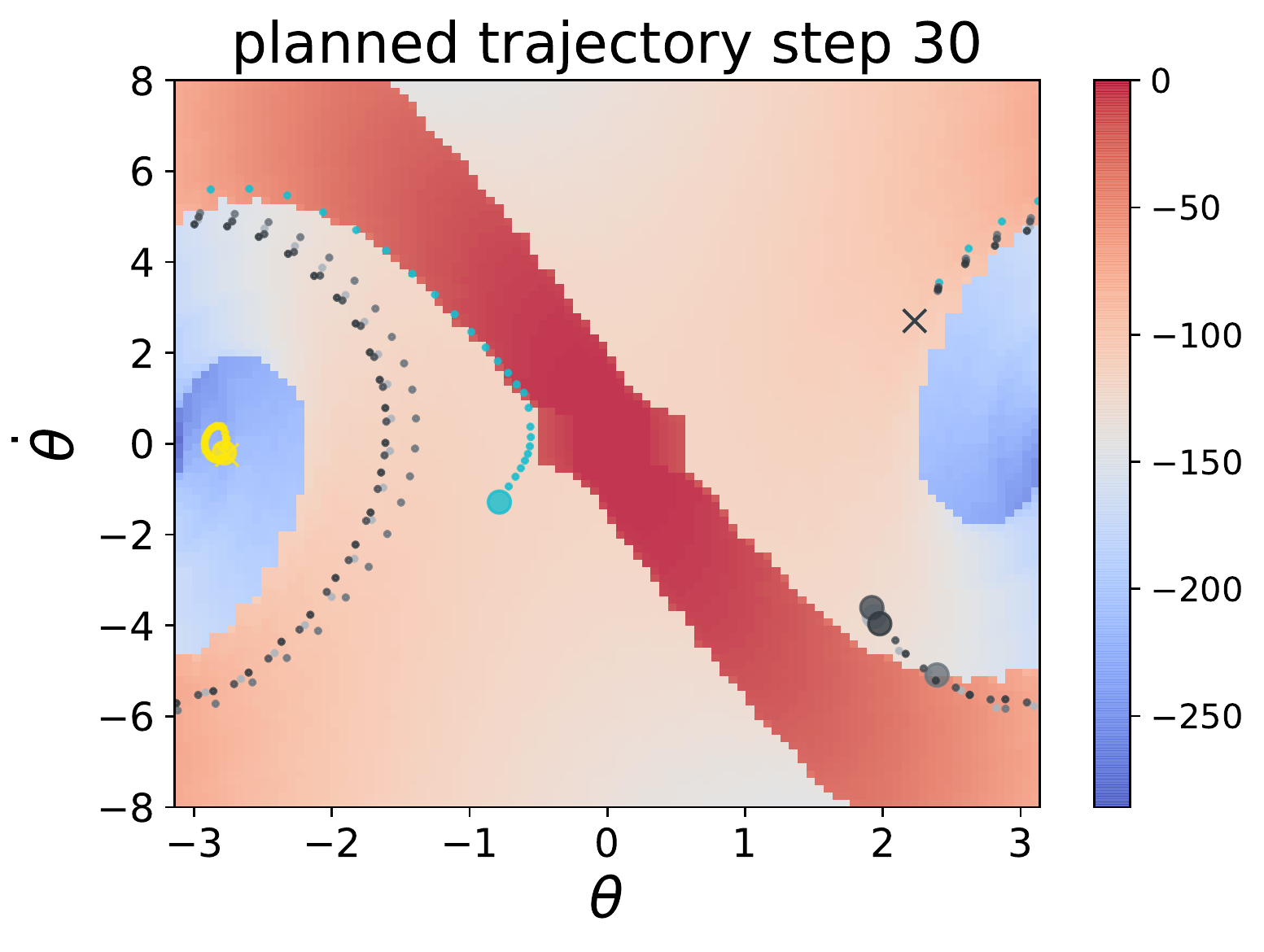}
    \includegraphics[width=0.33\linewidth]{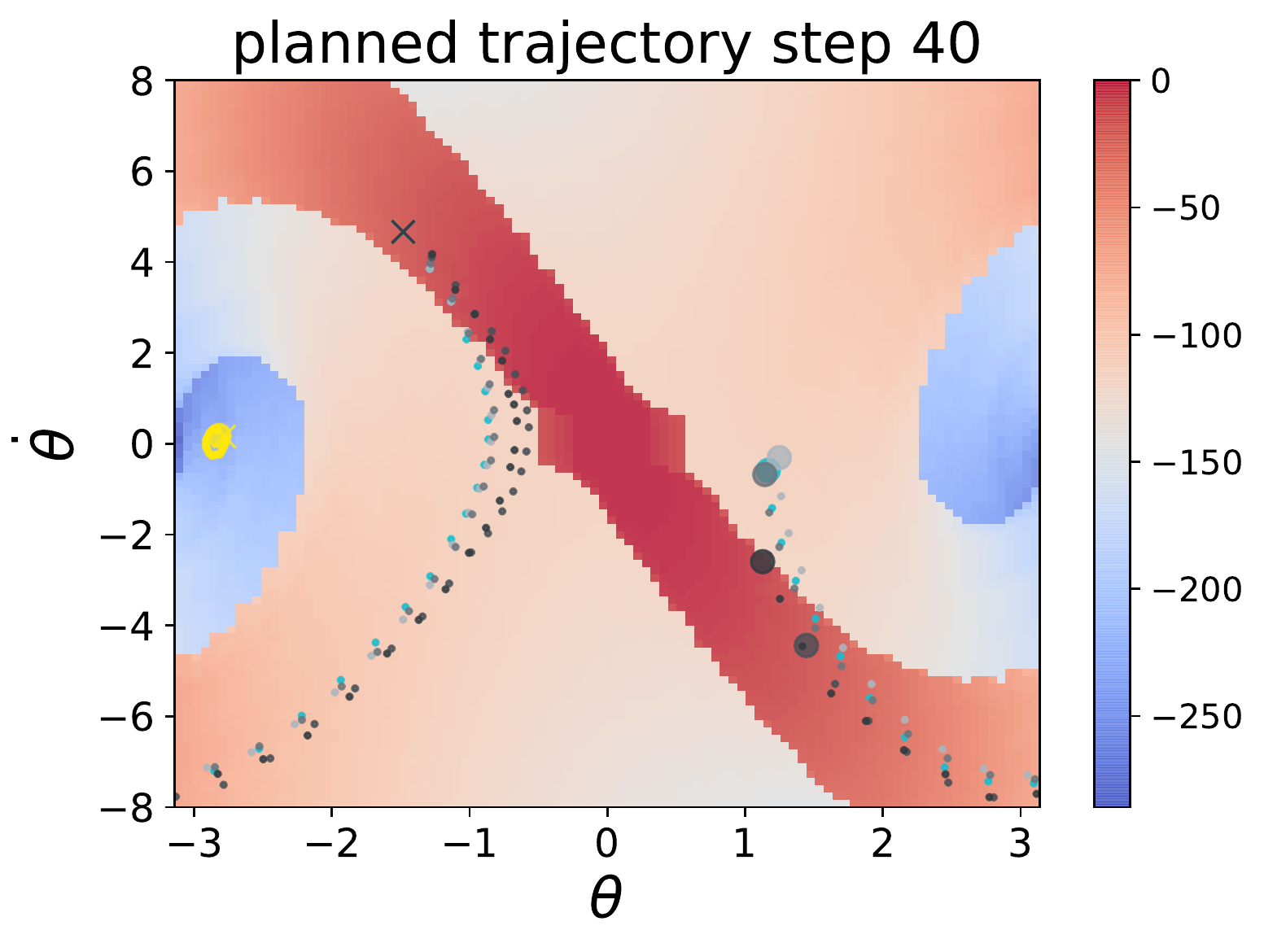}\\
    \includegraphics[width=0.5\textwidth]{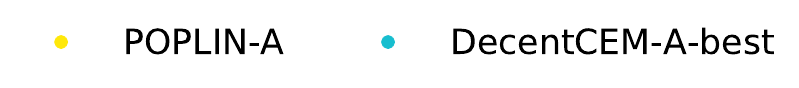}
    \caption{ Planning trajectories of DecentCEM-A (ensemble size 5) and POPLIN-A. It's overlaid on the heatmap showing the true optimal state value (red means higher value).
    Title of each plot shows the steps in the evaluation environment.
  Each planned trajectory has 30 steps, denoted by a sequence of dots. 
 The starting state is denoted by a cross \protect\markerstart    
\ and the ending (planned) state is denoted by a big solid dot \protect\markerelite\  (with different colors corresponding to different trajectories).
 The color for POPLIN-A and the best solution from DecentCEM-A is indicated in the legend in the plot and the rest of DecentCEM-A trajectories are colored with different shades of gray.
 Note that $\theta$ of $2\pi$ and $-2 \pi$ are identical in the angular position but appears as ``disconnected'' on the plots.
 }
  \label{fig:visplan}
\end{figure}

 To better understand the planning process of DecentCEM, we visualized the planning trajectories of POPLIN-A and DecentCEM-A in Fig.\ref{fig:visplan}. 
 The planned state trajectories are denoted by sequences of dots. Each plot show the planned trajectories at different steps from running both algorithms (at 2k training steps) on the same evaluation environment such that the comparison is fair.
 The heatmap shows the optimal state value 
  solved by value iteration on the discretized pendulum environment. 
  The discretization is performed by discretizing the state space and action space of the original pendulum environment into 100 and 50 intervals respectively.
The best trajectory from the multiple instances in DecentCEM-A is colored in cyan. Note that it is not ranked based on the true value, but on the imaginary value during planning. We could observe that this solution may not always be the true best solution among the trajectories due to the model inaccuracy at 2000 training steps. However, these trajectories are able to explore the space better than using a single instance as in POPLIN-A which can easily get stuck in the state regions with low values.
 
\end{document}